\def\eqref#1{equation~\ref{#1}}
\def\Eqref#1{Equation~\ref{#1}}
\def\1{\bm{1}}
\def\rvc{{\mathbf{c}}}
\def\rvo{{\mathbf{o}}}
\def\rvx{{\mathbf{x}}}
\def\rvy{{\mathbf{y}}}
\DeclareMathAlphabet{\mathsfit}{\encodingdefault}{\sfdefault}{m}{sl}
\SetMathAlphabet{\mathsfit}{bold}{\encodingdefault}{\sfdefault}{bx}{n}
\newcommand{\E}{\mathbb{E}}
\newcommand{\R}{\mathbb{R}}
\newcommand{\mask}{\mathbbm{m}}
\newcommand{\dd}{\text{d}}
\newcommand{\bsl}{\backslash}
\newcommand{\norm}[1]{\|{#1}\|} 
\newcommand{\ltwo}[1]{\norm{#1}_2} 
\newcommand{\abs}[1]{{| #1 |}}
\newcommand{\paren}[1]{{( #1 )}}
\newcommand{\brac}[1]{{[ #1 ]}}
\newcommand{\set}[1]{{\{ #1 \}}}
\newcommand{\<}{\langle}
\renewcommand{\>}{\rangle}
\newcommand{\parenBig}[1]{{\Big( #1 \Big)}}
\newcommand{\bracBig}[1]{{\Big[ #1 \Big]}}
\newcommand{\setBig}[1]{{\Big\{ #1 \Big\}}}
\renewcommand{\P}{\mathbb{P}}
\newcommand{\mc}[1]{\mathcal{#1}}
\newtheorem{theorem}{Theorem}
\newtheorem{remark}{Remark}
\newtheorem{proposition}{Proposition}
\newtheorem{assumption}{Assumption}
\crefname{theorem}{Theorem}{Theorems}
\crefname{lemma}{Lemma}{Lemmas}
\crefname{remark}{Remark}{Remarks}
\crefname{corollary}{Corollary}{Corollaries}
\crefname{observation}{Observation}{Observations}
\crefname{proposition}{Proposition}{Propositions}
\crefname{definition}{Definition}{Definitions}
\crefname{claim}{Claim}{Claims}
\crefname{fact}{Fact}{Facts}
\crefname{assumption}{Assumption}{Assumptions}
\crefname{example}{Example}{Examples}
\crefname{conjecture}{Conjecture}{Conjectures}
\title{Discrete Guidance Matching: Exact Guidance for Discrete Flow Matching} 
\author{Zhengyan Wan\textsuperscript{\textbf{*}1}\And
Yidong Ouyang\textsuperscript{\textbf{*}2} \And
Liyan~Xie\textsuperscript{3} \And
Fang~Fang\textsuperscript{\rm \dag 1} \And
Hongyuan~Zha\textsuperscript{\rm \dag 4} \And
Guang~Cheng\textsuperscript{2} \vspace{5pt}\\ 
\textbf{\textsuperscript{*} Equal contribution} \quad \textbf{\textsuperscript{\dag} Corresponding author} \\
\textsuperscript{1} School of Statistics, East China Normal University \\
\textsuperscript{2} Department of Statistics, University of California, Los Angeles \\
\textsuperscript{3} Department of Industrial and Systems Engineering, University of Minnesota \\
\textsuperscript{4} School of Data Science, The Chinese University of Hong Kong, Shenzhen \\
}
\begin{document}

\maketitle


\begin{abstract}
Guidance provides a simple and effective framework for posterior sampling by steering the generation process towards the desired distribution. When modeling discrete data, existing approaches mostly focus on guidance with the first-order approximation to improve the sampling efficiency. However, such an approximation is inappropriate in discrete state spaces since the approximation error could be large. A novel guidance framework for discrete data is proposed to address this problem: we derive the exact transition rate for the desired distribution given a learned discrete flow matching model, leading to guidance that only requires a single forward pass in each sampling step, significantly improving efficiency. This unified novel framework is general enough, encompassing existing guidance methods as special cases, and it can also be seamlessly applied to the masked diffusion model. We demonstrate the effectiveness of our proposed guidance on energy-guided simulations and preference alignment on text-to-image generation and multimodal understanding tasks. The code is available at \url{https://github.com/WanZhengyan/Discrete-Guidance-Matching}.


\end{abstract}
\section{Introduction}
Discrete diffusion models \citep{austin2021structured,campbell2022continuous,sun2022score,lou2023discrete} and discrete flow-based models \citep{campbell2024generative,gat2024discrete,shaul2024flow,yimingdefog} have received significant attention for generating samples in discrete state spaces, providing effective alternatives to autoregressive (AR) models. Additionally, several works aim to study how to guide a pre-trained discrete model towards a desired distribution \citep{vignac2022digress,schiff2024simple,nisonoff2024unlocking}. 
However, there are some challenges to developing guidance mechanisms in discrete cases. The main challenge is that discrete guidance is often associated with a transition probability or transition rate, which requires extra computation for all possible target positions after the transition. To reduce the number of forward passes through guidance models, existing methods treat the model as a continuous function and use the first-order approximation to compute guidance efficiently \citep{vignac2022digress,schiff2024simple,nisonoff2024unlocking}. Unfortunately, this approximation might introduce non-negligible approximation errors. Moreover, existing literature only consider some particular cases, like class conditional guidance or energy-weighted guidance, which might not be general enough for various tasks.



We seek to develop a general guidance framework for discrete diffusion and flow-based models, which is illustrated in \cref{fig:sampling_framework}. Leveraging the Continuous-Time Markov Chain (CTMC) framework, given a pretrained model for sampling from a source distribution and the density ratio between the target distribution and source distribution, our method identifies the exact transition rate for sampling the target distribution. To the best of our knowledge, our guidance framework is the {\it first} exact discrete guidance in general form. Since our guidance formulation is expressed by the density ratio between the target distribution and source distribution, it also encompasses existing guidance methods as special cases. This is because the density ratio can be calculated by the energy function in energy-guided sampling, the ratio of the classifier in class-conditional generation, and preference alignment in reinforcement learning from human feedback (RLHF). In Table \ref{table:comparison of guidance}, we compare the formulation of the guidance and the number of function evaluations in each sampling step. Our framework only requires one forward pass in each sampling step for any initial distribution. 

To compute the proposed guidance, we employ the Bregman divergence to train a network for estimating the conditional expectation. To further utilize potentially available samples from the target distribution, we further propose a regularization term.



\begin{figure}[t]
    \centering    \includegraphics[width=\linewidth]{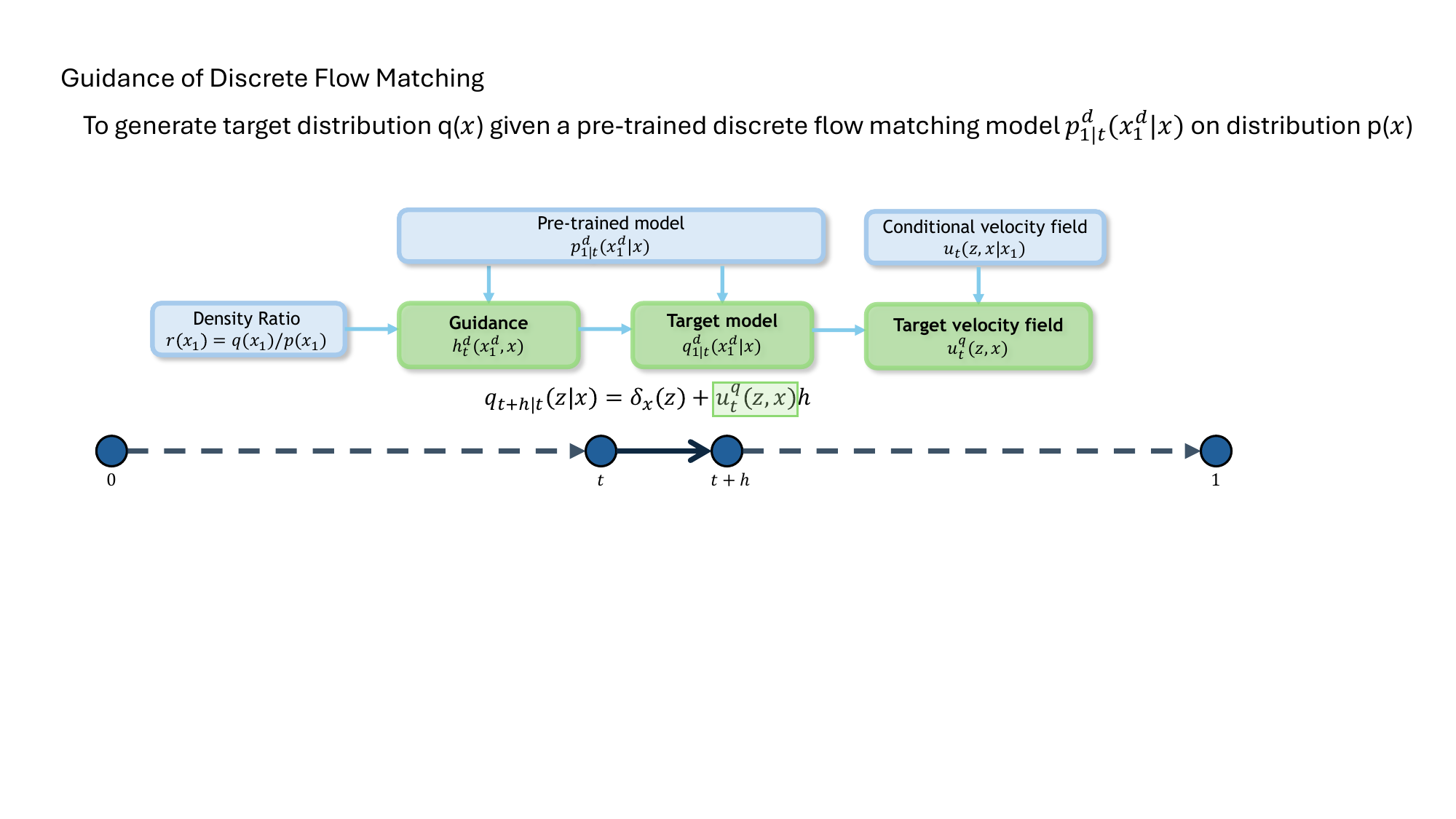}
    \vspace{-0.25in}
    \caption{{\small Framework of the proposed guidance on discrete flow matching. Given a pretrained discrete flow matching on source distribution $p_1(x)$ and a known density ratio $r(x)=q_1(x)/p_1(x)$, our framework calculates the target velocity field to generate target distribution $q_1(x)$. Our framework is general enough to deal with energy-guided sampling with $p_1^{(\gamma)}(x)\propto p_1(x)e^{-\gamma \mathcal{E}(x)}$ shown in \cref{sec:simulation}, classifier guidance shown in \cref{thm:guidance}, and preference alignment in \cref{exp:multmodal}.}}
    \label{fig:sampling_framework}
\end{figure}

In summary, our main contributions are as follows.
\begin{enumerate}
    \item We introduce a general guidance framework for discrete flow matching, achieving efficient sampling without approximation. 
    \item We propose to learn the guidance network by minimizing  Bregman divergence. We further propose a regularization technique to utilize samples from the target distribution.
    \item We verify the effectiveness of the proposed framework on energy-based sampling through simulations, preference alignment on text-to-image, and multimodal understanding benchmark. 
\end{enumerate}
\begin{table}[t]
    \centering
     \resizebox{0.99\textwidth}{!}{%
    \begin{threeparttable}
    \begin{small}
    \caption{\small {Overall comparison with existing discrete guidance. Our posterior-based guidance provides an exact formulation of guidance for general target distributions with a sampling-efficient implementation. Existing methods are either limited to class-conditional generation or suffer from non-negligible approximation errors with multiple function evaluations in each sampling step. A more detailed discussion is provided in \cref{sec:discrete_guidance}.}}\label{table:comparison of guidance}
    \begin{tabular}{llcc}
    \toprule
    Guidance Method & Formula & Exact Guidance & 
    \# of function evaluations\\
    \midrule
    \vspace{0.03in}
    Posterior-Based (Ours) & 
    \Eqref{eq:guidance} & \color{green}{\ding{51}} & 1\\
    \vspace{0.03in}
    Rate-Based \cite{nisonoff2024unlocking} & 
    \Eqref{eq:diffusion classifier guidance}& \color{green}{\ding{51}} & 
    
      $\mc{D}+1$ ; $\mc{D}\times(\abs{\mc{S}}-1)+1$
  \\
\makecell[l]{First-Order Approximated \cite{nisonoff2024unlocking} \\
\cite{vignac2022digress, schiff2024simple}}
& \Eqref{eq:first-order approximation}
& \color{red}{\ding{55}}
& 2 \\
    \bottomrule
    \end{tabular}
        \begin{tablenotes}
\item \textit{Note:}  In \cite{nisonoff2024unlocking}, guidance requires 
$\mc{D}\times(\abs{\mc{S}}-1)+1$ function evaluations. 
When implemented using Algorithm~\ref{alg:sampling with rate-based guidance}, 
the number of function evaluations reduces to $\mc{D}+1$.
\end{tablenotes}
\end{small}
\end{threeparttable}
}

\end{table}

\textsc{Notation.} Let $\brac{N}=\set{1,2,\dots,N}$ for a positive integer $N$. We use $\dot{\kappa}_t$ to denote the time derivative of a function $\kappa_t$ with respect to $t$. For a $\mc{D}$-dimensional vector $z$, let $z^d$ and $z^{\bsl d}$ denote the $d$-th element of the vector $z$ and the $(\mc{D}-1)$-dimensional vector $(z^1,\dots,z^{d-1},z^{d+1},\dots,z^\mc{D})^\top$. For a positive integer $p$, let $\mathbbm{1}_p$ denote $p$-dimensional all-one vector. We write $f(h)=o(h)$ if $f:\R\to\R$ satisfies $f(h)/h\to 0$ as $h\to 0^+$. For two quantities $x,z$, define the Kronecker delta $\delta_x(z)$ satisfying $\delta_x(z)=1$ if $x=z$ and $\delta_x(z)=0$ if $x\neq z$. Let $\<x,z\>$ be the inner product of two vectors $x,z$ in Euclidean space. We denote $\ltwo{x}$ as the Euclidean norm of a vector $x$.

\section{Discrete Flow Models}
Discrete flow models (DFMs) aim to generate samples from a $\mc{D}$-dimensional discrete state-space with probability mass function (pmf) $q_1$, defined on the domain $\mc{S}^{\mc{D}}$, where $\mc{S}$ denotes the finite state space of each dimension. A natural example of this setting is text generation, where each coordinate corresponds to a token drawn from the vocabulary $\mc{S}$. 
To begin with, we first introduce some background on discrete flow models \citep{campbell2024generative,gat2024discrete,shaul2024flow} under the framework of CTMC \citep{norris1998markov}, which we describe in detail in \cref{sec:background CTMC}.

\subsection{Transition Rate and Transition Probability}
Given a CTMC with the marginal pmfs $\set{q_t}_{t\in[0,1]}$, let $u_t^q$ be the associated transition rate of this CTMC at time $t$, which is a $|\mc{S}|^{\mc{D}}\times |\mc{S}|^{\mc{D}}$ matrix. 
For $z,x\in\mc{S}^\mc{D}$ with $z\neq x$, we write $u_t^q(z,x)$ for the corresponding entry in this matrix representing the intensity of the transition from state $x$ to state $z$ at time $t$. 
If $u_t^q(z,x)$ satisfy the following rate-properties:
\begin{align*}
    u_t^q(z,x)\ge 0, \text{ for any }z\neq x, \text{ and }~  \sum_{z\in\mc{S}^\mc{D}}u_t^q(z,x)=0,
\end{align*}
and the Kolmogorov forward equation (also known as the continuity equation):
\begin{align}\label{eq:Kolmogorov equation}    \dot{q}_t(x)=\sum_{z\in\mc{S}^\mc{D}}u_t^q(x,z)q_t(z)=\underbrace{\sum_{z\neq x}u_t^q(x,z)q_t(z)}_{\text{incoming flux}}-\underbrace{\sum_{z\neq x}u_t^q(z,x)q_t(x)}_{\text{outgoing flux}},
\end{align}
we say $u_t^q$ can generate the probability path $q_t$. Based on the above Kolmogorov forward equation, for a sufficiently small time step $h$, we have
\begin{align*}
    q_{t+h}(x)=q_t(x)+\dot{q}_t(x)h+o(h)=\sum_{z\in\mc{S}^\mc{D}}\setBig{\delta_{z}(x)+u_t^q(x,z)h+o(h)}q_t(z),
\end{align*}
which implies that the transition probability from state $z$ to state $x$, for any $x,z \in \mc{S}$, is 
\begin{align}\label{eq:transition probability}
    q_{t+h|t}(x|z)=\delta_{z}(x)+u_t^q(x,z)h+o(h).
\end{align}
Thus, the transition rate $u_t^q$ is also the generator of the CTMC $\set{q_t}_{t\in[0,1]}$ \citep{holderrieth2024generator}.

\subsection{Modeling Transition Rate via Marginalization Trick}
Given the transition rate $u_t^q(z,x)$ and current state $x$ at time $t$, we can generate a sample from $q_{t+h}$ using the \eqref{eq:transition probability}, with a sufficiently small time step $h$. Therefore, we can learn a transition rate $u_t^q$ that can transport from an initial noise distribution $q_0$ to a target data distribution $q_1$. To obtain such a transition rate for sampling, a natural method is to learn the conditional expectation of the conditional transition rate $u^q_t(z,x|\rvx_1)$ that generates the conditional probability path $q_{t|1}(\cdot|\rvx_1)$ interpolating from noise to the datapoint $\rvx_1$. Then $u^q_t(z,x)=\E_{q_{1|t}(\rvx_1|x)}\brac{u^q_t(z,x|\rvx_1)}$ can generate the target probability path $q_t$ \citep[see Proposition 3.1 of][]{campbell2024generative}, i.e., it satisfies the Kolmogorov forward \eqref{eq:Kolmogorov equation}. For completeness, we include the proof in Appendix \ref{appendix:prop}. We are free to define the conditional probability path $q_{t|1}(\cdot|\rvx_1)$ and with the corresponding conditional transition rate as needed.

It is worth noting that learning a $\abs{\mc{S}}^\mc{D}$-dimensional vector-valued function $\paren{u_t(z,x)}_{z\in\mc{\mc{S}}^\mc{D}}$ of current time $t$ and state $x$ is intractable when $\mc{D}$ is relatively large. To handle such a high-dimensional setting, a common approach is to construct a coordinate-wise conditional probability 
path and transition rate, 
\begin{align}\label{eq:coordinate-wise conditional path}
   q_{t|1}(x|\rvx_1)=\prod_{d=1}^{\mc{D}}q^d_{t|1}(x^d|x_1^d),\text{ and }u_t^{q}(z,x|\rvx_1)=\sum_{d=1}^\mc{D}\delta_{x^{\bsl d}}(z^{\bsl d})u_t^{q,d}(z^d,x^d|x_1^d),
\end{align}
which means that the elements of the vector $\rvx_t$ are independent conditional on $\rvx_1$. Here, $u_t^{q,d}(z^d,x^d|x_1^d)$ is the conditional transition rate that generates the conditional probability path $q_{t|1}^d$. A popular choice of probability path and the associated conditional transition rate used in the previous works \citep{campbell2024generative,gat2024discrete} is
\begin{align}\label{eq:mixture path}
    q_{t|1}^d(x^d|x_1^d)=(1-\kappa_t)q_0^d(x^d)+\kappa_t\delta_{x_1^d}(x^d) ;~ u_t^{q,d}(z^d,x^d|x_1^d)=\frac{\dot{\kappa}_t}{1-\kappa_t}(\delta_{x_1^d}(z^d)-\delta_{x^d}(z^d)),
\end{align}
where $\kappa_t:[0,1]\to[0,1]$ is a non-decreasing function satisfying $ \kappa_0=0$ and $\kappa_1=1$.

After defining the conditional path and rate, the marginal transition rate is given by
\begin{align*}    u_t^q(z,x)=\sum_{\rvx_1}u_t^q(z,x|\rvx_1)q_{1|t}(\rvx_1|x)=&~\sum_{d=1}^\mc{D}\delta_{x^{\bsl d}}(z^{\bsl d})\sum_{x_1^d}u_t^{q,d}(z^d,x^d|x_1^d)q^d_{1|t}(x_1^d|x)\\
    \overset{\triangle}{=}&~\sum_{d=1}^\mc{D}\delta_{x^{\bsl d}}(z^{\bsl d})u_t^{q,d}(z^d,x),
\end{align*}
where $q_{1|t}^d(x_1^d|x)=\sum_{x^{\bsl d}_1}q_{1|t}(x_1|x)$ is the posterior. Therefore, we only need to learn a $\mc{D}\times\abs{\mc{S}}$-matrix-valued function $U_t^q(x)=(u_t^{q,d}(s,x))_{d,s}$. Then we can update each token in parallel during sampling.

\noindent {\bf Training Objective and Sampling Algorithm.} 
Given the conditional transition rate, the unconditional transition rate can be parameterized by learning the posterior $q_{1|t}^d(x_1^d|x)$. Let $\mc{U}([0,1])$ be the uniform distribution on $[0,1]$. A simple learning objective 
for discrete flow models is given by
\begin{align}\label{eq:training via cross-entropy}
    \mc{L}_q=\E_{t\sim\mc{U}([0,1]),\rvx_1\sim q_1(\rvx_1),\rvx_t\sim q_{t|1}(\rvx_t|\rvx_1)}\bracBig
    {-\sum_{d=1}^\mc{D}\log q_{1|t}^{d,\theta}(\rvx_1^d|\rvx_t)},
\end{align}
which is the cross-entropy between the true posterior $q_{1|t}$ and the estimated posterior $q^\theta_{1|t}$ \citep{campbell2024generative,gat2024discrete,wang2025fudoki}.

Additionally, there are some alternative approaches to training the transition rate, including directly maximizing the ELBO in terms of the unconditional transition rate \citep{shaul2024flow} and minimizing a Bregman divergence-type conditional generator matching loss \citep{holderrieth2024generator}. It turns out that parameterization through the posterior gives us a unified framework to formulate the discrete guidance not only for discrete flow-based models, but also for other generative models. We include more detailed discussions in \cref{discuss:unifiying guidance}. Moreover, details of the sampling implementation for \eqref{eq:transition probability} in \cref{alg:sampling without guidance} are provided in the \cref{app:sampling}.
\section{Discrete Flow Guidance}
\label{sec:discrete_guidance}
In this section, we introduce our general guidance framework for modeling discrete data. 
It  can be applied to both discrete flow models and discrete diffusion models \citep{campbell2022continuous,sun2022score,austin2021structured,lou2023discrete,shi2024simplified,sahoo2024simple}. 
More detailed discussions can be found in \cref{ap:indepth}.

\subsection{Problem Formulation}
Consider a CTMC with a probability path $p_t$ associated with a transition rate $u^p_t$. Suppose that the conditional transition rate and the conditional probability path are prespecified (see \eqref{eq:mixture path} for an example), and we are given a pre-trained posterior $p_{1|t}$ such that the unconditional transition rate $u^p_t(z,x)=\E_{\rvx_1\sim p_{1|t}(\rvx_1|x)}\brac{u_t^p(z,x|\rvx_1)}$ can generate the probability path $p_t$. We will call the distribution with the pmf $p_1$ the source distribution. Our goal is to generate samples from the target distribution $q_1$ by rectifying the posterior or the unconditional transition rate corresponding to the probability path $p_t$.

\subsection{Exact Discrete Guidance}
To begin with, we first impose the following assumption on the target distribution.
\begin{assumption}\label{con:support}
    The target distribution $q_1$ is absolutely continuous with respect to the source distribution $p_1$, i.e., the support of the target probability mass function is a subset of that of the source probability mass function: $\set{x\in\mc{S}^\mc{D}|~q_1(x)>0}\subseteq\set{x\in\mc{S}^\mc{D}|~p_1(x)>0}$.
\end{assumption}
This assumption admits a well-defined density ratio $r(x)=q_1(x)/p_1(x)$ on the support of $p_1$, which is crucial for our formulation of discrete guidance. In the special case setting of conditional generation, the target distribution is the conditional distribution $q_1(x)=p_1(x|y)$ where $y$ denotes the conditioning variable, and the Assumption \ref{con:support} holds by construction.

Now, assume that the density ratio $r(x)=q_1(x)/p_1(x)$ is known. The following theorem yields the rectified posterior \(q_{1| t}\) by reweighting the source posterior \(p_{1| t}\) with a guidance term.

\begin{theorem}[Posterior-based guidance]\label{thm:guidance}
    Suppose that the conditional probability path of the source distribution $p_{t|1}$ and that of the target distribution $q_{t|1}$ are chosen to be the same for any $t\in[0,1]$. Under \cref{con:support}, the posterior regarding the target distribution has the following form:    \begin{align}\label{eq:guidance}
        q_{1|t}(z^d|x)=\frac{\E_{\rvx^{\bsl d}_1\sim p(\rvx^{\bsl d}_1|\rvx^d_1=z^d,\rvx_t=x)}\brac{r(\rvx_1)}}{\E_{\rvx_1\sim p_{1|t}(\rvx_1|x)}\brac{r(\rvx_1)}}p_{1|t}(z^d|x).
    \end{align}
    In particular, if $q_1(x)=p_1(x|y)$, we have
    \begin{align}\label{eq:classifier guidance}
        q_{1|t}(z^d|x)=p_{1|t}(z^d|x,y)=\frac{p(y|\rvx_1^d=z^d,\rvx_t=x)}{p(y|\rvx_t=x)}p_{1|t}(z^d|x).
    \end{align}
\end{theorem}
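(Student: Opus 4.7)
The plan is to derive \eqref{eq:guidance} directly from Bayes' rule, using the matching-path hypothesis $q_{t|1}(\cdot\mid x_1)=p_{t|1}(\cdot\mid x_1)$ to convert target-side conditionals into source-side ones weighted by the density ratio $r(x_1)=q_1(x_1)/p_1(x_1)$, which is well defined on the support of $p_1$ by \cref{con:support}. First I would write, for any full $x_1\in\mathcal{S}^{\mathcal{D}}$,
\[
q_{1|t}(x_1\mid x) \;=\; \frac{q_{t|1}(x\mid x_1)\,q_1(x_1)}{q_t(x)} \;=\; \frac{p_{t|1}(x\mid x_1)\,r(x_1)\,p_1(x_1)}{q_t(x)}.
\]
Applying Bayes' rule on the source side, $p_{t|1}(x\mid x_1)=p_{1|t}(x_1\mid x)\,p_t(x)/p_1(x_1)$, the factor $p_1(x_1)$ cancels, yielding the key identity
\[
q_{1|t}(x_1\mid x) \;=\; p_{1|t}(x_1\mid x)\,r(x_1)\,\frac{p_t(x)}{q_t(x)}.
\]

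Second, summing this identity over $x_1$ and using $\sum_{x_1}q_{1|t}(x_1\mid x)=1$ pins down the normalization
\[
\frac{q_t(x)}{p_t(x)} \;=\; \E_{\rvx_1 \sim p_{1|t}(\rvx_1\mid x)}\brac{r(\rvx_1)},
\]
which is exactly the denominator of \eqref{eq:guidance}. To obtain the numerator, I would marginalize the same key identity over $x_1^{\bsl d}$ with $x_1^d=z^d$ held fixed. Factoring the source posterior as $p_{1|t}(x_1\mid x)=p_{1|t}(z^d\mid x)\,p(x_1^{\bsl d}\mid \rvx_1^d=z^d,\,\rvx_t=x)$ pulls $p_{1|t}(z^d\mid x)$ out of the sum, leaving the conditional expectation $\E_{\rvx^{\bsl d}_1 \sim p(\rvx^{\bsl d}_1\mid \rvx^d_1=z^d,\,\rvx_t=x)}\brac{r(\rvx_1)}$. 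Combining numerator and denominator gives \eqref{eq:guidance}.

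For the conditional-generation special case $q_1(x)=p_1(x\mid y)$, I would substitute $r(x)=p(y\mid x)/p(y)$ and invoke the standard conditional independence $y\perp \rvx_t\mid \rvx_1$ (the label depends only on the clean datum, not on the noising trajectory). This turns $\E_{\rvx_1 \sim p_{1|t}(\rvx_1\mid x)}\brac{r(\rvx_1)}$ into $p(y\mid \rvx_t=x)/p(y)$, and the analogous expectation with $\rvx_1^d=z^d$ added to the conditioning event becomes $p(y\mid \rvx_1^d=z^d,\,\rvx_t=x)/p(y)$; the $p(y)$ factors cancel and \eqref{eq:classifier guidance} follows. The argument is essentially bookkeeping with Bayes' rule; the one conceptual point worth emphasizing is that the matching-path hypothesis is precisely what lets the density-ratio factor $r$ travel from $q_1$ all the way to $q_{1|t}$ without disturbing the forward kernel — this is what makes the guidance exact rather than approximate.
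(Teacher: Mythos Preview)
Your proposal is correct and follows essentially the same route as the paper: Bayes' rule plus the matching-path hypothesis gives $q_{1|t}(x_1\mid x)=r(x_1)\,\frac{p_t(x)}{q_t(x)}\,p_{1|t}(x_1\mid x)$, summing identifies the denominator as $\E_{p_{1|t}}[r(\rvx_1)]$, and marginalizing over $x_1^{\bsl d}$ after factoring the source posterior yields \eqref{eq:guidance}; for \eqref{eq:classifier guidance} both you and the paper use the conditional independence $y\perp\rvx_t\mid\rvx_1$ (which the paper spells out explicitly as $p(y\mid\rvx_1=z)=p(y\mid\rvx_1=z,\rvx_t=x)$).
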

Here, we only assume that the conditional probability paths are the same. \cref{thm:guidance} provides a general guidance framework for both discrete diffusion and flow models, because both of them can be modeled via parameterizing posterior densities. In discrete diffusion models, if we further assume that the transition rates of both forward noising processes are equal, then we can generalize and recover the predictor guidance in \cite{nisonoff2024unlocking}, which we state in the following.

\begin{theorem}[Rate-based guidance]\label{thm:diffusion guidance}
Suppose that the transition rates of the forward noising processes $Q^p_t(x,z)$ and $Q^q_t(x,z)$ are equal for any $t\in[0,1]$. Under \cref{con:support}, the backward transition rate $u^q_t(z,x)$ of probability $q_t$ has the following form:
\begin{align}\label{eq:diffusion guidance}    u_t^q(z,x)=\frac{\E_{\rvx_1\sim p_{1|t}(\rvx_1|z)}\brac{r(\rvx_1)}}{\E_{\rvx_1\sim p_{1|t}(\rvx_1|x)}\brac{r(\rvx_1)}}u_t^p(z,x),
\end{align}
where $u_t^q(z,x)$ and $u_t^p(z,x)$ is the time reversal of $Q^p_t(x,z)$ and $Q^q_t(x,z)$, respectively.

Furthermore, if $q_1(x)=p_1(x|y)$, then we have
\begin{align}\label{eq:diffusion classifier guidance}
    u_t^p(z,x|y)=\frac{p(y|\rvx_t=z)}{p(y|\rvx_t=x)}u_t^p(z,x),
\end{align}
which recovers Equation 2 in \cite{nisonoff2024unlocking}.
\end{theorem}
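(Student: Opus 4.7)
The plan is to prove the rate identity by combining the standard time-reversal formula for CTMCs with a direct Bayes-style computation of the density ratio $q_t/p_t$. The hypothesis that $Q^p_t \equiv Q^q_t$ pins down, for each $\rvx_1$, the same conditional forward process, hence the same conditional marginals $p_{t|1}(\cdot|\rvx_1) = q_{t|1}(\cdot|\rvx_1)$; this is what lets the unknown $q_t$ be re-expressed through $p_{1|t}$ and the density ratio $r$.

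First I would recall the time-reversal of a CTMC: for a forward rate $Q_t$ with marginal $p_t$, the backward generator is $u_t(z,x) = Q_t(z,x)\, p_t(z)/p_t(x)$ for $z \neq x$. Applying this to both $p$ and $q$ and using $Q^p_t = Q^q_t$, I obtain
\begin{align*}
\frac{u_t^q(z,x)}{u_t^p(z,x)} = \frac{q_t(z)/q_t(x)}{p_t(z)/p_t(x)} = \frac{q_t(z)/p_t(z)}{q_t(x)/p_t(x)}.
\end{align*}
So the task reduces to identifying the pointwise density ratio $q_t(\cdot)/p_t(\cdot)$ at any state.

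The second step computes this ratio. Using $q_{t|1} = p_{t|1}$ and marginalizing over $\rvx_1$,
\begin{align*}
q_t(z) = \sum_{\rvx_1} q_{t|1}(z|\rvx_1)\, q_1(\rvx_1) = \sum_{\rvx_1} p_{t|1}(z|\rvx_1)\, r(\rvx_1)\, p_1(\rvx_1).
\end{align*}
Dividing by $p_t(z)$ and applying Bayes' rule $p_{t|1}(z|\rvx_1)\, p_1(\rvx_1)/p_t(z) = p_{1|t}(\rvx_1|z)$ gives
\begin{align*}
\frac{q_t(z)}{p_t(z)} = \sum_{\rvx_1} r(\rvx_1)\, p_{1|t}(\rvx_1|z) = \E_{\rvx_1 \sim p_{1|t}(\rvx_1|z)}[r(\rvx_1)].
\end{align*}
Substituting this into the ratio expression above yields \eqref{eq:diffusion guidance}. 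Assumption \ref{con:support} is used here to ensure $r$ is well defined on the support of $p_1$, which in turn propagates to $p_t$ having support containing that of $q_t$ so no division-by-zero occurs.

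For the conditional specialization, when $q_1(x) = p_1(x|y)$ the density ratio becomes $r(\rvx_1) = p(y|\rvx_1)/p(y)$. Then, using the Markov property $y \perp \rvx_t \mid \rvx_1$ (which holds because $y$ is determined by $\rvx_1$ in the classifier setting and the forward noising is independent of $y$ given $\rvx_1$),
\begin{align*}
\E_{\rvx_1 \sim p_{1|t}(\rvx_1|x)}[r(\rvx_1)] = \frac{1}{p(y)} \sum_{\rvx_1} p(y|\rvx_1,\rvx_t=x)\, p_{1|t}(\rvx_1|x) = \frac{p(y|\rvx_t=x)}{p(y)},
\end{align*}
and similarly at $z$. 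The $1/p(y)$ factors cancel in the ratio, giving \eqref{eq:diffusion classifier guidance} and recovering the predictor guidance of \cite{nisonoff2024unlocking}.

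The main obstacle I anticipate is not the algebra but cleanly justifying that $q_{t|1} = p_{t|1}$ from the equality of the forward rates (this relies on the diffusion convention that both forward processes start from their respective data distribution at $t=1$ and evolve by the same generator, so conditional on $\rvx_1$ they are identical in law) and making precise the time-reversal formula in the discrete setting, including the $z \neq x$ case and the preservation of the rate-row-sum-zero property. Once these are laid down, the rest is a two-line Bayes computation.
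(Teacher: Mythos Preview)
Your argument is correct. The paper's primary proof (\S\ref{ap:proof_thm2}) takes a slightly different route: instead of invoking the time-reversal rate formula directly, it works at the level of transition \emph{probabilities}, using Bayes with the assumption $q_{t|t+h}=p_{t|t+h}$ to write
\[
q_{t+h|t}(z|x)=\frac{q_{t+h}(z)\,p_t(x)}{q_t(x)\,p_{t+h}(z)}\,p_{t+h|t}(z|x),
\]
then expands $q_{t+h}$, $p_{t+h}$, and $p_{t+h|t}$ to first order in $h$ via \eqref{eq:transition probability} and extracts the rate by taking $h\to0^+$. Your approach short-circuits this limit by starting from the known reversal identity $u_t(z,x)=Q_t(x,z)\,p_t(z)/p_t(x)$ (mind the argument order: in the paper's convention the second argument is the current state, so the forward rate appearing here is $Q_t(x,z)$, not $Q_t(z,x)$ as you wrote), which immediately gives the ratio $u_t^q/u_t^p=(q_t(z)/p_t(z))\big/(q_t(x)/p_t(x))$. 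Both then finish with the same Bayes computation $q_t/p_t=\E_{p_{1|t}}[r]$. The paper in fact presents essentially your direct argument as an alternative derivation in Appendix~\ref{appendix:comparison of guidance} (``From Posterior-based Guidance to Rate-based Guidance''), so your route is not only valid but anticipated by the authors as the cleaner path.
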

\cref{thm:diffusion guidance} requires the source and target corruption rate matrices to be the same, which is stronger than \cref{thm:guidance}. Although \cref{thm:diffusion guidance} gives a transition rate that can generate the target probability path, the transition rate we get is not necessarily the same as what we want ($u^q_t(z,x)=\E_{q_{1|t}(\rvx_1|x)}\brac{u^q_t(z,x|\rvx_1)}$). However, \cref{thm:guidance} provides a guidance term for the posterior and leads to the desired transition rate $u^q_t(z,x)$.  See further discussion in \cref{appendix:comparison of guidance}. 

In general cases, the guidance is time-dependent according to \cref{thm:guidance} and \cref{thm:diffusion guidance}. However, in the masked diffusion and flow models with $x_1$-independent scheduler, the guidance is time-independent. To see this, note that in those cases the posterior is time-independent (see Theorem 1 of \cite{ou2024your} and Proposition 6 of \cite{gat2024discrete} for details). Since the guidance is the integral of the product of the density ratio and the posterior, the time-independent posterior can lead to a time-independent guidance.

\subsection{Training Objective}\label{sec:training}
According to \cref{thm:guidance}, given the density ratio $r(x)$, we can obtain the exact guidance term by learning the conditional expectation of the density ratio $h_t^d(z^d,x)\overset{\triangle}{=}\E_{\rvx^{\bsl d}_1\sim p(\rvx^{\bsl d}_1|\rvx^d_1=z^d,\rvx_t=x)}\brac{r(\rvx_1)}$. We parameterize the guidance $h_t$ using a neural network. A natural training objective for learning conditional expectation is the Bregman divergence \citep{Banerjee2005}, which is defined as
\begin{align}\label{eq:Bregman divergence}
D_F(x\|y)=F(x)-F(y)-\<\nabla F(y),x-y\>,
\end{align}
where $F(\cdot)$ is a convex function. The most common choice of $F$ is $F(x)=\ltwo{x}^2/2$, and then the Bregman divergence is the $\ell^2$-loss. Unfortunately, as mentioned in \cite{lou2023discrete}, the $\ell^2$-loss does not work well for learning the conditional expectation of a density ratio due to the positive nature of the density ratio. In our work, we take $F(x)=\<x,\log x\>$, and the training objective is given by the following.
\begin{equation}\label{eq:training objective}
\begin{aligned}
    &~\E_{t\sim \mc{U}([0,1]),\rvx_1\sim p_1(\rvx_1),\rvx_t\sim p_{t|1}(\rvx_t|\rvx_1)}\bracBig{D_F\parenBig{\mathbf{r}(\rvx_1)\Big\|\mathbf{h}_t^{\theta}(\rvx_1,\rvx_t)}}\\
    =&~\E_{t\sim \mc{U}([0,1]),\rvx_1\sim p_1(\rvx_1),\rvx_t\sim p_{t|1}(\rvx_t|\rvx_1)}\bracBig{\sum_{d=1}^\mc{D}h_t^{d,\theta}(\rvx_1^d,\rvx_t)-r(\rvx_1)\log h_t^{d,\theta}(\rvx_1^d,\rvx_t)}+C\\
    \overset{\triangle}{=}&~\mc{L}_{h,p}(\theta)+C,
\end{aligned}
\end{equation}
where $\mathbf{r}(\rvx_1)=r(\rvx_1)\mathbbm{1}_\mc{D}$, $\mathbf{h}_t^\theta(\rvx_1,\rvx_t)=(h_t^{1,\theta}(\rvx^1_1,\rvx_t),\dots,h_t^{\mc{D},\theta}(\rvx^\mc{D}_1,\rvx_t))^\top$ and $C$ is a constant independent of $\theta$.

The loss $\mc{L}_{h,p}$ only requires the source distribution data. Additionally, we can utilize samples from the target distribution if they are available to achieve better performance. Similar to \cite{ouyang2024transfer}, we introduce a regularization term allowing us to learn the guidance $h_t$ with sampling $\rvx_1\sim q_1$ during training. Notice that the minimizer of the objective in \eqref{eq:training via cross-entropy} is the underlying posterior $q_{1|t}$. Combining with \cref{thm:guidance}, the exact guidance $h_t$ is the minimizer of the following objective. 
\begin{align}\label{eq:regularization}
    \mc{L}_{h,q}(\theta)=\E_{t\sim \mc{U}([0,1]),\rvx_1\sim q_1(\rvx_1),\rvx_t\sim q_{t|1}(\rvx_t|\rvx_1)}\bracBig{-\sum_{d=1}^\mc{D}\log\frac{h^{d,\theta}_t(\rvx_1^d,\rvx_t)}{\sum_{s\in\mc{S}}h^{d,\theta}_t(s,\rvx_t)p_{1|t}^d(s|\rvx_t)}}. 
\end{align}
Thus, our final training objective can be written as $\mc{L}_h(\theta)=\mc{L}_{h,p}(\theta)+\lambda\mc{L}_{h,q}(\theta)$, where $\lambda$ denotes the hyperparameter controlling the strength of the regularization.

\subsection{Sampling}
After the training stage, we can obtain the learned posterior-based guidance, which is a matrix-valued function $H_t^\theta:\mc{S}^\mc{D}\times [0,1]\to\R_+^{\mc{D}\times \abs{\mc{S}}}$ with $[H_t^\theta(x)]_{d,s}=h_t^{d,\theta}(s,x)$. At the sampling stage, given the current state $\rvx_t$, we first sample $\rvx_1^d$ from 
\begin{align*}
    q^{d,\theta}_{1|t}(\cdot|\rvx_t)= \frac{h_t^{d,\theta}(\cdot,\rvx_t)}{h_t^{d,\theta}(\rvx_t)}p_{1|t}^d(\cdot|\rvx_t)
\end{align*}
for each $d\in\brac{\mc{D}}$, where $h_t^{d,\theta}(x)=\sum_{s\in\mc{S}}h_t^{d,\theta}(s,x)p_{1|t}^d(s,x)$. Then, we sample $\rvx^d_{t+h}$ by always-valid sampling procedure (see \eqref{eq:valid sampling} in Appendix \ref{app:sampling} for details). {Specifically, at each step, we input the current state $\rvx_t$ into both the guidance network and pretrained posterior network, and then multiply the output to get the final transition probability in the target distribution $q^{d,\theta}_{1|t}(\cdot|\rvx_t)$ to update the current state $\rvx_t$ like \cref{alg:sampling without guidance}.} The overall sampling algorithm can be found in \cref{alg:sampling with posterior-based guidance} in Appendix \ref{app:sampling} and the detailed design for the general rate-based method proposed in \cref{thm:diffusion guidance} can be found in \cref{ap:rate_based}.

\begin{remark}\label{remark:first order approximation}
The rate-based guidance involves multiple forward passes, which depend on the number of nonzero elements in the transition rate $u_t^p(z,\rvx_t),~z\neq \rvx_t$, given the current state $\rvx_t$. If one use the always-valid sampling algorithm with rate-based guidance \cref{alg:sampling with rate-based guidance}, the guidance entails $\mc{D}+1$ function calls in each sampling step, which is still computationally inefficient; see \cref{fig:toy_selected_and_speed} (d). For predictor guidance, \cite{nisonoff2024unlocking} proposed to use the first-order approximation to estimate the log-ratio $\log\frac{p(y|\rvx_t=z)}{p(y|\rvx_t=x)}$ for efficient sampling. To be specific, we give the general form based on our proposed rate-based guidance:
\begin{align*}
    \log \E_{\rvx_1\sim p_{1|t}(\rvx_1|z)}\brac{r(\rvx_1)}\approx\log \E_{\rvx_1\sim p_{1|t}(\rvx_1|x)}\brac{r(\rvx_1)}+\Big\<z-x,\nabla_x\log\E_{\rvx_1\sim p_{1|t}(\rvx_1|x)}\brac{r(\rvx_1)}\Big\>.
\end{align*}
Plugging this into \eqref{thm:diffusion guidance}, the approximated guidance is
\begin{align}\label{eq:first-order approximation}    u_t^q(z,x)=\exp\parenBig{\Big\<z-x,\nabla_x\log\E_{\rvx_1\sim p_{1|t}(\rvx_1|x)}\brac{r(\rvx_1)}\Big\>}u_t^p(z,x).
\end{align}
However, this approximation not only introduces an approximation error but is also unreasonable for discrete data modeling, since the value of the right-hand side depends mainly on the location of $z$ and $x$ in Euclidean space. Fortunately, our proposed posterior-based guidance enables sampling at low computational cost without approximation, requiring only a single forward pass. \cref{table:comparison of guidance} summarizes three types of discrete guidance.
\end{remark}

\subsection{Guidance for Preference Alignment}
{
Given the general form of guidance for discrete flow matching, our framework is able to deal with energy-guided sampling \citep{zhang2025energy, lu2023contrastive} and also preference alignment in reinforcement learning from human feedback (RLHF) \citep{RectorBrooks2024SteeringMD}. In the following, we provide the formulation for RLHF.
Let $\rvc$ denote the prompt with distribution $p_\rvc$, $\pi_{ref}(\rvo_1|\rvc)$ denote the reference policy associated with pretrained model $p^d_{1|t}(\rvo_1^d|\rvc,\rvo_t)$. In the RL stage of RLHF \citep{ouyang2022training}, we consider to maximize
\begin{align*}
    \E_{\rvc\sim p_\rvc,\rvo_1|\rvc\sim\pi}\bracBig{\mc{R}(\rvc,\rvo_1)-\tau\log(\pi(\rvo_1|\rvc)/\pi_{ref}(\rvo_1|\rvc))},
\end{align*}
where $\tau$ is the temperature that controls the deviation from the reference policy.
As mentioned in \cite{rafailov2023direct}, the above objective has the closed-form maximizer
\begin{align*}
   \pi^*(\rvo_1|\rvc)=\frac{1}{\mc{Z}(\rvc)}\pi_{ref}(\rvo_1|\rvc)\exp\parenBig{\frac{\mc{R}(\rvc,\rvo_1)}{\tau}},
\end{align*}
where $\mc{Z}(\rvc)=\int \pi_{ref}(\rvo_1|\rvc)\exp\parenBig{\frac{\mc{R}(\rvc,\rvo_1)}{\tau}}\dd \rvo_1$ and $\mc{R}(\cdot,\cdot)$ is the reward function which can be rule-based or obtained by training a neural network on the a comparison dataset. Given the reward function, we aim to generate samples $\rvo_1\sim\pi^*(\rvo_1|\rvc)$ conditional on $\rvc$. If we take $p_1(\cdot)=\pi_{ref}(\cdot|\rvc)$ and $q_1(\cdot)=\pi^*(\cdot|\rvc)$ in \cref{thm:guidance}, we can obtain a rectified posterior for sampling from $\pi^*$, i.e., we would like to use $\mathbf{h}_t^{\theta}(\rvo_1,\rvo_t, c)$ to approximate $\exp\parenBig{\frac{\mc{R}(\rvc,\rvo_1)}{\tau}}$. We leave the detailed training objective in \eqref{eq_for_RLHF} in \cref{ap:rlhf}.}

\section{Experiments}\label{sec:numerical}
In this section, we present empirical evidence to verify the efficacy of the proposed framework. In Section \ref{sec:simulation}, we conduct proof-of-concept experiments using energy-guided sampling. In Section \ref{exp:multmodal}, we illustrate the effectiveness of our method on reinforcement learning from human feedback. Built upon a multimodal discrete flow-based model, we demonstrate the effectiveness of our method on
text-to-image generation and multimodal understanding benchmark.

\subsection{Simulation Results} \label{sec:simulation}

\noindent {\bf Experimental Setup and Baselines.} 
We consider a 2-D setting similar to that in \cite{zhang2025energy, lu2023contrastive}. 
The sample space is $\mc{S}^\mc{D}=\set{0,1,\dots,32}^2$. We denote the source distribution as $p_1(x)$ and set the target distribution as its energy-guided version $p_1^{(\gamma)}(x)\propto p_1(x)e^{-\gamma \mathcal{E}(x)}$, where $\gamma\ge 0$ is the guidance strength and $\mathcal{E}(x)=-\log p(y=1|x)$ is the energy function defined by a given classifier $p(y=1|x)$.
Our goal is to sample from the guided distribution $p_1^{(\gamma)}$ by using either the source transition rate $u^p_t$ or the source posterior $p_{1|t}$. We compare our proposed exact discrete guidance with the predictor guidance introduced in \cite{nisonoff2024unlocking}. Note that the density ratio in this case is $r(x_1)=\frac{p_1^{(\gamma)}(x_1)}{p_1(x_1)}=\frac{p^\gamma(y=1|x_1)}{\mc{Z}(\gamma)}$, where $\mc{Z}(\gamma)$ is a constant that does not depend on $x_1$. Thus, we obtain the posterior-based and rate-based guidance based on \cref{thm:guidance} and \cref{thm:diffusion guidance}, respectively. For the predictor guidance proposed by \cite{nisonoff2024unlocking}, we use the predictor-guided rate $u_t^{(\gamma)}(z,x)=\bracBig{\frac{\E_{\rvx_1\sim p_{1|t}(\rvx_1|z)}p(y=1|\rvx_1)}{\E_{\rvx_1\sim p_{1|t}(\rvx_1|x)}p(y=1|\rvx_1)}}^\gamma u_t^p(z,x),~z\neq x$, which is different from our proposed rate-based guidance; see \cref{discuss:guidance strenth} for further discussion.



\noindent {\bf Experimental Results.}
We sample from the 2-D target distribution $p_1^{(\gamma)}$ using different guidance schemes, as shown in \cref{fig:toy_selected_and_speed} (a-c). For $\gamma=0,3$, both rate-based and posterior-based guidance produce distributions close to the ground truth. When $\gamma=10,20$, the distribution generated by the predictor guidance \citep{nisonoff2024unlocking} differs from the ground-truth distribution. Overall, our posterior-based guidance achieves better performance than its rate-based counterpart, and also yields faster sampling, as illustrated in \cref{fig:toy_selected_and_speed} (d). We include the experimental details in \cref{ap:sim_detail} and more experimental results can be found in Appendix \ref{ap:more_exp}.

\begin{figure}[htbp]
\centering
\vspace{-0.05in}
\begin{minipage}[t]{0.48\linewidth}
    \begin{minipage}[t]{\linewidth}
        \centering
        \includegraphics[width=\linewidth]{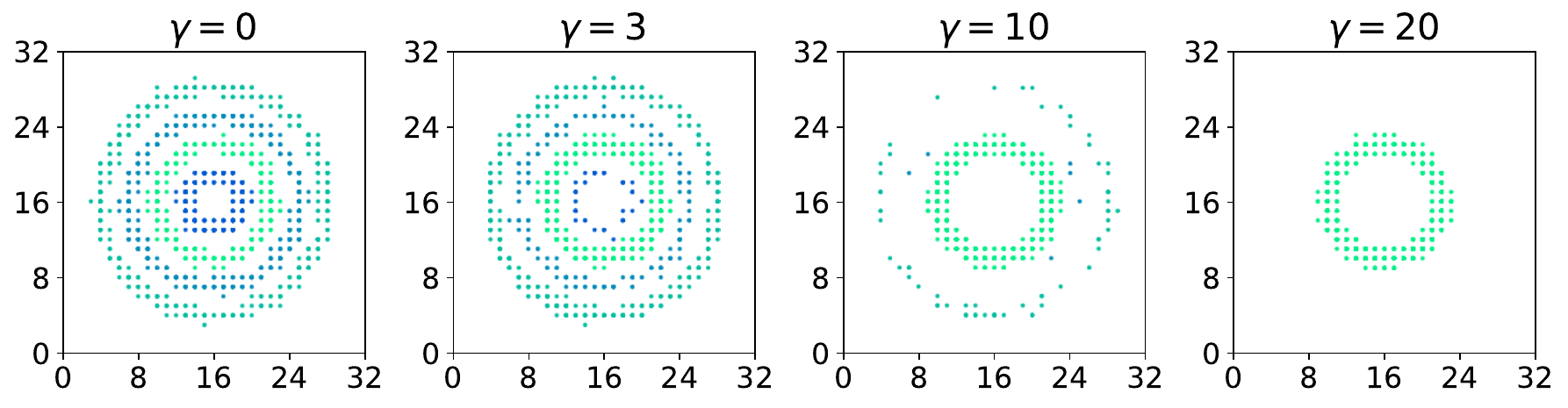}\\[-2pt]
        {\scriptsize (a) \textbf{Ground Truth (rings)}}
    \end{minipage}
    
    \vspace{12pt}
    
    \begin{minipage}[t]{\linewidth}
        \centering
        \includegraphics[width=\linewidth]{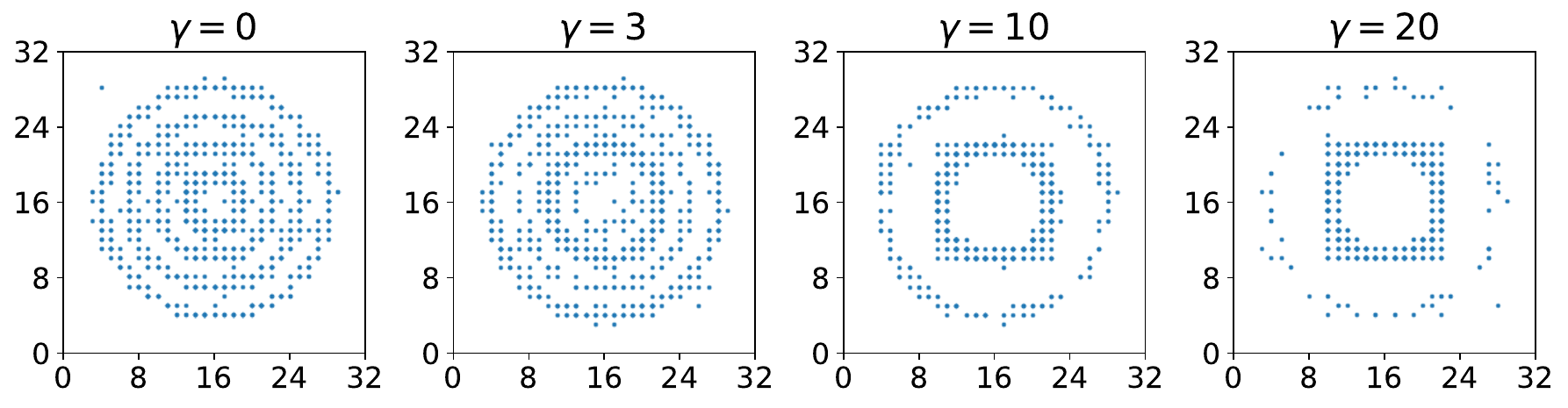}\\[-2pt]
        {\scriptsize (b) Rate-Based (Masked, Nisonoff et al. (2025))}
    \end{minipage}
\end{minipage}
\hfill
\begin{minipage}[t]{0.48\linewidth}
    \begin{minipage}[t]{\linewidth}
        \centering
        \includegraphics[width=\linewidth]{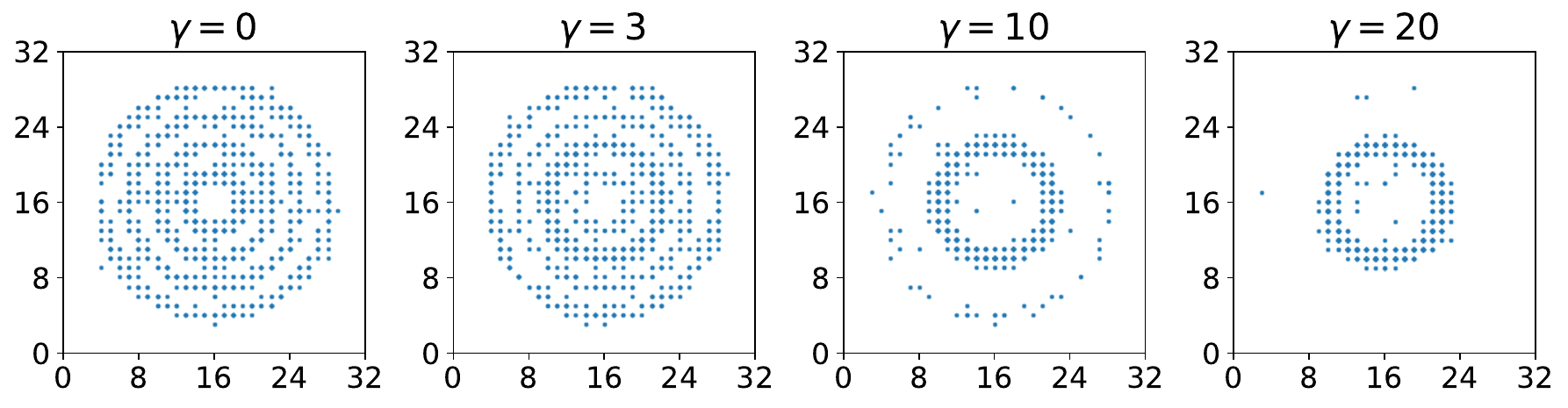}\\[-2pt]
        {\scriptsize (c) Posterior-Based (Uniform, Ours)}
    \end{minipage}
    
    \vspace{6.5pt}
    
    \begin{minipage}[t]{\linewidth}
        \centering
        \includegraphics[width=0.75\linewidth]{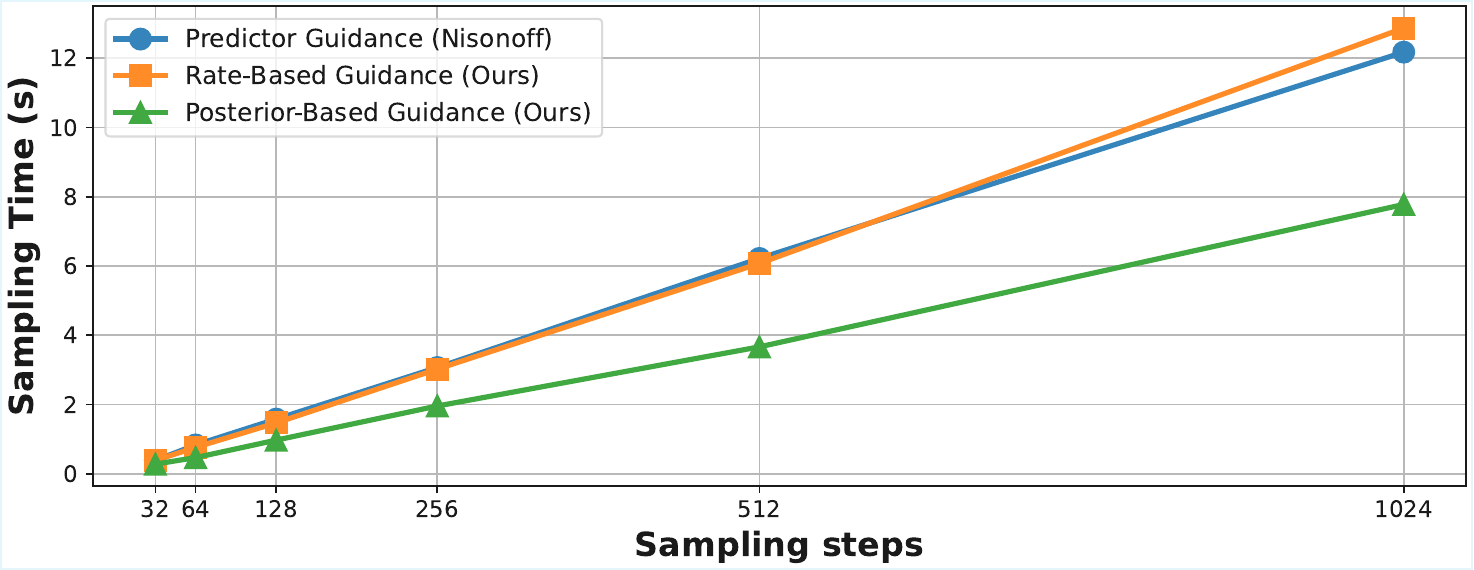}\\[-2pt]
        {\scriptsize (d) Sampling Time Comparison}
    \end{minipage}
\end{minipage}
\vspace{-0.05in}
\caption{\small Comparison of selected sampling results (64 steps) in 2-D experiments and sampling efficiency. (a) Ground truth distribution; (b) rate-based guidance with masked initial distribution \citep{nisonoff2024unlocking}; (c) posterior-based guidance with uniform initial distribution (ours); (d) comparison of sampling times showing posterior-based guidance achieves 1.6x higher sampling speed than rate-based guidance.}
\label{fig:toy_selected_and_speed}
\vspace{-0.2in}
\end{figure}

\subsection{RLHF on Multimodal Benchmark}\label{exp:multmodal}
\noindent {\bf Experimental Setup and Baselines.}
We develop the guidance based on a state-of-the-art discrete flow matching model on multimodal tasks, FUDOKI \citep{wang2025fudoki}. {The detailed probability path for FUDOKI can be found in \eqref{eq:metric-induced path and rate} and the experimental details can be found in \cref{sec:ap_imple_multimodal}.}
For text-to-image generation tasks, we adopt the same training prompts and reward function as \cite{Liu2025FlowGRPOTF}, which utilizes a weighted sum of Pickscore \citep{kirstain2023pick} and GenEval reward \citep{Ghosh2023GenEvalAO} as the reward function. We evaluate the performance on the widely used GenEval Benchmark \citep{Ghosh2023GenEvalAO}. For multimodal understanding tasks, we adopt the question and ground-truth answers from SEED \citep{Li2023SEEDBenchBM} and the widely used LLM-as-a-judge for assigning reward. We adopt VLMEvalKit \citep{Duan2024VLMEvalKitAO} to evaluate on several multimodal understanding datasets, including POPE \citep{Li2023EvaluatingOH}, MME-P \citep{Fu2023MMEAC}, MMB \citep{Liu2023MMBenchIY}, GQA \citep{Hudson2019GQAAN}, MMMU \citep{Yue2023MMMUAM}, and MM-Vet \citep{Yu2023MMVetEL}.


\noindent {\bf Experimental Results.}
Results on the GenEval benchmark for text-to-image generation are presented in \cref{tab:t2i}. Compared to the setting without guidance, our method achieves improvements on four out of six sub-tasks, highlighting the benefit of guidance in utilizing reward signals. We also provide the ablation studies of the regularization strength in \cref{ap:abl}. The performance on multimodal understanding benchmarks is demonstrated in \cref{tab:multimodal}. Our guidance framework consistently improves performance.
Qualitative results can be found in \cref{fig:qualitative_comparison}. Overall, the results confirm the effectiveness of our method for preference alignment across multimodal tasks.

\begin{figure}[h]
\centering
\begin{minipage}[t]{0.48\textwidth}
    \centering
    \includegraphics[width=\textwidth]{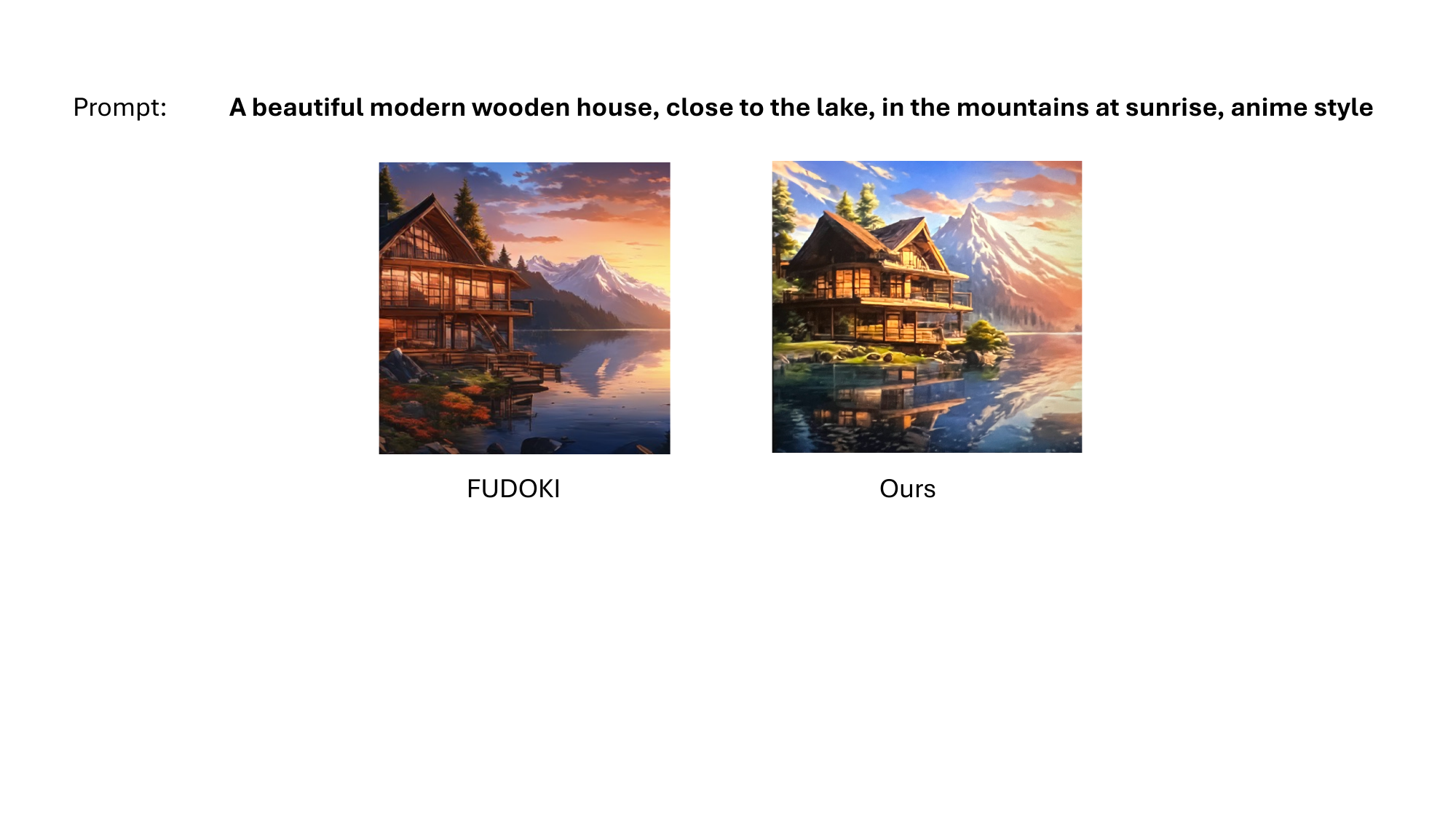}
    \subcaption{Text-to-Image Generation Results}
    \label{fig:qual_t2i}
\end{minipage}
\hfill
\begin{minipage}[t]{0.48\textwidth}
    \centering
    \includegraphics[width=\textwidth]{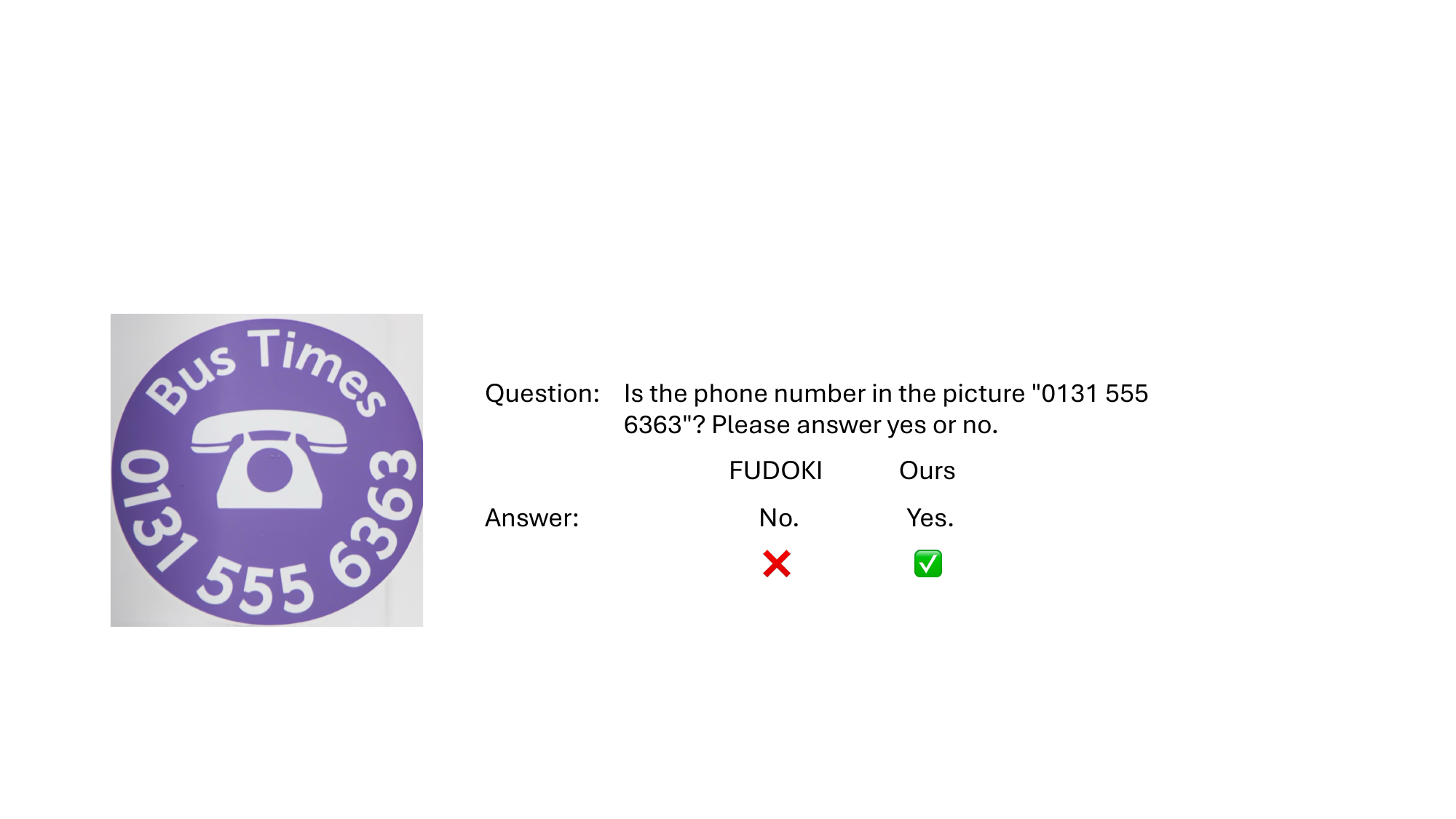}
    \subcaption{Multimodal Understanding Results}
    \label{fig:qual_mmu}
\end{minipage}
\caption{Qualitative comparison on text-to-image generation and multimodal understanding.}
\label{fig:qualitative_comparison}
\end{figure}

\begin{table}[ht!]
\centering
\begin{threeparttable}
\caption{{\small Visual Generation Performance on the GenEval Benchmark.}}\label{tab:t2i}
\tiny
\begin{tabular}{@{}llccccccc@{}}
\toprule
Type & Method & 
\shortstack{Single Obj.} & 
\shortstack{Two Obj.} & 
Counting & Colors & Position & 
\shortstack{Color Attri.} & 
Overall $\uparrow$ \\
\midrule
\multirow{10}{*}{Gen. Only}
& LlamaGen \citep{Sun2024AutoregressiveMB} & 0.71 & 0.34 & 0.21 & 0.58 & 0.07 & 0.04 & 0.32 \\
& Emu3-Gen \citep{Wang2024Emu3NP} & 0.98 & 0.71 & 0.34 & 0.81 & 0.17 & 0.21 & 0.54 \\
& LDM \citep{Rombach2021HighResolutionIS} & 0.92 & 0.29 & 0.23 & 0.70 & 0.02 & 0.05 & 0.37 \\
& SDv1.5 \citep{Rombach2021HighResolutionIS} & 0.97 & 0.38 & 0.35 & 0.76 & 0.04 & 0.06 & 0.43 \\
& PixArt-alpha \citep{Chen2023PixArtFT} & 0.98 & 0.50 & 0.44 & 0.80 & 0.08 & 0.07 & 0.48 \\
& SDv2.1 \citep{Rombach2021HighResolutionIS} & 0.98 & 0.51 & 0.44 & 0.85 & 0.07 & 0.17 & 0.50 \\
& DALL-E 2 \citep{Ramesh2022HierarchicalTI} & 0.94 & 0.66 & 0.49 & 0.77 & 0.10 & 0.19 & 0.52 \\
& SDXL \citep{Podell2023SDXLIL} & 0.98 & 0.74 & 0.39 & 0.85 & 0.15 & 0.23 & 0.55 \\
& DALL-E 3 \citep{BetkerImprovingIG} & 0.96 & 0.87 & 0.47 & 0.83 & 0.43 & 0.45 & 0.67 \\
& SD3-Medium \citep{Esser2024ScalingRF} & 0.99 & 0.94 & 0.72 & 0.89 & 0.33 & 0.60 & 0.74 \\
\midrule
\multirow{13}{*}{Und. + Gen.}
& SEED-$^{\dagger}$ \citep{Ge2024SEEDXMM} & 0.97 & 0.58 & 0.26 & 0.80 & 0.19 & 0.14 & 0.49 \\
& LWM \citep{Liu2024WorldMO} & 0.93 & 0.41 & 0.46 & 0.79 & 0.09 & 0.15 & 0.47 \\
& ILLUME \citep{Wang2024ILLUMEIY} & 0.99 & 0.86 & 0.45 & 0.71 & 0.39 & 0.28 & 0.61 \\
& TokenFlow-XL \citep{Qu2024TokenFlowUI}& 0.95 & 0.60 & 0.41 & 0.81 & 0.16 & 0.24 & 0.55 \\
& Chameleon \citep{Team2024ChameleonME} & -- & -- & -- & -- & -- & -- & 0.39 \\
& Janus \citep{Wu2024JanusDV} & 0.97 & 0.68 & 0.30 & 0.84 & 0.46 & 0.42 & 0.61 \\
& Janus-Pro-1B \citep{Chen2025JanusProUM} & 0.98 & 0.82 & 0.51 & 0.89 & 0.65 & 0.56 & 0.73 \\
& Show-o \citep{Xie2024ShowoOS} & 0.95 & 0.52 & 0.49 & 0.82 & 0.11 & 0.28 & 0.53 \\
& Transfusion \citep{Zhou2024TransfusionPT} & -- & -- & -- & -- & -- & -- & 0.63 \\
& UniDisc \citep{Swerdlow2025UnifiedMD} & 0.92 & 0.47 & 0.15 & 0.67 & 0.13 & 0.19 & 0.42 \\
& D-DiT \citep{Li2024DualDF} & 0.97 & 0.80 & 0.54 & 0.76 & 0.32 & 0.50 & 0.65 \\
& FUDOKI \citep{wang2025fudoki} & 0.96 & 0.85 & 0.56 & 0.88 & 0.68 & 0.67 & 0.77 \\
& Ours & 0.94 & \cellcolor{gray!25}0.86 & 0.53 &\cellcolor{gray!25} 0.89 &\cellcolor{gray!25} 0.70 & \cellcolor{gray!25}0.77 & \cellcolor{gray!25}0.78 \\
\bottomrule
\end{tabular}
\begin{tablenotes}
\item \textit{Note:} ``Und.'' = Understanding, ``Gen.'' = Generation. $^{\dagger}$ = models integrating an external pretrained diffusion model. Values exceeding the baseline FUDOKI are highlighted in \colorbox{gray!30}{gray}.
\end{tablenotes}
\end{threeparttable}
\end{table}

\begin{table}[ht!]
\centering
\begin{threeparttable}
\caption{{\small Multimodal Understanding Performance on Various Benchmarks.} }\label{tab:multimodal}
\tiny
\begin{tabular}{@{}lp{2.9cm}lcccccc@{}}
\toprule
Type & Model & \# Params & POPE $\uparrow$ & MME-P $\uparrow$ & MMB $\uparrow$ & GQA $\uparrow$ & MMMU $\uparrow$ & MM-Vet $\uparrow$ \\
\midrule
\multirow{13}{*}{Und. Only}
& LLaVA-Phi-1.5 \citep{Liu2023VisualIT} & 1.3B & 84.1 & 1128.0 & - & 56.5 & 30.7 & - \\
& MobileVLM \citep{Chu2023MobileVLMA} & 1.4B & 84.5 & 1196.2 & 53.2 & 56.1 & - & - \\
& MobileVLM-V2 \citep{Chu2024MobileVLMVF} & 1.4B & 84.3 & 1302.8 & 57.7 & 59.3 & - & - \\
& MobileVLM \citep{Chu2023MobileVLMA} & 2.7B & 84.9 & 1288.9 & 59.6 & 59.0 & - & - \\
& MobileVLM-V2 \citep{Chu2024MobileVLMVF} & 2.7B & 84.7 & 1440.5 & 63.2 & 61.1 & - & - \\
& LLaVA-Phi \citep{Zhu2024LLaVAPhiEM} & 2.7B & 85.0 & 1335.1 & 59.8 & - & - & 28.9 \\
& LLaVA \citep{Liu2023VisualIT} & 7B & 76.3 & 809.6 & 38.7 & - & - & 25.5 \\
& LLaVA-v1.5 \citep{Liu2023ImprovedBW} & 7B & 85.9 & 1510.7 & 64.3 & 62.0 & 35.4 & 31.1 \\
& InstructBLIP \citep{Dai2023InstructBLIPTG} & 7B & - & - & 36.0 & 49.2 & - & 26.2 \\
& Qwen-VL-Chat \citep{Bai2023QwenVLAF} & 7B & - & 1487.5 & 60.6 & 57.5 & - & - \\
& IDEFICS \citep{laurencon2023idefics} & 8B & - & - & 48.2 & 38.4 & - & - \\
& Emu3-Chat \citep{Wang2024Emu3NP} & 8B & 85.2 & 1244.0 & 58.5 & 60.3 & 31.6 & 37.2 \\
& InstructBLIP \citep{Dai2023InstructBLIPTG} & 13B & 78.9 & 1212.8 & - & 49.5 & - & 25.6 \\
\midrule
\multirow{15}{*}{Und. \& Gen.}
& LaVIT$^{\dagger}$ \citep{Jin2023UnifiedLP} & 7B & - & - & - & 46.8 & - & - \\
& MetaMorph$^{\dagger}$  \citep{Tong2024MetaMorphMU} & 8B & - & - & 75.2 & - & - & - \\
& Gemini-Nano-1 \citep{Reid2024Gemini1U} & 1.8B & - & - & - & - & 26.3 & - \\
& ILLUME \citep{Wang2024ILLUMEIY} & 7B & 88.5 & 1445.3 & 65.1 & - & 38.2 & 37.0 \\
& TokenFlow-XL \citep{Qu2024TokenFlowUI} & 13B & 86.8 & 1545.9 & 68.9 & 62.7 & 38.7 & 40.7 \\
& LWM \citep{Liu2024WorldMO} & 7B & 75.2 & - & - & 44.8 & - & 9.6 \\
& VILA-U \citep{Wu2024VILAUAU} & 7B & 85.8 & 1401.8 & - & 60.8 & - & 33.5 \\
& Chameleon \citep{Team2024ChameleonME} & 7B & - & - & - & - & 22.4 & 8.3 \\
& Janus \citep{Wu2024JanusDV} & 1.5B & 87.0 & 1338.0 & 69.4 & 59.1 & 30.5 & 34.3 \\
& Janus-Pro-1B \citep{Chen2025JanusProUM} & 1.5B & 86.2 & 1444.0 & 75.5 & 59.3 & 36.3 & 39.8 \\
& Show-o-256 \citep{Xie2024ShowoOS} & 1.3B & 73.8 & 948.4 & - & 48.7 & 25.1 & - \\
& Show-o-512 \citep{Xie2024ShowoOS} & 1.3B & 80.0 & 1097.2 & - & 58.0 & 26.7 & - \\
& D-Dit \citep{Li2024DualDF} & 2.0B & 84.0 & 1124.7 & - & 59.2 & - & - \\
& FUDOKI \citep{wang2025fudoki} & 1.5B & 86.1 & 1485.4 & 73.9 & 57.6 & 34.3 & 38.0 \\
& Ours & 1.5B & \cellcolor{gray!25}86.8 & \cellcolor{gray!25}1492.7 & \cellcolor{gray!25}74.2 & \cellcolor{gray!25}58.2 & \cellcolor{gray!25}35.4 & \cellcolor{gray!25}38.6 \\
\bottomrule
\end{tabular}
\begin{tablenotes}
\item \textit{Note:} ``Und.'' = Understanding, ``Gen.'' = Generation. $^{\dagger}$ = models integrating an external pretrained diffusion model. Values exceeding the baseline FUDOKI are highlighted in \colorbox{gray!30}{gray}.
\end{tablenotes}
\end{threeparttable}
\end{table}


\section{Related Works}

In this section, we review the main related works to ours. A comprehensive coverage of related literature can be found in \cref{appendix:additional related works}.

\noindent {\bf Guidance.}
A line of work has studied guidance for diffusion and flow-matching models in continuous space \citep{dhariwal2021diffusion,lu2023contrastive,ouyang2024transfer,feng2025guidance}. Subsequent efforts extend guidance to discrete space \citep{vignac2022digress,schiff2024simple,li2024derivative,nisonoff2024unlocking}, but these methods typically rely on a first-order approximation to reduce the computational cost. However, this approximation is not appropriate in the discrete setting, which our work seeks to overcome.

\noindent {\bf Discrete Flow-based Models.} Our work primarily builds on a line of discrete flow-based models \citep{campbell2024generative,gat2024discrete,shaul2024flow,yimingdefog}. These models construct transition rates that generate the target probability path through marginalizing conditional transition rate, yielding a more comprehensive design space of conditional probability path than discrete diffusion models \citep{austin2021structured,campbell2022continuous,sun2022score,vignac2022digress,lou2023discrete} including masked diffusion models \citep{shi2024simplified,sahoo2024simple,ou2024your,nie2025large,zhu2025llada,zhao2025d1,yang2025mmada}. This motivates us to propose posterior-based guidance, which provides a flexible and unified perspective for generative modeling through marginalization.

\section{Conclusion}

In this work, we introduce a novel guidance framework for discrete flow-based models. Our approach provides exact guidance with improved sampling efficiency, offering a unified perspective for constructing guidance across generative models. We further propose to train the guidance network by minimizing the Bregman divergence with regularization. The effectiveness of our framework is validated through experiments on energy-guided simulations, preference alignment for text-to-image generation, and multimodal understanding tasks.

\section*{Acknowledgments}

Hongyuan Zha's research is supported in part by Shenzhen Stability Science Program 2023, and National Natural Science Foundation of China (72495131). Fang Fang gratefully acknowledges research support from National Natural Science Foundation of China (72331005). Guang Cheng gratefully acknowledges financial support from the National Science Foundation (NSF–CNS 2247795).

\section*{Ethics statement}
This work adheres to the ICLR Code of Ethics. Our research does not involve human subjects, sensitive personal data, or experiments that may pose harm to individuals or communities. Publicly available datasets and benchmarks were used in all experiments.
\section*{Reproducibility Statement}
The code for our implementation can be found in the supplementary materials. For our theoretical result in \cref{thm:guidance} and \cref{thm:diffusion guidance}, the assumption is stated in \cref{con:support}, and the proof can be found in \cref{ap:proof_thm1} and \cref{ap:proof_thm2}.

\bibliography{ref}

@inproceedings{lipman2022flow,
  title     = {Flow matching for generative modeling},
  author    = {Yaron Lipman and Ricky T. Q. Chen and Heli Ben-Hamu and Maximilian Nickel and Matthew Le},
  booktitle = {International Conference on Learning Representations},
  year      = {2023}
}

@ARTICLE{Banerjee2005,
  author={Banerjee, A. and Xin Guo and Hui Wang},
  journal={IEEE Transactions on Information Theory}, 
  title={On the optimality of conditional expectation as a {Bregman} predictor}, 
  year={2005},
  volume={51},
  number={7},
  pages={2664-2669}}

@inproceedings{song2020score,
  title     = {Score-based generative modeling through stochastic differential equations},
  author    = {Yang Song and Jascha Sohl-Dickstein and Diederik P Kingma and Abhishek Kumar and Stefano Ermon and Ben Poole},
  booktitle = {International Conference on Learning Representations},
  year      = {2021}
}

@inproceedings{gat2024discrete,
  title     = {Discrete flow matching},
  author    = {Gat, Itai and Remez, Tal and Shaul, Neta and Kreuk, Felix and Chen, Ricky TQ and Synnaeve, Gabriel and Adi, Yossi and Lipman, Yaron},
  booktitle = {Advances in Neural Information Processing Systems},
  volume    = {37},
  pages     = {133345--133385},
  year      = {2024}
}

@article{kirstain2023pick,
  title={Pick-a-pic: An open dataset of user preferences for text-to-image generation},
  author={Kirstain, Yuval and Polyak, Adam and Singer, Uriel and Matiana, Shahbuland and Penna, Joe and Levy, Omer},
  journal={Advances in neural information processing systems},
  volume={36},
  pages={36652--36663},
  year={2023}
}

@inproceedings{ho2020denoising,
  title     = {Denoising diffusion probabilistic models},
  author    = {Ho, Jonathan and Jain, Ajay and Abbeel, Pieter},
  booktitle = {Advances in Neural Information Processing Systems},
  volume    = {33},
  pages     = {6840--6851},
  year      = {2020}
}

@book{norris1998markov,
  title     = {Markov chains},
  author    = {Norris, James R},
  volume    = {2},
  year      = {1998},
  publisher = {Cambridge University Press}
}

@inproceedings{campbell2024generative,
  title     = {Generative flows on discrete state-spaces: Enabling multimodal flows with applications to protein co-design},
  author    = {Campbell, Andrew and Yim, Jason and Barzilay, Regina and Rainforth, Tom and Jaakkola, Tommi},
  booktitle = {International Conference on Machine Learning},
  year      = {2024}
}

@inproceedings{dhariwal2021diffusion,
  title     = {Diffusion models beat {GANs} on image synthesis},
  author    = {Dhariwal, Prafulla and Nichol, Alexander},
  booktitle = {Advances in Neural Information Processing Systems},
  volume    = {34},
  pages     = {8780--8794},
  year      = {2021}
}

@inproceedings{ho2022classifier,
  title     = {Classifier-free diffusion guidance},
  author    = {Ho, Jonathan and Salimans, Tim},
  booktitle = {NeurIPS 2021 Workshop on Deep Generative Models and Downstream Applications},
  year      = {2021}
}

@inproceedings{song2020denoising,
  title     = {Denoising diffusion implicit models},
  author    = {Jiaming Song and Chenlin Meng and Stefano Ermon},
  booktitle = {International Conference on Learning Representations},
  year      = {2021}
}

@article{liu2022flow,
  title   = {Flow straight and fast: learning to generate and transfer data with rectified flow},
  author  = {Liu, Xingchao and Gong, Chengyue and Liu, Qiang},
  journal = {arXiv preprint arXiv:2209.03003},
  year    = {2022}
}

@inproceedings{albergo2022building,
  title     = {Building normalizing flows with stochastic interpolants},
  author    = {Albergo, Michael S and Vanden-Eijnden, Eric},
  booktitle = {International Conference on Learning Representations},
  year      = {2023}
}

@inproceedings{austin2021structured,
  title     = {Structured denoising diffusion models in discrete state-spaces},
  author    = {Austin, Jacob and Johnson, Daniel D and Ho, Jonathan and Tarlow, Daniel and Van Den Berg, Rianne},
  booktitle = {Advances in Neural Information Processing Systems},
  volume    = {34},
  pages     = {17981--17993},
  year      = {2021}
}

@inproceedings{meng2022concrete,
  title     = {Concrete score matching: Generalized score matching for discrete data},
  author    = {Meng, Chenlin and Choi, Kristy and Song, Jiaming and Ermon, Stefano},
  booktitle = {Advances in Neural Information Processing Systems},
  volume    = {35},
  pages     = {34532--34545},
  year      = {2022}
}

@inproceedings{lou2023discrete,
  title     = {Discrete diffusion modeling by estimating the ratios of the data distribution},
  author    = {Lou, Aaron and Meng, Chenlin and Ermon, Stefano},
  booktitle = {International Conference on Machine Learning},
  year      = {2024}
}

@inproceedings{campbell2022continuous,
  title     = {A continuous time framework for discrete denoising models},
  author    = {Campbell, Andrew and Benton, Joe and De Bortoli, Valentin and Rainforth, Thomas and Deligiannidis, George and Doucet, Arnaud},
  booktitle = {Advances in Neural Information Processing Systems},
  volume    = {35},
  pages     = {28266--28279},
  year      = {2022}
}

@inproceedings{ouyang2024transfer,
  title     = {Transfer learning for diffusion models},
  author    = {Ouyang, Yidong and Xie, Liyan and Zha, Hongyuan and Cheng, Guang},
  booktitle = {Advances in Neural Information Processing Systems},
  year      = {2024}
}

@article{havasi2025editflowsflowmatching,
  title   = {Edit flows: Flow matching with edit operations},
  author  = {Marton Havasi and Brian Karrer and Itai Gat and Ricky T. Q. Chen},
  journal = {arXiv preprint arXiv:2506.09018},
  year    = {2025}
}

@inproceedings{shi2024simplified,
  title     = {Simplified and generalized masked diffusion for discrete data},
  author    = {Shi, Jiaxin and Han, Kehang and Wang, Zhe and Doucet, Arnaud and Titsias, Michalis},
  booktitle = {Advances in Neural Information Processing Systems},
  volume    = {37},
  pages     = {103131--103167},
  year      = {2024}
}

@inproceedings{sahoo2024simple,
  title     = {Simple and effective masked diffusion language models},
  author    = {Sahoo, Subham and Arriola, Marianne and Schiff, Yair and Gokaslan, Aaron and Marroquin, Edgar and Chiu, Justin and Rush, Alexander and Kuleshov, Volodymyr},
  booktitle = {Advances in Neural Information Processing Systems},
  volume    = {37},
  pages     = {130136--130184},
  year      = {2024}
}

@inproceedings{ou2024your,
  title     = {Your absorbing discrete diffusion secretly models the conditional distributions of clean data},
  author    = {Ou, Jingyang and Nie, Shen and Xue, Kaiwen and Zhu, Fengqi and Sun, Jiacheng and Li, Zhenguo and Li, Chongxuan},
  booktitle = {International Conference on Learning Representations},
  year      = {2025}
}

@inproceedings{hoogeboom2021argmax,
  title     = {Argmax flows and multinomial diffusion: Learning categorical distributions},
  author    = {Hoogeboom, Emiel and Nielsen, Didrik and Jaini, Priyank and Forr{\'e}, Patrick and Welling, Max},
  booktitle = {Advances in Neural Information Processing Systems},
  volume    = {34},
  pages     = {12454--12465},
  year      = {2021}
}

@inproceedings{holderrieth2024generator,
  title     = {Generator matching: Generative modeling with arbitrary Markov processes},
  author    = {Holderrieth, Peter and Havasi, Marton and Yim, Jason and Shaul, Neta and Gat, Itai and Jaakkola, Tommi and Karrer, Brian and Chen, Ricky TQ and Lipman, Yaron},
  booktitle = {International Conference on Learning Representations},
  year      = {2025}
}

@article{benton2024denoising,
  title     = {From denoising diffusions to denoising {Markov} models},
  author    = {Benton, Joe and Shi, Yuyang and De Bortoli, Valentin and Deligiannidis, George and Doucet, Arnaud},
  journal   = {Journal of the Royal Statistical Society Series B: Statistical Methodology},
  volume    = {86},
  number    = {2},
  pages     = {286--301},
  year      = {2024},
  publisher = {Oxford University Press US}
}

@inproceedings{ouyang2022training,
  title     = {Training language models to follow instructions with human feedback},
  author    = {Ouyang, Long and Wu, Jeffrey and Jiang, Xu and Almeida, Diogo and Wainwright, Carroll and Mishkin, Pamela and Zhang, Chong and Agarwal, Sandhini and Slama, Katarina and Ray, Alex and others},
  booktitle = {Advances in Neural Information Processing Systems},
  volume    = {35},
  pages     = {27730--27744},
  year      = {2022}
}

@article{zhao2025d1,
  title   = {d1: Scaling reasoning in diffusion large language models via reinforcement learning},
  author  = {Zhao, Siyan and Gupta, Devaansh and Zheng, Qinqing and Grover, Aditya},
  journal = {arXiv preprint arXiv:2504.12216},
  year    = {2025}
}

@inproceedings{rafailov2023direct,
  title     = {Direct preference optimization: Your language model is secretly a reward model},
  author    = {Rafailov, Rafael and Sharma, Archit and Mitchell, Eric and Manning, Christopher D and Ermon, Stefano and Finn, Chelsea},
  booktitle = {Advances in Neural Information Processing Systems},
  volume    = {36},
  pages     = {53728--53741},
  year      = {2023}
}

@inproceedings{nie2025large,
  title     = {Large language diffusion models},
  author    = {Nie, Shen and Zhu, Fengqi and You, Zebin and Zhang, Xiaolu and Ou, Jingyang and Hu, Jun and Zhou, Jun and Lin, Yankai and Wen, Ji-Rong and Li, Chongxuan},
  booktitle = {International Conference on Learning Representations},
  year      = {2025}
}

@article{zhu2025llada,
  title   = {{LL}a{DA} 1.5: Variance-reduced preference optimization for large language diffusion models},
  author  = {Zhu, Fengqi and Wang, Rongzhen and Nie, Shen and Zhang, Xiaolu and Wu, Chunwei and Hu, Jun and Zhou, Jun and Chen, Jianfei and Lin, Yankai and Wen, Ji-Rong and others},
  journal = {arXiv preprint arXiv:2505.19223},
  year    = {2025}
}

@article{yang2025mmada,
  title   = {{MM}a{DA}: Multimodal large diffusion language models},
  author  = {Yang, Ling and Tian, Ye and Li, Bowen and Zhang, Xinchen and Shen, Ke and Tong, Yunhai and Wang, Mengdi},
  journal = {arXiv preprint arXiv:2505.15809},
  year    = {2025}
}

@inproceedings{nie2024scaling,
  title     = {Scaling up masked diffusion models on text},
  author    = {Nie, Shen and Zhu, Fengqi and Du, Chao and Pang, Tianyu and Liu, Qian and Zeng, Guangtao and Lin, Min and Li, Chongxuan},
  booktitle = {International Conference on Learning Representations},
  year      = {2025}
}

@inproceedings{shih2022training,
  title     = {Training and inference on any-order autoregressive models the right way},
  author    = {Shih, Andy and Sadigh, Dorsa and Ermon, Stefano},
  booktitle = {Advances in Neural Information Processing Systems},
  volume    = {35},
  pages     = {2762--2775},
  year      = {2022}
}

@inproceedings{hoogeboom2021autoregressive,
  title     = {Autoregressive diffusion models},
  author    = {Hoogeboom, Emiel and Gritsenko, Alexey A. and Bastings, Jasmijn and Poole, Ben and van den Berg, Rianne and Salimans, Tim},
  booktitle = {International Conference on Learning Representations},
  year      = {2022}
}

@inproceedings{shaul2024flow,
  title     = {Flow matching with general discrete paths: a kinetic-optimal perspective},
  author    = {Shaul, Neta and Gat, Itai and Havasi, Marton and Severo, Daniel and Sriram, Anuroop and Holderrieth, Peter and Karrer, Brian and Lipman, Yaron and Chen, Ricky T. Q.},
  booktitle = {International Conference on Learning Representations},
  year      = {2025}
}

@inproceedings{feng2025guidance,
  title     = {On the guidance of flow matching},
  author    = {Feng, Ruiqi and Yu, Chenglei and Deng, Wenhao and Hu, Peiyan and Wu, Tailin},
  booktitle = {International Conference on Machine Learning},
  year      = {2025}
}

@inproceedings{nisonoff2024unlocking,
  title     = {Unlocking guidance for discrete state-space diffusion and flow models},
  author    = {Nisonoff, Hunter and Xiong, Junhao and Allenspach, Stephan and Listgarten, Jennifer},
  booktitle = {International Conference on Learning Representations},
  year      = {2025}
}

@inproceedings{chang2022maskgit,
  title     = {Mask{GIT}: Masked generative image transformer},
  author    = {Chang, Huiwen and Zhang, Han and Jiang, Lu and Liu, Ce and Freeman, William T.},
  booktitle = {Proceedings of the IEEE/CVF Conference on Computer Vision and Pattern Recognition},
  pages     = {11315--11325},
  year      = {2022}
}

@inproceedings{sun2022score,
  title     = {Score-based continuous-time discrete diffusion models},
  author    = {Sun, Haoran and Yu, Lijun and Dai, Bo and Schuurmans, Dale and Dai, Hanjun},
  booktitle = {International Conference on Learning Representations},
  year      = {2023}
}

@article{wang2025fudoki,
  title   = {{FUDOKI}: Discrete flow-based unified understanding and generation via kinetic-optimal velocities},
  author  = {Wang, Jin and Lai, Yao and Li, Aoxue and Zhang, Shifeng and Sun, Jiacheng and Kang, Ning and Wu, Chengyue and Li, Zhenguo and Luo, Ping},
  journal = {arXiv preprint arXiv:2505.20147},
  year    = {2025}
}

@inproceedings{sohl2015deep,
  title        = {Deep unsupervised learning using nonequilibrium thermodynamics},
  author       = {Sohl-Dickstein, Jascha and Weiss, Eric and Maheswaranathan, Niru and Ganguli, Surya},
  booktitle    = {International Conference on Machine Learning},
  pages        = {2256--2265},
  year         = {2015},
  organization = {PMLR}
}

@inproceedings{schiff2024simple,
  title     = {Simple guidance mechanisms for discrete diffusion models},
  author    = {Schiff, Yair and Sahoo, Subham Sekhar and Phung, Hao and Wang, Guanghan and Boshar, Sam and Dalla-Torre, Hugo and de Almeida, Bernardo P. and Rush, Alexander M. and Pierrot, Thomas and Kuleshov, Volodymyr},
  booktitle = {International Conference on Learning Representations},
  year      = {2025}
}

@inproceedings{karras2022elucidating,
  title     = {Elucidating the design space of diffusion-based generative models},
  author    = {Karras, Tero and Aittala, Miika and Aila, Timo and Laine, Samuli},
  booktitle = {Advances in Neural Information Processing Systems},
  volume    = {35},
  pages     = {26565--26577},
  year      = {2022}
}

@inproceedings{lu2023contrastive,
  title        = {Contrastive energy prediction for exact energy-guided diffusion sampling in offline reinforcement learning},
  author       = {Lu, Cheng and Chen, Huayu and Chen, Jianfei and Su, Hang and Li, Chongxuan and Zhu, Jun},
  booktitle    = {International Conference on Machine Learning},
  pages        = {22825--22855},
  year         = {2023},
  organization = {PMLR}
}

@inproceedings{zhang2025energy,
  title     = {Energy-weighted flow matching for offline reinforcement learning},
  author    = {Zhang, Shiyuan and Zhang, Weitong and Gu, Quanquan},
  booktitle = {International Conference on Learning Representations},
  year      = {2025}
}

@article{zheng2023guided,
  title   = {Guided flows for generative modeling and decision making},
  author  = {Zheng, Qinqing and Le, Matt and Shaul, Neta and Lipman, Yaron and Grover, Aditya and Chen, Ricky TQ},
  journal = {arXiv preprint arXiv:2311.13443},
  year    = {2023}
}

@article{Sun2024AutoregressiveMB,
  title   = {Autoregressive model beats diffusion: Llama for scalable image generation},
  author  = {Sun, Peize and Jiang, Yi and Chen, Shoufa and Zhang, Shilong and Peng, Bingyue and Luo, Ping and Yuan, Zehuan},
  journal = {arXiv preprint arXiv:2406.06525},
  year    = {2024}
}

@inproceedings{Rombach2021HighResolutionIS,
  title     = {High-resolution image synthesis with latent diffusion models},
  author    = {Rombach, Robin and Blattmann, Andreas and Lorenz, Dominik and Esser, Patrick and Ommer, Bj{\"o}rn},
  booktitle = {Proceedings of the IEEE/CVF Conference on Computer Vision and Pattern Recognition},
  pages     = {10684--10695},
  year      = {2022}
}

@article{Ramesh2022HierarchicalTI,
  title   = {Hierarchical text-conditional image generation with {CLIP} latents},
  author  = {Ramesh, Aditya and Dhariwal, Prafulla and Nichol, Alex and Chu, Casey and Chen, Mark},
  journal = {arXiv preprint arXiv:2204.06125},
  year    = {2022}
}

@inproceedings{Podell2023SDXLIL,
  title     = {{SDXL}: Improving latent diffusion models for high-resolution image synthesis},
  author    = {Podell, Dustin and English, Zion and Lacey, Kyle and Blattmann, Andreas and Dockhorn, Tim and M{\"u}ller, Jonas and Penna, Joe and Rombach, Robin},
  booktitle = {International Conference on Learning Representations},
  year      = {2024}
}

@article{Ge2024SEEDXMM,
  title   = {{SEED-X}: Multimodal models with unified multi-granularity comprehension and generation},
  author  = {Ge, Yuying and Zhao, Sijie and Zhu, Jinguo and Ge, Yixiao and Yi, Kun and Song, Lin and Li, Chen and Ding, Xiaohan and Shan, Ying},
  journal = {arXiv preprint arXiv:2404.14396},
  year    = {2024}
}

@inproceedings{Liu2024WorldMO,
  title     = {World model on million-length video and language with blockwise {RingAttention}},
  author    = {Liu, Hao and Yan, Wilson and Zaharia, Matei and Abbeel, Pieter},
  booktitle = {International Conference on Learning Representations},
  year      = {2025}
}

@inproceedings{Xie2024ShowoOS,
  title     = {Show-o: one single transformer to unify multimodal understanding and generation},
  author    = {Xie, Jinheng and Mao, Weijia and Bai, Zechen and Zhang, David Junhao and Wang, Weihao and Lin, Kevin Qinghong and Gu, Yuchao and Chen, Zhijie and Yang, Zhenheng and Shou, Mike Zheng},
  booktitle = {International Conference on Learning Representations},
  year      = {2025}
}

@inproceedings{Wu2024JanusDV,
  author    = {Wu, Chengyue and Chen, Xiaokang and Wu, Zhiyu and Ma, Yiyang and Liu, Xingchao and Pan, Zizheng and Liu, Wen and Xie, Zhenda and Yu, Xingkai and Ruan, Chong and Luo, Ping},
  title     = {Janus: Decoupling visual encoding for unified multimodal understanding and generation},
  booktitle = {Proceedings of the IEEE/CVF Conference on Computer Vision and Pattern Recognition (CVPR)},
  year      = {2025},
  pages     = {12966--12977}
}

@article{Team2024ChameleonME,
  title={Chameleon: Mixed-Modal Early-Fusion Foundation Models},
  author={Chameleon Team and Mingda Chen and Jacob Kahn},
  journal={ArXiv},
  year={2024},
  volume={abs/2405.09818},
  url={https://api.semanticscholar.org/CorpusID:269791516}
}

@article{Wu2024VILAUAU,
  title={VILA-U: A Unified Foundation Model Integrating Visual Understanding and Generation},
  author={Yecheng Wu and Zhuoyang Zhang and Junyu Chen and Haotian Tang and Dacheng Li and Yunhao Fang and Ligeng Zhu and Enze Xie and Hongxu Yin and Li Yi and Song Han and Yao Lu},
  journal={ArXiv},
  year={2024},
  volume={abs/2409.04429},
  url={https://api.semanticscholar.org/CorpusID:272463603}
}

@inproceedings{Li2024DualDF,
  title     = {Dual diffusion for unified image generation and understanding},
  author    = {Li, Zijie and Li, Henry and Shi, Yichun and Barati Farimani, Amir and Kluger, Yuval and Yang, Linjie and Wang, Peng},
  booktitle = {Proceedings of the IEEE/CVF Conference on Computer Vision and Pattern Recognition},
  year      = {2024}
}

@article{Swerdlow2025UnifiedMD,
  title   = {Unified multimodal discrete diffusion},
  author  = {Swerdlow, Alexander and Prabhudesai, Mihir and Gandhi, Siddharth and Pathak, Deepak and Fragkiadaki, Katerina},
  journal = {arXiv preprint arXiv:2503.20853},
  year    = {2025}
}

@inproceedings{Zhou2024TransfusionPT,
  title     = {Transfusion: Predict the next token and diffuse images with one multi-modal model},
  author    = {Zhou, Chunting and Yu, Lili and Babu, Arun and Tirumala, Kushal and Yasunaga, Michihiro and Shamis, Leonid and Kahn, Jacob and Ma, Xuezhe and Zettlemoyer, Luke and Levy, Omer},
  booktitle = {International Conference on Learning Representations},
  year      = {2025}
}

@article{Chen2025JanusProUM,
  title   = {Janus-{P}ro: Unified multimodal understanding and generation with data and model scaling},
  author  = {Chen, Xiaokang and Wu, Zhiyu and Liu, Xingchao and Pan, Zizheng and Liu, Wen and Xie, Zhenda and Yu, Xingkai and Ruan, Chong},
  journal = {arXiv preprint arXiv:2501.17811},
  year    = {2025}
}

@article{Wang2024Emu3NP,
  title   = {Emu3: Next-token prediction is all you need},
  author  = {Wang, Xinlong and Zhang, Xiaosong and Luo, Zhengxiong and Sun, Quan and Cui, Yufeng and Wang, Jinsheng and Zhang, Fan and Wang, Yueze and Li, Zhen and Yu, Qiying and Zhao, Yingli and Ao, Yulong and Min, Xuebin and Li, Tao and Wu, Boya and Zhao, Bo and Zhang, Bowen and Wang, Lian-zi and Liu, Guang and He, Zheqi and Yang, Xi and Liu, Jingjing and Lin, Yonghua and Huang, Tiejun and Wang, Zhongyuan},
  journal = {arXiv preprint arXiv:2409.18869},
  year    = {2024}
}

@inproceedings{Chen2023PixArtFT,
  title     = {PixArt-$\alpha$: Fast training of diffusion transformer for photorealistic text-to-image synthesis},
  author    = {Chen, Junsong and Yu, Jincheng and Ge, Chongjian and Yao, Lewei and Xie, Enze and Wang, Zhongdao and Kwok, James and Luo, Ping and Lu, Huchuan and Li, Zhenguo},
  booktitle = {International Conference on Learning Representations},
  year      = {2024}
}

@misc{BetkerImprovingIG,
  title  = {Improving image generation with better captions},
  author = {Betker, James and Goh, Gabriel and Jing, Li and Brooks, Tim and Wang, Jianfeng and Li, Linjie and Ouyang, Long and Zhuang, Juntang and Lee, Joyce and Guo, Yufei and Manassra, Wesam and Dhariwal, Prafulla and Chu, Casey and Jiao, Yunxin and Ramesh, Aditya}
}

@inproceedings{Esser2024ScalingRF,
  title     = {Scaling rectified flow transformers for high-resolution image synthesis},
  author    = {Esser, Patrick and Kulal, Sumith and Blattmann, Andreas and Entezari, Rahim and M{\"u}ller, Jonas and Saini, Harry and Levi, Yam and Lorenz, Dominik and Sauer, Axel and Boesel, Frederic and others},
  booktitle = {International Conference on Machine Learning},
  year      = {2024}
}

@article{li2024derivative,
  title   = {Derivative-free guidance in continuous and discrete diffusion models with soft value-based decoding},
  author  = {Li, Xiner and Zhao, Yulai and Wang, Chenyu and Scalia, Gabriele and Eraslan, Gokcen and Nair, Surag and Biancalani, Tommaso and Ji, Shuiwang and Regev, Aviv and Levine, Sergey and others},
  journal = {arXiv preprint arXiv:2408.08252},
  year    = {2024}
}

@inproceedings{vignac2022digress,
  title     = {{DiGress}: Discrete denoising diffusion for graph generation},
  author    = {Vignac, Clement and Krawczuk, Igor and Siraudin, Antoine and Wang, Bohan and Cevher, Volkan and Frossard, Pascal},
  booktitle = {International Conference on Learning Representations},
  year      = {2023}
}

@inproceedings{arriola2025block,
  title     = {Block diffusion: Interpolating between autoregressive and diffusion language models},
  author    = {Arriola, Marianne and Gokaslan, Aaron and Chiu, Justin T and Yang, Zhihan and Qi, Zhixuan and Han, Jiaqi and Sahoo, Subham Sekhar and Kuleshov, Volodymyr},
  booktitle = {International Conference on Learning Representations},
  year      = {2025}
}

@inproceedings{Liu2023VisualIT,
  title     = {Visual instruction tuning},
  author    = {Liu, Haotian and Li, Chunyuan and Wu, Qingyang and Lee, Yong Jae},
  booktitle = {Advances in Neural Information Processing Systems},
  volume    = {36},
  pages     = {34892--34916},
  year      = {2023}
}

@article{Chu2023MobileVLMA,
  title   = {{MobileVLM}: A fast, strong and open vision language assistant for mobile devices},
  author  = {Chu, Xiangxiang and Qiao, Limeng and Lin, Xinyang and Xu, Shuang and Yang, Yang and Hu, Yiming and Wei, Fei and Zhang, Xinyu and Zhang, Bo and Wei, Xiaolin and Shen, Chunhua},
  journal = {arXiv preprint arXiv:2312.16886},
  year    = {2023}
}

@article{Chu2024MobileVLMVF,
  title={MobileVLM V2: Faster and Stronger Baseline for Vision Language Model},
  author={Xiangxiang Chu and Limeng Qiao and Xinyu Zhang and Shuang Xu and Fei Wei and Yang Yang and Xiaofei Sun and Yiming Hu and Xinyang Lin and Bo Zhang and Chunhua Shen},
  journal={ArXiv},
  year={2024},
  volume={abs/2402.03766},
  url={https://api.semanticscholar.org/CorpusID:267500104}
}

@article{Zhu2024LLaVAPhiEM,
  title={{LL}a{VA}-{P}hi: Efficient Multi-Modal Assistant with Small Language Model},
  author={Yichen Zhu and Minjie Zhu and Ning Liu and Zhiyuan Xu and Yaxin Peng},
  journal={Proceedings of the 1st International Workshop on Efficient Multimedia Computing under Limited},
  year={2024},
  url={https://api.semanticscholar.org/CorpusID:266755915}
}

@article{Liu2023ImprovedBW,
  title={Improved Baselines with Visual Instruction Tuning},
  author={Haotian Liu and Chunyuan Li and Yuheng Li and Yong Jae Lee},
  journal={2024 IEEE/CVF Conference on Computer Vision and Pattern Recognition (CVPR)},
  year={2023},
  pages={26286-26296},
  url={https://api.semanticscholar.org/CorpusID:263672058}
}

@article{Dai2023InstructBLIPTG,
  title={InstructBLIP: Towards General-purpose Vision-Language Models with Instruction Tuning},
  author={Wenliang Dai and Junnan Li and Dongxu Li and Anthony Meng Huat Tiong and Junqi Zhao and Weisheng Wang and Boyang Albert Li and Pascale Fung and Steven C. H. Hoi},
  journal={ArXiv},
  year={2023},
  volume={abs/2305.06500},
  url={https://api.semanticscholar.org/CorpusID:258615266}
}

@article{Bai2023QwenVLAF,
  title={Qwen-VL: A Frontier Large Vision-Language Model with Versatile Abilities},
  author={Jinze Bai and Shuai Bai and Shusheng Yang and Shijie Wang and Sinan Tan and Peng Wang and Junyang Lin and Chang Zhou and Jingren Zhou},
  journal={ArXiv},
  year={2023},
  volume={abs/2308.12966},
  url={https://api.semanticscholar.org/CorpusID:263875678}
}

@misc{laurencon2023idefics,
    author = {Hugo Laurençon and Daniel van Strien and Stas Bekman and Leo Tronchon and Lucile Saulnier and Thomas Wang and Siddharth Karamcheti and Amanpreet Singh and Giada Pistilli and Yacine Jernite and Anton Lozhkov and Alexander M. Rush and Douwe Kiela and Matthieu Cord and Victor Sanh and et al.},
    title = {Introducing IDEFICS: An open reproduction of state-of-the-art visual language model},
    year = {2023},
    eprint = {https://huggingface.co/blog/idefics},
    note = {Hugging Face Blog Post},
    publisher = {Hugging Face},
    url = {https://huggingface.co/blog/idefics}
}

@article{Jin2023UnifiedLP,
  title={Unified Language-Vision Pretraining in {LLM} with Dynamic Discrete Visual Tokenization},
  author={Yang Jin and Kun Xu and Kun Xu and Liwei Chen and Chao Liao and Jianchao Tan and Quzhe Huang and Bin Chen and Chenyi Lei and An Liu and Chengru Song and Xiaoqiang Lei and Di Zhang and Wenwu Ou and Kun Gai and Yadong Mu},
  journal={ArXiv},
  year={2023},
  volume={abs/2309.04669},
  url={https://api.semanticscholar.org/CorpusID:263889455}
}

@inproceedings{Li2023EvaluatingOH,
  title={Evaluating Object Hallucination in Large Vision-Language Models},
  author={Yifan Li and Yifan Du and Kun Zhou and Jinpeng Wang and Wayne Xin Zhao and Ji-rong Wen},
  booktitle={Conference on Empirical Methods in Natural Language Processing},
  year={2023},
  url={https://api.semanticscholar.org/CorpusID:258740697}
}

@article{Fu2023MMEAC,
  title={{MME}: A Comprehensive Evaluation Benchmark for Multimodal Large Language Models},
  author={Chaoyou Fu and Peixian Chen and Yunhang Shen and Yulei Qin and Mengdan Zhang and Xu Lin and Zhenyu Qiu and Wei Lin and Jinrui Yang and Xiawu Zheng and Ke Li and Xing Sun and Rongrong Ji},
  journal={ArXiv},
  year={2023},
  volume={abs/2306.13394},
  url={https://api.semanticscholar.org/CorpusID:259243928}
}

@article{Li2023SEEDBenchBM,
  title={SEED-Bench: Benchmarking Multimodal {LLM}s with Generative Comprehension},
  author={Bohao Li and Rui Wang and Guangzhi Wang and Yuying Ge and Yixiao Ge and Ying Shan},
  journal={ArXiv},
  year={2023},
  volume={abs/2307.16125},
  url={https://api.semanticscholar.org/CorpusID:260334888}
}

@article{Ghosh2023GenEvalAO,
  title={GenEval: An Object-Focused Framework for Evaluating Text-to-Image Alignment},
  author={Dhruba Ghosh and Hanna Hajishirzi and Ludwig Schmidt},
  journal={ArXiv},
  year={2023},
  volume={abs/2310.11513},
  url={https://api.semanticscholar.org/CorpusID:264288728}
}

@article{Duan2024VLMEvalKitAO,
  title={VLMEvalKit: An Open-Source ToolKit for Evaluating Large Multi-Modality Models},
  author={Haodong Duan and Junming Yang and Yu Qiao and Xinyu Fang and Lin Chen and Yuan Liu and Xiao-wen Dong and Yuhang Zang and Pan Zhang and Jiaqi Wang and Dahua Lin and Kai Chen},
  journal={Proceedings of the 32nd ACM International Conference on Multimedia},
  year={2024},
  url={https://api.semanticscholar.org/CorpusID:271218736}
}

@article{Liu2023MMBenchIY,
  title={MMBench: Is Your Multi-modal Model an All-around Player?},
  author={Yuanzhan Liu and Haodong Duan and Yuanhan Zhang and Bo Li and Songyang Zhang and Wangbo Zhao and Yike Yuan and Jiaqi Wang and Conghui He and Ziwei Liu and Kai Chen and Dahua Lin},
  journal={ArXiv},
  year={2023},
  volume={abs/2307.06281},
  url={https://api.semanticscholar.org/CorpusID:259837088}
}

@article{Hudson2019GQAAN,
  title={GQA: A New Dataset for Real-World Visual Reasoning and Compositional Question Answering},
  author={Drew A. Hudson and Christopher D. Manning},
  journal={2019 IEEE/CVF Conference on Computer Vision and Pattern Recognition (CVPR)},
  year={2019},
  pages={6693-6702},
  url={https://api.semanticscholar.org/CorpusID:152282269}
}

@article{Yue2023MMMUAM,
  title={MMMU: A Massive Multi-Discipline Multimodal Understanding and Reasoning Benchmark for Expert AGI},
  author={Xiang Yue and Yuansheng Ni and Kai Zhang and Tianyu Zheng and Ruoqi Liu and Ge Zhang and Samuel Stevens and Dongfu Jiang and Weiming Ren and Yuxuan Sun and Cong Wei and Botao Yu and Ruibin Yuan and Renliang Sun and Ming Yin and Boyuan Zheng and Zhenzhu Yang and Yibo Liu and Wenhao Huang and Huan Sun and Yu Su and Wenhu Chen},
  journal={2024 IEEE/CVF Conference on Computer Vision and Pattern Recognition (CVPR)},
  year={2023},
  pages={9556-9567},
  url={https://api.semanticscholar.org/CorpusID:265466525}
}

@article{Liu2025FlowGRPOTF,
  title={Flow-{GRPO}: Training Flow Matching Models via Online RL},
  author={Jie Liu and Gongye Liu and Jiajun Liang and Yangguang Li and Jiaheng Liu and Xintao Wang and Pengfei Wan and Di Zhang and Wanli Ouyang},
  journal={ArXiv},
  year={2025},
  volume={abs/2505.05470},
  url={https://api.semanticscholar.org/CorpusID:278394068}
}

@article{Yu2023MMVetEL,
  title={MM-Vet: Evaluating Large Multimodal Models for Integrated Capabilities},
  author={Weihao Yu and Zhengyuan Yang and Linjie Li and Jianfeng Wang and Kevin Lin and Zicheng Liu and Xinchao Wang and Lijuan Wang},
  journal={ArXiv},
  year={2023},
  volume={abs/2308.02490},
  url={https://api.semanticscholar.org/CorpusID:260611572}
}

@article{Tong2024MetaMorphMU,
  title={MetaMorph: Multimodal Understanding and Generation via Instruction Tuning},
  author={Shengbang Tong and David Fan and Jiachen Zhu and Yunyang Xiong and Xinlei Chen and Koustuv Sinha and Michael Rabbat and Yann LeCun and Saining Xie and Zhuang Liu},
  journal={ArXiv},
  year={2024},
  volume={abs/2412.14164},
  url={https://api.semanticscholar.org/CorpusID:274823104}
}

@article{Reid2024Gemini1U,
  title={Gemini 1.5: Unlocking multimodal understanding across millions of tokens of context},
  author={Machel Reid et al.},
  journal={ArXiv},
  year={2024},
  volume={abs/2403.05530},
  url={https://api.semanticscholar.org/CorpusID:268297180}
}

@article{Wang2024ILLUMEIY,
  title={ILLUME: Illuminating Your {LLM}s to See, Draw, and Self-Enhance},
  author={Chunwei Wang and Guansong Lu and Junwei Yang and Runhu Huang and Jianhua Han and Lu Hou and Wei Zhang and Hang Xu},
  journal={ArXiv},
  year={2024},
  volume={abs/2412.06673},
  url={https://api.semanticscholar.org/CorpusID:274596974}
}

@article{Qu2024TokenFlowUI,
  title={TokenFlow: Unified Image Tokenizer for Multimodal Understanding and Generation},
  author={Liao Qu and Huichao Zhang and Yiheng Liu and Xu Wang and Yi Jiang and Yiming Gao and Hu Ye and Daniel K. Du and Zehuan Yuan and Xinglong Wu},
  journal={2025 IEEE/CVF Conference on Computer Vision and Pattern Recognition (CVPR)},
  year={2024},
  pages={2545-2555},
  url={https://api.semanticscholar.org/CorpusID:274465079}
}

@inproceedings{
yimingdefog,
title={De{F}o{G}: Discrete Flow Matching for Graph Generation},
author={Yiming Qin and Manuel Madeira and Dorina Thanou and Pascal Frossard},
booktitle={Forty-second International Conference on Machine Learning},
year={2025}
}

@article{Nie2025LargeLD,
  title={Large Language Diffusion Models},
  author={Shen Nie and Fengqi Zhu and Zebin You and Xiaolu Zhang and Jingyang Ou and Jun Hu and Jun Zhou and Yankai Lin and Ji-Rong Wen and Chongxuan Li},
  journal={NeurIPS},
  year={2025},
}

@article{RectorBrooks2024SteeringMD,
  title={Steering Masked Discrete Diffusion Models via Discrete Denoising Posterior Prediction},
  author={Jarrid Rector-Brooks and Mohsin Hasan and Zhangzhi Peng and Zachary Quinn and Cheng-Hao Liu and Sarthak Mittal and Nouha Dziri and Michael M. Bronstein and Yoshua Bengio and Pranam Chatterjee and Alexander Tong and Avishek Joey Bose},
  journal={ICLR},
  year={2025}
}
\bibliographystyle{iclr2026_conference}

\clearpage
\appendix
\section{Sampling Algorithms}\label{app:sampling}

Note that the transition probability (\Eqref{eq:transition probability}) is valid only if $h\leq\frac{1}{\abs{u_t^q(x,x)}}$ for each $x\in\mc{S}^\mc{D}$, when we just use its Euler discretization to generate samples. To remove this constraint, following the always-valid sampling procedure introduced in \cite{shaul2024flow}, given the current state $\rvx_t$, for each $d\in\brac{\mc{D}}$, we first sample $\rvx_1^d$ from the learned posterior $q_{1|t}^{d,\theta}(\cdot|\rvx_t)$, and then sample $\rvx_{t+h}^d$ from
\begin{align}\label{eq:valid sampling}
    e^{hu_t^{q,d}(\rvx_t^d,\rvx_t^d|\rvx_1^d)}\delta_{\rvx_t^d}(\cdot)-(1-e^{hu_t^{q,d}(\rvx_t^d,\rvx_t^d|\rvx_1^d)})\frac{u_t^{q,d}(\cdot,\rvx_t^d|\rvx_1^d)}{u_t^{q,d}(\rvx_t^d,\rvx_t^d|\rvx_1^d)}(1-\delta_{\rvx_t^d}(\cdot)),
\end{align}
where we use the first-order approximation
\begin{align*}
    e^{hu_t^{q,d}(\rvx_t^d,\rvx_t^d|\rvx_1^d)}=1+hu_t^{q,d}(\rvx_t^d,\rvx_t^d|\rvx_1^d)+o(h).
\end{align*}
\begin{algorithm}[htbp]
\caption{Sampling without Guidance}
\label{alg:sampling without guidance}
\begin{algorithmic}[1]
\Require pretrained posterior $q_{1|t}$, conditional transition rate $u_t^q(z,x|x_1)$, initial value $x_0$, step size $h$
\State $t \gets 0$
\State $\rvx_t \gets x_0$
\While{$t+h < 1$}
  \For{$d = 1,\dotsc,\mc{D}$} \Comment{\textcolor{brown}{in parallel}}
  \State Calculate $q^{d,\theta}_{1|t}(x_1^d|\rvx_t)$
    \State Sample $\rvx^{d}_{1} \sim q^{d,\theta}_{1\mid t}\!\left(\,\cdot \mid \rvx_t\right)$
    \State $\lambda^{d} \gets \sum_{s\neq \rvx_t^d}u^{q,d}_{t}(s,\rvx^{d}_{t}| \rvx^{d}_{1})$
    \State Sample $Z^{d}_{\text{jump}} \sim U[0,1]$
    \If{$Z^{d}_{\text{jump}} \le 1 - e^{-h\lambda^{d}}$}
      \State Sample $\rvx^{d}_{t} \sim
      \dfrac{u^{q,d}_{t}(\cdot,\rvx^{d}_{t}| \rvx^{d}_{1})}{\lambda^{d}}
      \left(1-\delta_{\rvx^{d}_{t}}(\cdot)\right)$
    \EndIf
  \EndFor
  \State $t \gets t + h$
\EndWhile
\State $t \gets t - h$
\For{$d = 1,\dotsc,\mc{D}$} \Comment{\textcolor{brown}{in parallel}}

    \State Sample $\rvx^{d}_{1} \sim q^{d,\theta}_{1\mid t}\!\left(\,\cdot \mid \rvx_t\right)$
\EndFor
\State \Return $\rvx_1$
\end{algorithmic}
\end{algorithm}

\begin{algorithm}[htbp]
\caption{Sampling with Posterior-Based Guidance}
\label{alg:sampling with posterior-based guidance}
\begin{algorithmic}[1]
\Require pretrained posterior $p_{1|t}$, conditional transition rate $u_t^{p,d}(z,x|x_1)$, initial value $x_0$, posterior-based guidance $H_t^\theta$, step size $h$
\State $t \gets 0$
\State $\rvx_t \gets x_0$
\While{$t+h < 1$}
  \For{$d = 1,\dotsc,\mc{D}$} \Comment{\textcolor{brown}{in parallel}}
  \State \textcolor{orange}{Calculate the guided posterior $q^{d,\theta}_{1|t}(x_1^d|\rvx_t)\propto H_t^\theta(\rvx_t)_{d,x_1^d}p^d_{1|t}(x_1^d|\rvx_t)$}
    \State Sample $\rvx^{d}_{1} \sim q^{d,\theta}_{1\mid t}\!\left(\,\cdot \mid \rvx_t\right)$
    \State $\lambda^{d} \gets \sum_{s\neq \rvx_t^d}u^{p,d}_{t}(s,\rvx^{d}_{t}| \rvx^{d}_{1})$
    \State Sample $Z^{d}_{\text{jump}} \sim U[0,1]$
    \If{$Z^{d}_{\text{jump}} \le 1 - e^{-h\lambda^{d}}$}
      \State Sample $\rvx^{d}_{t} \sim
      \dfrac{u^{p,d}_{t}(\cdot,\rvx^{d}_{t}| \rvx^{d}_{1})}{\lambda^{d}}
      \left(1-\delta_{\rvx^{d}_{t}}(\cdot)\right)$
    \EndIf
  \EndFor
  \State $t \gets t + h$
\EndWhile
\State $t \gets t - h$
\For{$d = 1,\dotsc,\mc{D}$} \Comment{\textcolor{brown}{in parallel}}
\State \textcolor{orange}{Calculate the guided posterior $q^{d,\theta}_{1|t}(x_1^d|\rvx_t)\propto H_t^\theta(\rvx_t)_{d,x_1^d}p^d_{1|t}(x_1^d|\rvx_t)$}
    \State Sample $\rvx^{d}_{1} \sim q^{d,\theta}_{1\mid t}\!\left(\,\cdot \mid \rvx_t\right)$
\EndFor
\State \Return $\rvx_1$
\end{algorithmic}
\end{algorithm}

\section{Additional Related Works}\label{appendix:additional related works}

\noindent{\bf Discrete Flow-based Models.}
Flow-based models for modeling discrete data were introduced by \cite{campbell2024generative,gat2024discrete,shaul2024flow,holderrieth2024generator,yimingdefog}. Compared to discrete diffusion models \citep{austin2021structured,campbell2022continuous,sun2022score,lou2023discrete}, the main advantage of discrete flow models is the flexibility of designing the conditional probability path and the conditional transition rate without specifying corruption transition rate; while discrete diffusion models require choosing a forward transition rate with a simple form such that the conditional probability path can be computed easily. Similar to the framework of continuous flow models \citep{albergo2022building,liu2022flow,lipman2022flow}, discrete flow models focus on the conditional path with probability interpolation, which is a convex combination of conditional probabilities. \cite{shaul2024flow} proposed the mixture probability path with $x_1$-dependent schedulers and the metric-induced path; the training objective they used is the negative ELBO instead of the cross entropy used in \cite{campbell2024generative} and \cite{gat2024discrete}. Recently, \cite{havasi2025editflowsflowmatching} introduced a novel discrete flow matching framework with auxiliary process, which supports variable-length sequence generation. 

\noindent{\bf Discrete Diffusion Models.}
Motivated by studies of discrete-time diffusion models in continuous state spaces \citep{sohl2015deep,ho2020denoising,song2020denoising,karras2022elucidating}, \cite{austin2021structured} and \cite{hoogeboom2021argmax} proposed a discrete-time framework for training diffusion models in discrete state spaces. Similarly, several authors have developed continuous-time discrete diffusion models through continuous-time Markov chains \citep{campbell2022continuous,benton2024denoising}. \cite{sun2022score} proposed discrete score matching based on the idea of continuous score matching for modeling continuous-time diffusion models \citep{song2020score}. \cite{meng2022concrete,lou2023discrete} introduced concrete score matching, which aims to identify the time-reversal of the noising process like \cite{sun2022score}. Empirically, the masked diffusion models \citep{shi2024simplified,sahoo2024simple,ou2024your} outperform the diffusion models with other noise distributions, such as uniform distribution. Moreover, for masked diffusion models, \cite{ou2024your} showed that the training objective is equivalent to that of any-order autoregressive models \citep{hoogeboom2021autoregressive,shih2022training}, which is also equivalent to the ELBO for discrete flow \citep[see Appendix D.1 in][]{shaul2024flow}. The main advantage of discrete diffusion models over (any-order) autoregressive models is parallel sampling. However, supporting variable-length sequence generation is considerably more challenging than in autoregressive models. To overcome this limitation, semi-autoregressive models \citep{nie2025large,arriola2025block} perform block-wise sampling and enable parallel sampling within each block, thus supporting flexible-length sampling.

\noindent{\bf Guidance.}
Classifier guidance \citep{dhariwal2021diffusion} and classifier-free guidance \citep{ho2022classifier,zheng2023guided} are proposed for conditional generation. \cite{schiff2024simple} explored the classifier guidance tailored to discrete diffusion models by decomposing the reversed transition probability for practical purposes. \cite{nie2024scaling} applied classifier-free guidance to masked diffusion models for unsupervised pretraining. In this paper, the posterior-based guidance provides a unified perspective for the existing guidance in previous work, including classifier guidance \cite{dhariwal2021diffusion}, energy-weighted guidance \cite{lu2023contrastive}, guidance for transfer learning \citep{ouyang2024transfer}, and flow matching guidance \citep{feng2025guidance} in continuous state space. See discussion in \cref{discuss:unifiying guidance}. Moreover, thanks to the token-wise transition rate, our discrete guidance only requires training one network, unlike the training-based continuous guidance proposed by \citep{feng2025guidance}, which requires estimating a normalizing constant by learning a surrogate model.

\noindent{\bf Diffusion-based and Flow-based Large Language Models.}
Recently, a number of studies have focused on developing large language diffusion models. Several post-training approaches have been proposed to fine-tune discrete diffusion models aligned with human preference in a way similar to autoregressive models via RLHF \citep{ouyang2022training} or DPO \citep{rafailov2023direct} techniques. LLaDa \citep{nie2025large} leverages masked diffusion models and employs negative ELBO for supervised fine-tuning. It supports variable-length sentence generation by semi-autoregressive, and utilizes a deterministic low-confidence remasking technique to enhance performance similar to MaskGIT \citep{chang2022maskgit}. LLaDa 1.5 \citep{zhu2025llada} applied DPO to LLaDa, using an empirical estimator with variance reduction techniques to approximate the log-probability. To extend GRPO to masked diffusion models, d1-LLaDa \citep{zhao2025d1} proposes to estimate log-probability via mean-field approximation and to do one-step unmasking to estimate token-wise log-probability with partially masked prompts. To learn from multi-step denoising information, MMaDa \cite{yang2025mmada} perturbs sentence outcomes instead of prompts, and conducts policy optimization with a GRPO-type objective, which supports multi-modality. For the large language model based on discrete flow matching, \cite{wang2025fudoki} developed a multi-modal model with a metric-induced probability path and a kinetic-optimal conditional rate \citep{shaul2024flow}, achieving highly competitive performance compared to advanced AR-based multi-modal models.

\section{Background of Continuous-Time Markov Chains}
\label{sec:background CTMC}
Continuous-time Markov chains (CTMCs) are a class of continuous-time Markov processes on discrete state spaces. Given a discrete state space $\mc{S}^\mc{D}$, a stochastic process $\rvx_t$ is CTMC if (a) it has Markov property, that is, given the information of current time, the future and history are independent; (b) given the current time $t$ and state $x\in\mc{S}^\mc{D}$, the transition probability is
\begin{align*}
    \P(\rvx_{t+h}=z|\rvx_{t}=x)=\delta_x(z)+u_t(z,x)h+o(h),
\end{align*}
where $h$ is the step size, and $(u_t(z,x))_{(z,x)\in\mc{S}^\mc{D}\times\mc{S}^\mc{D}}$ is a (time-dynamic) transition rate matrix satisfying
\begin{align*}
    u_t(z,x)\ge0,~\text{ for any }~z\neq x,~\text{ and }~\sum_{z\in\mc{S}^\mc{D}}u_t(z,x)=0~\text{ for any }~x\in\mc{S}^\mc{D}.
\end{align*}
Intuitively, the transition rate $u_t(z,x)$ measures the intensity of transition from the current state $x$ to the target state $z$ at time $t$. This CTMC framework underlies recent advances in generative modeling. Building on the framework of CTMCs, the discrete diffusion models \citep{austin2021structured,campbell2022continuous,sun2022score,lou2023discrete} and the discrete flow models \citep{campbell2024generative,gat2024discrete,shaul2024flow,yimingdefog} can transport a simple distribution like uniform distribution or just a point mass to a complex data distribution through a sequence of jumps.

\section{Derivation of the guidance}
\subsection{Proof of Proposition 3.1 of \citep{campbell2024generative}}
\label{appendix:prop}
We include the proof of Proposition 3.1 of \citep{campbell2024generative} here for completion. We begin by taking the expectation over $p_{1}$ on both sides of the Kolmogorov equation for $p_{t| 1}(x | x_1)$ and $u_t(x_t,z | x_1)$. 

\begin{align*}
\partial_t p_t(x) 
&= \mathbb{E}_{\rvx\sim p_{1}(\rvx_1)}\big[\partial_t p_{t|1}(x |\rvx_1)\big] 
\quad{\small\text{(definition of $p_t$ as marginal over $x_1$)}} \\
&= \mathbb{E}_{\rvx_1\sim p_{1}(x_1)}\big[\sum_z u_t(x, z | \rvx_1) \, p_{t|1}(z | \rvx_1)\big]
\quad{\small\text{(Kolmogorov forward equation)}} \\
&= \sum_z \sum_{x_1} p_{1}(x_1)\,p_{t|1}(z | x_1)\,u_t(x,z| x_1) \\
&= \sum_z \sum_{x_1} p_t(z)\,p_{1|t}(x_1| z)\,u_t(x,z| x_1) 
\quad{\small\text{(Bayes' rule)}} \\
&= \sum_z \mathbb{E}_{\rvx\sim p_{1|t}(\rvx_1 | z)}\big[u_t(x,z| \rvx_1)\big]\,p_t(z).
\end{align*}

We observe that the final expression corresponds to the Kolmogorov equation for a continuous-time Markov chain (CTMC) with marginals $p_t(x)$ and transition rate 
$\mathbb{E}_{\rvx_1\sim p_{1 | t}(\rvx_1 | x)}[u_t(z,x| \rvx_1)]$. 
This demonstrates that $\mathbb{E}_{\rvx_1\sim p_{1 | t}(\rvx_1 | x)}[u_t(z,x| \rvx_1)]$ generates $p_t(x)$.

\subsection{\texorpdfstring{Proof of \cref{thm:guidance}}{}} \label{ap:proof_thm1}
\begin{proof}
By Bayes' formula and the condition $q_{t|1}=p_{t|1}$, we have
\begin{equation}\label{eq:basic formula}
\begin{aligned}
    q_{1|t}(z|x)=&~\frac{q_{t|1}(x|z)q_1(z)}{q_t(x)}=\frac{p_{t|1}(x|z)q_1(z)}{q_t(x)}=\frac{p_{t|1}(x|z)p_1(z)q_1(z)p_t(x)}{p_t(x)q_t(x)p_1(z)}=\frac{q_1(z)p_t(x)}{q_t(x)p_1(z)}p_{1|t}(z|x)\\
    =&~\frac{\frac{q_1(z)}{p_1(z)}}{\sum_{x_1}\frac{q_t(x)}{p_t(x)}q_{1|t}(x_1|x)} p_{1|t}(z|x)=\frac{\frac{q_1(z)}{p_1(z)}}{\sum_{x_1}\frac{q_1(x_1)}{p_1(x_1)}p_{1|t}(x_1|x)} p_{1|t}(z|x)\\
    =&~\frac{r(z)}{\E_{\rvx_1\sim p_{1|t}(\rvx_1|x)}\brac{r(\rvx_1)}}p_{1|t}(z|x),
\end{aligned}
\end{equation}
which implies that
\begin{align*}
    q_{1|t}(z|x)=\frac{r(z)}{\E_{\rvx_1\sim p_{1|t}(\rvx_1|x)}\brac{r(\rvx_1)}}p(\rvx_1^{\bsl d}=z^{\bsl d}|\rvx_t=x,\rvx_1^d=z^d)p^d_{1|t}(z^d|x).
\end{align*}
The result \eqref{eq:guidance} follows by taking summation over $z^{\bsl d}\in\mc{S}^{\mc{D}-1}$ for each $d\in\brac{\mc{D}}$.

For classifier guidance \eqref{eq:classifier guidance}, setting $q_1(\cdot)=p_1(\cdot|y)$ in the previous equation, we can obtain that
\begin{align*}
    p_{1|t}(z|x,y)=&~\frac{p_1(z|y)p_t(x)}{p_t(x|y)p_1(z)}p_{1|t}(z|x)=\frac{p(y|\rvx_1=z)}{p(y|\rvx_t=x)}p_{1|t}(z|x)\\
    =&~\frac{p(y|\rvx_1=z)}{p(y|\rvx_t=x)}p(\rvx_1^{\bsl d}=z^{\bsl d}|\rvx_t=x,\rvx_1^d=z^d)p^d_{1|t}(z^d|x).
\end{align*}
Thus, it suffices to show that $p(y|\rvx_1=z)=p(y|\rvx_1=z,\rvx_t=x)$. To see this, note that
\begin{align*}
    p(y|\rvx_1=z)=&~\frac{ p(\rvx_1=z,\rvy=y)}{p_1(z)}=\frac{p(\rvx_t=x|\rvx_1=z)p(\rvx_1=z,\rvy=y)}{p(\rvx_1=z,\rvx_t=x)}\\
    =&~\frac{p(\rvx_t=x|\rvx_1=z,\rvy=y)p(\rvx_1=z,\rvy=y)}{p(\rvx_1=z,\rvx_t=x)}=p(y|\rvx_1=z,\rvx_t=x),
\end{align*}
which completes the proof.
\end{proof}

\subsection{\texorpdfstring{Proof of \cref{thm:diffusion guidance}}{}}\label{ap:proof_thm2}
\begin{proof}
By Bayes' formula and the condition $q_{t|t+h}=p_{t|t+h}$, we have
\begin{equation}\label{eq:rate basic formula}
    \begin{aligned}
    q_{t+h|t}(z|x)=&~\frac{q_{t|t+h}(x|z)q_{t+h}(z)}{q_t(x)}=\frac{p_{t|t+h}(x|z)q_{t+h}(z)}{q_t(x)}=\frac{p_{t|t+h}(x|z)p_{t+h}(z)q_{t+h}(z)p_t(x)}{p_t(x)q_t(x)p_{t+h}(z)}\\
    =&~\frac{q_{t+h}(z)p_t(x)}{q_t(x)p_{t+h}(z)}p_{t+h|t}(z|x).
\end{aligned}
\end{equation}
By \eqref{eq:transition probability}, we can obtain that
 \begin{align*}
    q_{t+h|t}(z|x)=&~\frac{\sum_{x_t}\set{\delta_{z}(x_t)+u_t^q(z,x_t)h+o(h)}q_t(x_t)p_t(x)}{\sum_{x_t}\set{\delta_{z}(x_t)+u_t^p(z,x_t)h+o(h)}p_t(x_t)q_t(x)}\parenBig{\delta_x(z)+u_t^p(z,x)h+o(h)}\\
    =&~\frac{q_t(z)p_t(x)}{p_t(z)q_t(x)}\parenBig{\delta_x(z)+u_t^p(z,x)h+o(h)}\\
    =&~\delta_x(z)+\frac{q_t(z)p_t(x)}{p_t(z)q_t(x)}u_t^p(z,x)h+o(h)\\
    =&~\delta_x(z)+\frac{\E_{\rvx_1\sim p_{1|t}(\rvx_1|z)}\brac{r(\rvx_1)}}{\E_{\rvx_1\sim p_{1|t}(\rvx_1|x)}\brac{r(\rvx_1)}}u_t^p(z,x)h+o(h),
\end{align*}
where the last equation we use the same trick as the proof of \cref{thm:guidance}. Then, the transition rate of the probability path $q_t$ is
\begin{align*}
    u^q(z,x)=\lim_{h\to0^+}\frac{q_{t+h|t}(z|x)-\delta_x(z)}{h}=\frac{\E_{\rvx_1\sim p_{1|t}(\rvx_1|z)}\brac{r(\rvx_1)}}{\E_{\rvx_1\sim p_{1|t}(\rvx_1|x)}\brac{r(\rvx_1)}}u_t^p(z,x).
\end{align*}

For classifier guidance, considering $q_1(x_1)=p_1(x|y)$, we have
\begin{align*}
    u^q(z,x)=u^p(z,x|y)=\frac{p_t(z|y)p_t(x)}{p_t(z)p_t(x|y)}u_t^p(z,x)=\frac{p(y|\rvx_t=z)}{p(y|\rvx_t=x)}u_t^p(z,x),
\end{align*}
which completes the proof.
\end{proof}


\section{In-Depth Discussions of the Proposed Guidance Framework} \label{ap:indepth}

\subsection{Unifying Guidance through Marginalization Trick}\label{discuss:unifiying guidance}
In this subsection, we provide the guidance formulation for continuous score matching, continuous flow matching, concrete score matching, and discrete flow matching. We assume that the conditional probability path of the source data distribution is the same as that of the target data distribution.

\subsubsection{Continuous Score Matching}
We will use the convention of using $t=0$ for data distribution and $t=T$ for noise in continuous diffusion models. With a little abuse of notation, denote $r(z)=\frac{q_0(z)}{p_0(z)}$. From \eqref{eq:basic formula}, the score function for sampling from $q_0$ is
\begin{align*}
    \nabla_{x}\log q_t(x)=&~\E_{\rvx_0\sim q_{0|t}(\rvx_0|x)}\brac{\nabla_x \log q_{t|0}(x|\rvx_0)}\\
    =&~\int_{x_0}q_{0|t}(x_0|x)\nabla_x\log q_{t|0}(x|x_0)\dd x_0\\
    =&~\int_{x_0}\frac{r(x_0)}{\E_{\rvx_0\sim p_{0|t}(\rvx_0|x)}\brac{r(\rvx_0)}}p_{0|t}(x_0|x)\nabla_x\log p_{t|0}(x|x_0)\dd x_0\\
    =&~\nabla_x\log p_t(x)+\int_{x_0}\frac{r(x_0)-\E_{\rvx_0\sim p_{0|t}(\rvx_0|x)}\brac{r(\rvx_0)}}{\E_{\rvx_0\sim p_{0|t}(\rvx_0|x)}\brac{r(\rvx_0)}}p_{0|t}(x_0|x)\nabla_x\log p_{t|0}(x|x_0)\dd x_0\\
    =&~\nabla_x\log p_t(x)+\int_{x_0}\frac{\frac{q_0(x_0)}{p_0(x_0)}-\frac{q_t(x)}{p_t(x)}}{\E_{\rvx_0\sim p_{0|t}(\rvx_0|x)}\brac{r(\rvx_0)}}\cdot\frac{p_0(x_0)\nabla_x p_{t|0}(x|x_0)}{p_t(x)}\dd x_0\\
    =&~\nabla_x\log p_t(x)+\frac{\frac{\nabla_x p_q(x)}{p_t(x)}-\frac{q_t(x)\nabla_x p_t(x)}{p_t(x)^2}}{\E_{\rvx_0\sim p_{0|t}(\rvx_0|x)}\brac{r(\rvx_0)}}\\
    =&~\nabla_x\log p_t(x)+\nabla_x\log \E_{\rvx_0\sim p_{0|t}(\rvx_0|x)}\brac{r(\rvx_0)},
\end{align*}
where we use the facts $q_{t|0}=p_{t|0}$ and $\E_{\rvx_0\sim p_{0|t}(\rvx_0|x)}\brac{r(\rvx_0)}=\frac{q_t(x)}{p_t(x)}$. This result recovers the guidance given by \cite{ouyang2024transfer}. By setting $q_t(x)=p_t(x|y)$, we also recover the classifier guidance $\nabla_x\log q_t(x)=\nabla_x \log p_t(x)+\nabla_x\log p(y|\rvx_t=x)$.

\subsubsection{Continuous Flow Matching}
Denote $v_t(x)$ and $v_t(x\mid x_1)$ as the marginal and conditional velocity fields, respectively. 
Assuming the conditional kernels satisfy $q_{t\mid 1}(x\mid x_1)=p_{t\mid 1}(x\mid x_1)$, 
and defining $r(x_1):=q_1(x_1)/p_1(x_1)$, we have
\begin{align*}
    v_t^q(x)
    &= \E_{\rvx_1\sim q_{1\mid t}(\rvx_1\mid x)}\brac{v_t(x\mid\rvx_1)}\\
    &= \E_{\rvx_1\sim p_{1\mid t}(\rvx_1\mid x)}
       \brac{v_t(x\mid\rvx_1)
       \frac{q_{1\mid t}(\rvx_1\mid x)}{p_{1\mid t}(\rvx_1\mid x)}}\\
    &= \E_{\rvx_1\sim p_{1\mid t}(\rvx_1\mid x)}
       \brac{v_t(x\mid\rvx_1)
       \frac{r(\rvx_1)}{\E_{\rvx_1\sim p_{1\mid t}(\rvx_1\mid x)}[r(\rvx_1)]}}\\
    &= v_t^p(x)
       + \int_{x_1}
         \parenBig{
         \frac{r(x_1)}{\E_{\rvx_1\sim p_{1\mid t}(\rvx_1\mid x)}[r(\rvx_1)]}
         -1}
         v_t(x\mid x_1)\,
         p_{1\mid t}(x_1\mid x)\,
         \dd x_1.
\end{align*}

which coincides the exact guidance \citep{feng2025guidance} for continuous flow matching.

\subsubsection{Discrete Diffusion Models}
Denote the concrete score \citep{meng2022concrete,lou2023discrete} $s_t(x)=\parenBig{\frac{p_t(z)}{p_t(x)}}_{z^d\neq x^d,z^{\bsl d}=x^{\bsl d}}$, which is a $\mc{D}(\abs{\mc{S}}-1)$-dimensional vector. Then, for any $z,x\in\mc{S}^\mc{D}$ satisfying $z^d\neq x^d,z^{\bsl d}=x^{\bsl d}$, we have
\begin{align*}
    s_t^q(x)_z=&~\E_{\rvx_1\sim q_{1|t}(\rvx_1|x)}\bracBig{\frac{q_{t|1}(z|\rvx_1)}{q_{t|1}(x|\rvx_1)}}\\
    =&~\E_{\rvx_1\sim p_{1|t}(\rvx_1|x)}\bracBig{\frac{p_{t|1}(z|\rvx_1)q_{1|t}(\rvx_1|x)}{p_{t|1}(x|\rvx_1)p_{1|t}(\rvx_1|x)}}\\
    =&~s_t^p(x)_z+\sum_{x_1}\parenBig{\frac{r(x_1)}{\E_{\rvx_1\sim p_{1|t}(\rvx_1|x)}\brac{r(\rvx_1)}}-1}\prod_{d=1}^\mc{D}\frac{p^d_{t|1}(z^d|x_1^d)}{p^d_{t|1}(x^d|x_1^d)}p_{1|t}(x_1|x)\\
    =&~s_t^p(x)_z+\sum_{x_1^d}\parenBig{\frac{\E_{\rvx^{\bsl d}_1\sim p(\rvx^{\bsl d}_1|\rvx^d_1=x_1^d,\rvx_t=x)}\brac{r(\rvx_1)}}{\E_{\rvx_1\sim p_{1|t}(\rvx_1|x)}\brac{r(\rvx_1)}}-1}\frac{p^d_{t|1}(z^d|x_1^d)}{p^d_{t|1}(x^d|x_1^d)}p^d_{1|t}(x_1^d|x),
\end{align*}
where we use $t=1$ for the data distribution. Following the discussion in \cref{appendix:comparison of guidance}, the formulation above can be viewed as a special case of discrete flow matching.

\subsubsection{Discrete Flow Matching}
Similar to the previous formulation, we can obtain that
\begin{align*}
    u_t^{q,d}(z^d,x)=u_t^{p,d}(z^d,x)+\sum_{x_1^d}\parenBig{\frac{\E_{\rvx^{\bsl d}_1\sim p(\rvx^{\bsl d}_1|\rvx^d_1=x_1^d,\rvx_t=x)}\brac{r(\rvx_1)}}{\E_{\rvx_1\sim p_{1|t}(\rvx_1|x)}\brac{r(\rvx_1)}}-1}u^{p,d}_t(z^d,x^d|x_1^d)p^d_{1|t}(x_1^d|x).
\end{align*}
To give a specific example, we consider the $x_1$-independent mixture path and its kinetic-optimal transition rate \citep[see Appendix C of][]{shaul2024flow}:
\begin{align*}
    p^d_{t|1}(x^d|x_1^d)=&~q^d_{t|1}(x^d|x_1^d)=\kappa_t\delta_{x_1^d}(x^d)+(1-\kappa_t)p_0(x^d);\\
    u_t^{q,d}(z^d,x^d|x_1^d)=&~u_t^{p,d}(z^d,x^d|x_1^d)=\frac{\dot{\kappa}_t}{1-\kappa_t}(\delta_{x_1^d}(z^d)-\delta_{x^d}(z^d)).
\end{align*}
Then, we have
\begin{align*}
    u_t^{q,d}(z^d,x)=&~u_t^{p,d}(z^d,x)+\frac{\dot{\kappa}_t}{1-\kappa_t}\parenBig{\frac{\E_{\rvx^{\bsl d}_1\sim p(\rvx^{\bsl d}_1|\rvx^d_1=z^d,\rvx_t=x)}\brac{r(\rvx_1)}}{\E_{\rvx_1\sim p_{1|t}(\rvx_1|x)}\brac{r(\rvx_1)}}-1}p^d_{1|t}(z^d|x)\\
    =&~\frac{\E_{\rvx^{\bsl d}_1\sim p(\rvx^{\bsl d}_1|\rvx^d_1=z^d,\rvx_t=x)}\brac{r(\rvx_1)}}{\E_{\rvx_1\sim p_{1|t}(\rvx_1|x)}\brac{r(\rvx_1)}}u_t^{p,d}(z^d,x)\\
    &~+\frac{\dot{\kappa}_t}{1-\kappa_t}\parenBig{\frac{\E_{\rvx^{\bsl d}_1\sim p(\rvx^{\bsl d}_1|\rvx^d_1=z^d,\rvx_t=x)}\brac{r(\rvx_1)}}{\E_{\rvx_1\sim p_{1|t}(\rvx_1|x)}\brac{r(\rvx_1)}}-1}\delta_{x^d}(z^d),
\end{align*}
which implies that $u_t^{q,d}(z^d,x)$ is an affine combination of $u_t^{p,d}(z^d,x)$ and $-\frac{\dot{\kappa}_t}{1-\kappa_t}\delta_{x^d}(z^d)$ in this case.

\subsection{Discrete Guidance for Discrete-Time Framework}\label{discuss: discrete-time discrete guidance}
In this subsection, we discuss the discrete guidance in the settings of the discrete-time Markov chain \citep[e.g., D3PM,][]{austin2021structured}, which is developed by \cite{vignac2022digress,schiff2024simple,li2024derivative}. For convenience, we also use $t=1$ for data distribution and $t=0$ for initial distribution. Assume that $p_{s|t}=q_{s|t}$ for each $s<t$. By \Eqref{eq:rate basic formula}, the transition probability of $q$ in the sampling process is
\begin{align*}
    q_{t+h|t}(z|x)=\frac{q_{t+h}(z)/p_{t+h}(z)}{q_t(x)/p_t(x)}p_{t+h|t}(z|x)=\frac{\E_{\rvx_1\sim p_{1|t+h}(\rvx_1|z)}\brac{r(\rvx_1)}}{\E_{\rvx_1\sim p_{1|t}(\rvx_1|x)}\brac{r(\rvx_1)}}p_{t+h|t}(z|x),
\end{align*}
which is similar to the soft optimal policy introduced in \cite{li2024derivative}. By taking $q_{t+h|t}(z|x)=p_{t+h|t}(z|x,y)$, we recover the classifier guidance in \cite{vignac2022digress} and \cite{schiff2024simple} with unit guidance strength.

Following the proof of \cref{thm:diffusion guidance}, the rate-based guidance can be viewed as a limiting form of the discrete guidance in discrete time settings. In addition, the discrete-time guidance has a similar challenge to the rate-based guidance; that is, it requires multiple function evaluations in each sampling step. For solving this problem, \cite{vignac2022digress,schiff2024simple} employ the first-order approximation in practice. As pointed out in \cref{remark:first order approximation}, such approximation-based approaches are inappropriate for discrete data intuitively.


\subsection{Comparison between Posterior-Based Guidance and Rate-Based Guidance}
\label{appendix:comparison of guidance}
\cref{thm:guidance} and \cref{thm:diffusion guidance} present the discrete guidance based on the posterior and the transition rate, respectively. The later one generalizes the discrete guidance of \cite{nisonoff2024unlocking}, which requires a stronger condition than the posterior-based guidance in \cref{thm:guidance}. To better understand the conditions in both theorems, we will analyze the relationship between discrete diffusion and discrete flow models. In the following, we consider two different states $x\neq z$.

\noindent{\bf Both Guidance Yield the Same Transition Rates in Discrete Diffusion Models.}
Consider a discrete diffusion model with the corruption transition rate $Q^q_t(x,z)$. Here, we assume that $q_1$ is the density of the data distribution. Following \cite{campbell2022continuous}, the time reversal of $Q^q_t(x,z)$ is
\begin{align}\label{eq:relationship between discrete diffusion and flow models}
    u^q_t(z,x)=\sum_{x_1}Q^q_t(x,z)\frac{q_{t|1}(z|x_1)}{q_{t|1}(x|x_1)}q_{1|t}(x_1|x)=\E_{\rvx_1\sim q_{1|t}(\rvx_1|\rvx_t=x)}\bracBig{Q^q_t(x,z)\frac{q_{t|1}(z|\rvx_1)}{q_{t|1}(x|\rvx_1)}}.
\end{align}
Since $u_t^q(z,x|x_1)\overset{\triangle}{=}Q^q_t(x,z)\frac{q_{t|1}(z|x_1)}{q_{t|1}(x|x_1)}$ is a
conditional transition rate that can generate the conditional path $q_{t|1}$ \citep[see Appendix H.1 of][]{campbell2024generative}, the discrete diffusion models can be viewed as a special case of discrete flow models. 
Therefore, the condition $Q^q_t(x,z)=Q^p_t(x,z)$ as in \cref{thm:diffusion guidance} actually implies that we take the same specific conditional transition rate $u_t^p(z,x|x_1)=u_t^q(z,x|x_1)=Q^q_t(x,z)\frac{q_{t|1}(z|x_1)}{q_{t|1}(x|x_1)}$ in discrete flow models. 

\noindent{\bf They Might Offer Different Marginal Transition Rates for Sampling from Target Distribution.}
Suppose that we are given conditional probability path $q_{t|1}=p_{t|1}$ and conditional transition rate $u_t^q(z,x|x_1)=u_t^p(z,x|x_1)$. What we want to know is whether there exists a common corruption transition rate $Q_t(x,z)$ such that $u^q_t(z,x)$ and $u^p_t(z,x)$ are the reverse transition rates of $Q_t(x,z)$ under the data distributions $q_1$ and $p_1$, respectively. If exists, by the time reversal formula \citep[see, e.g.][]{sun2022score,lou2023discrete}, we have
\begin{align*}
    Q_t(x,z)=&~\frac{p_t(x)}{p_t(z)}u^p_t(z,x)=\sum_{x_1}\frac{p_t(x)}{p_t(z)}u^p_t(z,x|x_1)p_{1|t}(x_1|x)\\
    =&~\sum_{x_1}\frac{p_t(x)}{p_t(z)}Q^p_t(x,z|x_1)\frac{p_{t|1}(z|x_1)}{p_{t|1}(x|x_1)}p_{1|t}(x_1|x)\\
    =&~\frac{q_t(x)}{q_t(z)}u^q_t(z,x)=\sum_{x_1}\frac{q_t(x)}{q_t(z)}u^q_t(z,x|x_1)q_{1|t}(x_1|x)\\
    =&~\sum_{x_1}\frac{q_t(x)}{q_t(z)}Q^q_t(x,z|x_1)\frac{q_{t|1}(z|x_1)}{q_{t|1}(x|x_1)}q_{1|t}(x_1|x),
\end{align*}
which implies that
\begin{align*}
    \sum_{x_1}Q^p_t(x,z|x_1)p_{1|t}(x_1|z)=\sum_{x_1}Q^q_t(x,z|x_1)q_{1|t}(x_1|z),
\end{align*}
where $Q^p_t(x,z|x_1)=Q^q_t(x,z|x_1)=u^q_t(z,x|x_1)\frac{q_{t|1}(x|x_1)}{q_{t|1}(z|x_1)}$ is the time reversal of $u_t^q(z,x|x_1)=u_t^p(z,x|x_1)$. However, since the posteriors $p_{1|t}$ and $q_{1|t}$ are different, the above equation generally does not hold, unless $Q^p_t(x,z|x_1)=Q^q_t(x,z|x_1)$ is independent of $x_1$. If we use the rate-based guidance of \cref{thm:diffusion guidance} with the marginal transition rate $u_t^p(z,x)=\E_{\rvx_1\sim p_{1|t}(\rvx_1|x)}\brac{u_t^p(z,x|\rvx_1)}$ that is used to sample from source data distribution, then we implicitly set the corruption rate $Q_t^q(x,z)=Q_t^p(x,z)=\E_{\rvx_1\sim p_{1|t}(\rvx_1|x)}\brac{u_t^p(z,x|\rvx_1)}\frac{p_t(x)}{p_t(z)}$ in discrete diffusion models. In this scenario, the reversed transition rate for generating probability path $q_t$ is
\begin{align*}
    \E_{\rvx_1\sim p_{1|t}(\rvx_1|x)}\brac{u_t^p(z,x|\rvx_1)}\frac{p_t(x)q_t(z)}{p_t(z)q_t(x)},
\end{align*}
which is not equal to the desired transition rate $\E_{\rvx_1\sim q_{1|t}(\rvx_1|x)}\brac{u_t^q(z,x|\rvx_1)}$ which we obtained based on \cref{thm:guidance} in general. The main reason is that $\E_{\rvx_1\sim q_{1|t}(\rvx_1|x)}\brac{u_t^q(z,x|\rvx_1)}$ and $\E_{\rvx_1\sim p_{1|t}(\rvx_1|x)}\brac{u_t^p(z,x|\rvx_1)}$ do not share the same time reversal as mentioned above. Moreover, if we choose a conditional path $u^q_t(z,x|x_1)$ such that $u^q_t(z,x|x_1)\frac{q_{t|1}(x|x_1)}{q_{t|1}(z|x_1)}$ is independent of $x_1$, then these two methods are equivalent. Formally, we have the following proposition.
\begin{proposition}
Assume that $u_t^q(z,x|x_1)\frac{q_{t|1}(x|x_1)}{q_{t|1}(z|x_1)}$ is unrelated to $x_1$ for two states $x\neq z$. Let $Q_t(x,z)=u_t^q(z,x|x_1)\frac{q_{t|1}(x|x_1)}{q_{t|1}(z|x_1)}$ be the transition rate of noising process. Under the same conditions of \cref{thm:guidance}, we have $Q_t(x,z)\frac{p_t(z)}{p_t(x)}=\E_{\rvx_1\sim p_{t|1}(\rvx_1|x)}\brac{u_t^p(z,x|\rvx_1)}$ and $Q_t(x,z)\frac{q_t(z)}{q_t(x)}=\E_{\rvx_1\sim q_{t|1}(\rvx_1|x)}\brac{u_t^q(z,x|\rvx_1)}$, which means that two methods are equivalent.
\end{proposition}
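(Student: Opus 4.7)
The plan is to exploit the stated $x_1$-independence of $Q_t(x,z)$ together with Bayes' rule to collapse the expectation on the right-hand side into the claimed closed form. Under the conditions of \cref{thm:guidance}, the conditional probability paths coincide, $q_{t|1}=p_{t|1}$, and the conditional transition rate is prespecified according to this conditional path; in particular we may take $u_t^p(z,x|x_1)=u_t^q(z,x|x_1)$. Combined with the assumption that $u_t^q(z,x|x_1)\,q_{t|1}(x|x_1)/q_{t|1}(z|x_1)$ does not depend on $x_1$, this gives the clean identity
\begin{equation*}
    u_t^p(z,x|x_1)=Q_t(x,z)\,\frac{p_{t|1}(z|x_1)}{p_{t|1}(x|x_1)},
\end{equation*}
which will be the workhorse of both parts.

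First I would prove the first identity. Expanding the expectation and applying Bayes' rule $p_{1|t}(x_1|x)=p_{t|1}(x|x_1)p_1(x_1)/p_t(x)$, the ratio $p_{t|1}(x|x_1)$ appearing in the numerator from the identity above cancels with its reciprocal in $p_{1|t}(x_1|x)$, leaving
\begin{equation*}
    \E_{\rvx_1\sim p_{1|t}(\rvx_1|x)}\bigl[u_t^p(z,x|\rvx_1)\bigr]
    =\frac{Q_t(x,z)}{p_t(x)}\sum_{x_1}p_{t|1}(z|x_1)\,p_1(x_1)
    =Q_t(x,z)\,\frac{p_t(z)}{p_t(x)},
\end{equation*}
where the last equality uses the marginalization $\sum_{x_1}p_{t|1}(z|x_1)p_1(x_1)=p_t(z)$. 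This is the first claim.

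The second identity follows by repeating exactly the same argument with $p$ replaced by $q$: because $q_{t|1}=p_{t|1}$ and the same $Q_t(x,z)$ equals $u_t^q(z,x|x_1)\,q_{t|1}(x|x_1)/q_{t|1}(z|x_1)$ by hypothesis, the calculation is symmetric and yields $\E_{\rvx_1\sim q_{1|t}(\rvx_1|x)}[u_t^q(z,x|\rvx_1)]=Q_t(x,z)\,q_t(z)/q_t(x)$. The two identities together say that the rate-based guidance from \cref{thm:diffusion guidance} and the marginalized rate obtained from the posterior-based guidance of \cref{thm:guidance} produce the same forward-reversed rate, so the two methods coincide.

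There is no substantive obstacle here; the whole argument is a two-line Bayes' rule manipulation. The only thing worth being careful about is justifying that the conditional rate common to both processes satisfies the factorization $u_t(z,x|x_1)=Q_t(x,z)\,p_{t|1}(z|x_1)/p_{t|1}(x|x_1)$. This is immediate from the hypothesis once one notes that the assumed $x_1$-independent quantity is literally $Q_t(x,z)$ by definition; I would state this explicitly at the beginning so that the subsequent computations are purely mechanical.
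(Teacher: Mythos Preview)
Your proposal is correct and follows essentially the same route as the paper: substitute $u_t^p(z,x|x_1)=Q_t(x,z)\,p_{t|1}(z|x_1)/p_{t|1}(x|x_1)$ into the expectation, use Bayes' rule (equivalently, the concrete-score identity $\sum_{x_1}\frac{p_{t|1}(z|x_1)}{p_{t|1}(x|x_1)}p_{1|t}(x_1|x)=\frac{p_t(z)}{p_t(x)}$) to collapse the sum, and note that the $q$ case is symmetric. The paper's proof is the same computation written slightly more tersely.
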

\begin{proof}
    We only consider the case of $p_t$. Note that
    \begin{align*}
       \E_{\rvx_1\sim p_{t|1}(\rvx_1|x)}\brac{u_t^p(z,x|\rvx_1)}
        =&~\sum_{x_1}u_t^p(z,x|x_1)p_{1|t}(x_1|x)\\
        =&~\sum_{x_1}Q_t^p(x,z)\frac{p_{t|1}(z|x_1)}{p_{t|1}(x|x_1)}p_{1|t}(x_1|x)\\
        =&~Q_t^p(x,z)\frac{p_t(z)}{p_t(x)},
    \end{align*}
    which completes the proof.
\end{proof}

In summary, the condition $Q^q_t(x,z)=Q^p_t(x,z)$ in \cref{thm:diffusion guidance} is more restrictive than \cref{thm:guidance} if our goal is to use the desired transition rate $u_t^q(z,x)=\E_{\rvx_1\sim q_{1|t}(\rvx_1|x)}\brac{u_t^q(z,x|\rvx_1)}$ in the sampling stage. 

\noindent{\bf From Posterior-based Guidance to Rate-based Guidance.}
For completeness, we also give a simple proof of \cref{thm:diffusion guidance} based on the result of \cref{thm:guidance}. Notice that, if $Q^q_t(x,z)=Q^p_t(x,z)$ (which implies that $p_{t|1}=q_{t|1}$), by \eqref{eq:relationship between discrete diffusion and flow models}, we have
\begin{align*}
    \frac{u_t^q(z,x)}{u_t^p(z,x)}=&~\frac{\E_{\rvx_1\sim q_{1|t}(\rvx_1|x)}\bracBig{\frac{q_{t|1}(z|\rvx_1)}{q_{t|1}(x|\rvx_1)}}}{\E_{\rvx_1\sim p_{1|t}(\rvx_1|x)}\bracBig{\frac{p_{t|1}(z|\rvx_1)}{p_{t|1}(x|\rvx_1)}}}=\frac{\E_{\rvx_1\sim p_{1|t}(\rvx_1|x)}\bracBig{\frac{q_{1|t}(\rvx_1|x)q_{t|1}(z|\rvx_1)}{p_{1|t}(\rvx_1|x)q_{t|1}(x|\rvx_1)}}}{\E_{\rvx_1\sim p_{1|t}(\rvx_1|x)}\bracBig{\frac{p_{t|1}(z|\rvx_1)}{p_{t|1}(x|\rvx_1)}}}\\
    =&~\frac{\E_{\rvx_1\sim p_{1|t}(\rvx_1|x)}\bracBig{\frac{q_t(z)q_{1|t}(\rvx_1|z)}{q_t(x)p_{1|t}(\rvx_1|x)}}}{\E_{\rvx_1\sim p_{1|t}(\rvx_1|x)}\bracBig{\frac{p_{t|1}(z|\rvx_1)}{p_{t|1}(x|\rvx_1)}}}=\dfrac{q_t(z)/q_t(x)}{p_t(z)/p_t(x)}=\dfrac{q_t(z)/p_t(x)}{q_t(z)/p_t(x)}\\
    =&~\frac{\E_{\rvx_1\sim p(\rvx_1|z)}\brac{r(\rvx_1)}}{\E_{\rvx_1\sim p(\rvx_1|x)}\brac{r(\rvx_1)}},
\end{align*}
where the third equation we use $\frac{q_{1|t}(\rvx_1|x)}{p_{1|t}(\rvx_1|x)}=\frac{q_1(\rvx_1)p_t(x)}{p_1(\rvx_1)q_t(x)}$ in \eqref{eq:basic formula} and the fourth equation we use the marginalization trick in the concrete score matching \citep{meng2022concrete,lou2023discrete}. This completes the proof.

\subsection{Classification of Discrete Guidance}
As discussed in \cref{discuss: discrete-time discrete guidance}, the rate-based guidance can be interpreted as the limiting case of the discrete guidance in the discrete-time framework, since all of them require the condition: $p_{s|t}(x_s|x_t)=q_{s|t}(x_s|x_t)$ for any $0\leq s<t\leq 1$ (here, we use $t=1$ for data distribution). In contrast, the posterior-based guidance (\cref{thm:guidance}) only imposes the condition of conditional probability paths, offering greater flexibility and interpretability for discrete flow models. Accordingly, discrete guidance can be divided into two categories, as summarized in \cref{table:classification of discrete guidance}.

\begin{table}[htbp]
    \centering
    \caption{\small {Classification of discrete guidance. (Data distribution: $t=1$)}}\label{table:classification of discrete guidance}
    \vskip 0.1in
    \begin{small}
    \resizebox{0.99\textwidth}{!}{%
    \begin{tabular}{lllc}
    \toprule
    Assumption & Time Framework & Discrete Guidance & Target Transition Rate for Discrete Flow Models \\
    \midrule
    \vspace{0.03in}
    $p_{t|1}(x_t|x_1)=q_{t|1}(x_t|x_1)$ for any $0\leq t\leq 1$ 
        &  Continuous
        & Posterior-Based Guidance (\cref{thm:guidance}) 
        & $\E_{\rvx_1\sim q_{1|t}(\rvx_1|x)}\brac{u_t(z,x|\rvx_1)}$\\
    \vspace{0.03in}
    \multirow{2}{*}{$p_{s|t}(x_s|x_t)=q_{s|t}(x_s|x_t)$ for any $0\leq s<t\leq 1$}
        & Continuous 
        & Rate-Based Guidance (\cref{thm:diffusion guidance} \& \cite{nisonoff2024unlocking})
        & $\E_{\rvx_1\sim p_{1|t}(\rvx_1|x)}\brac{u_t(z,x|\rvx_1)}\frac{p_t(x)q_t(z)}{p_t(z)q_t(x)}$\\
    & Discrete 
        & \cite{vignac2022digress}\& \cite{schiff2024simple}\& \cite{li2024derivative} 
        & \textemdash \\
    \bottomrule
    \end{tabular}}
    \end{small}
\end{table}


\subsection{Discussion on the Guidance Strength}\label{discuss:guidance strenth}
In this subsection, we focus on the settings considered in \cite{zheng2023guided,zhang2025energy}. Our goal is to generate data from the guided distribution $p_1^{(\gamma)}(x|y)\propto p_1(x)p^\gamma(y|x)$, where $\gamma\ge 0$ is the guidance strength. Lemma 4.10 in \cite{zhang2025energy} provides the comparison between contrastive energy prediction \cite{lu2023contrastive} and classifier guidance \cite{dhariwal2021diffusion,ho2022classifier}. Here, we compare the rate-based guidance with the predictor guidance \cite{nisonoff2024unlocking} under different guidance strengths.

The guided transition rate with predictor guidance strength $\gamma$ is
\begin{align}\label{eq:predictor guidance with strength}
    u_t^{(\gamma)}(z,x|y)=\bracBig{\frac{p_t(y|z)}{p_t(y|x)}}^\gamma u_t(z,x)
\end{align}
for $z\neq x$. Here, $u_t$ is a transition rate that can generate $p_t(x)$. In general, if $\gamma\neq 1$, the above guided rate $u_t^{(\gamma)}$ cannot generate $p_t^{(\gamma)}(x|y)= \frac{p_t(x)p_t^\gamma(y|x)}{\mc{Z}_t(y; \gamma)}$, where $\mc{Z}_t(y; \gamma)=\sum_x p_t(x)p_t^\gamma(y|x)$. To see this, we try to verify the Kolmogorov forward equation:
\begin{align*}
    &~\sum_{x\neq z}u_t^{(\gamma)}(z,x|y)\frac{p_t(x)p_t^\gamma(y|x)}{\mc{Z}_t(y; \gamma)}-\sum_{x\neq z}u_t^{(\gamma)}(x,z|y)\frac{p_t(z)p_t^\gamma(y|z)}{\mc{Z}_t(y; \gamma)}\\
    =&~\sum_{x\neq z}\bracBig{\frac{p_t(y|z)}{p_t(y|x)}}^\gamma u_t(z,x)\frac{p_t(x)p_t^\gamma(y|x)}{\mc{Z}_t(y; \gamma)}-\sum_{x\neq z}\bracBig{\frac{p_t(y|x)}{p_t(y|z)}}^\gamma u_t(x,z)\frac{p_t(z)p_t^\gamma(y|z)}{\mc{Z}_t(y; \gamma)}\\
    =&~\sum_{x\neq z}u_t(z,x)\frac{p_t(x)p_t^\gamma(y|z)}{\mc{Z}_t(y; \gamma)}-\sum_{x\neq z}u_t(x,z)\frac{p_t(z)p_t^\gamma(y|x)}{\mc{Z}_t(y; \gamma)}\\
    =&~\frac{p_t^\gamma(y|z)}{\mc{Z}_t(y; \gamma)}\partial_t p_t(z)+p_t(z)\sum_{x\neq z}u_t(x,z)\frac{\brac{p_t^\gamma(y|z)-p_t^\gamma(y|x)}}{\mc{Z}_t(y; \gamma)}\\
    \neq&~\frac{p_t^\gamma(y|z)}{\mc{Z}_t(y; \gamma)}\partial_t p_t(z)+p_t(z)\partial_t \bracBig{\frac{p_t^\gamma(y|z)}{\mc{Z}_t(y; \gamma)}}\\
    =&~\partial_t p_t^{(\gamma)}(z|y).
\end{align*}
Therefore, in this case, we cannot sample from $p_1^{(\gamma)}$ with the transition rate $u_t^{(\gamma)}$; for empirical results, see the simulations in \cref{sec:simulation}. When $\gamma=1$, the Kolmogorov forward equation holds, since
\begin{align*}
    &~p_t(z)\partial_t \bracBig{\frac{p_t^\gamma(y|z)}{\mc{Z}_t(y; \gamma)}}=p_t(z)\partial_t\bracBig{\frac{p_t(z|y)}{p_t(z)}}\\
    =&~\partial_tp_t(z|y)-p_t(z|y)\frac{\partial_tp_t(z)}{p_t(z)}\\
    =&~-\sum_xQ_t(z,x)p_t(x|y)+\frac{p_t(z|y)}{p_t(z)}\sum_x Q_t(z,x)p_t(x)\\
    =&~-\sum_{x\neq z}\bracBig{Q_t(z,x)p_t(x|y)-\frac{p_t(z|y)}{p_t(z)} Q_t(z,x)p_t(x)}\\
    =&~p_t(z)\sum_{x\neq z}u_t(x,z)\frac{\brac{p_t(y|z)-p_t(y|x)}}{p(y)},
\end{align*}
where $Q_t$ is the transition rate that can generate $p_t(x|y)$ and $p_t(x)$ from $t=1$ to $t=0$.

In contrast, as demonstrated in \cref{thm:diffusion guidance}, the rate-based guidance
\begin{align}\label{eq:proposed rate-based with strength}
    u_t^{(\gamma)}(z,x|y)=\bracBig{\frac{\E_{p_{1|t}(\rvx_1|z)} p_1^\gamma(y|\rvx_1)}{\E_{p_{1|t}(\rvx_1|x)} p_1^\gamma(y|\rvx_1)}}u_t(z,x)
\end{align}
allows us to sample from the guided distribution $p_1^{(\gamma)}$ with the probability path $p_t^{(\star,\gamma)}(x|y)\propto\sum_{x_1}p_1(x_1)p_1^\gamma(y|x_1)p_{t|1}(x|x_1)$. The difference between the proposed rate-based guidance (\eqref{eq:proposed rate-based with strength}) and the predictor guidance (\eqref{eq:predictor guidance with strength}) in discrete state space is similar to that between the exact guidance and the classifier guidance in continuous state space; see Lemma 4.10 of \cite{zhang2025energy}.

\section{Training and Sampling for rate-based guidance}\label{ap:rate_based}
\begin{figure}[t]
    \centering    \includegraphics[width=\linewidth]{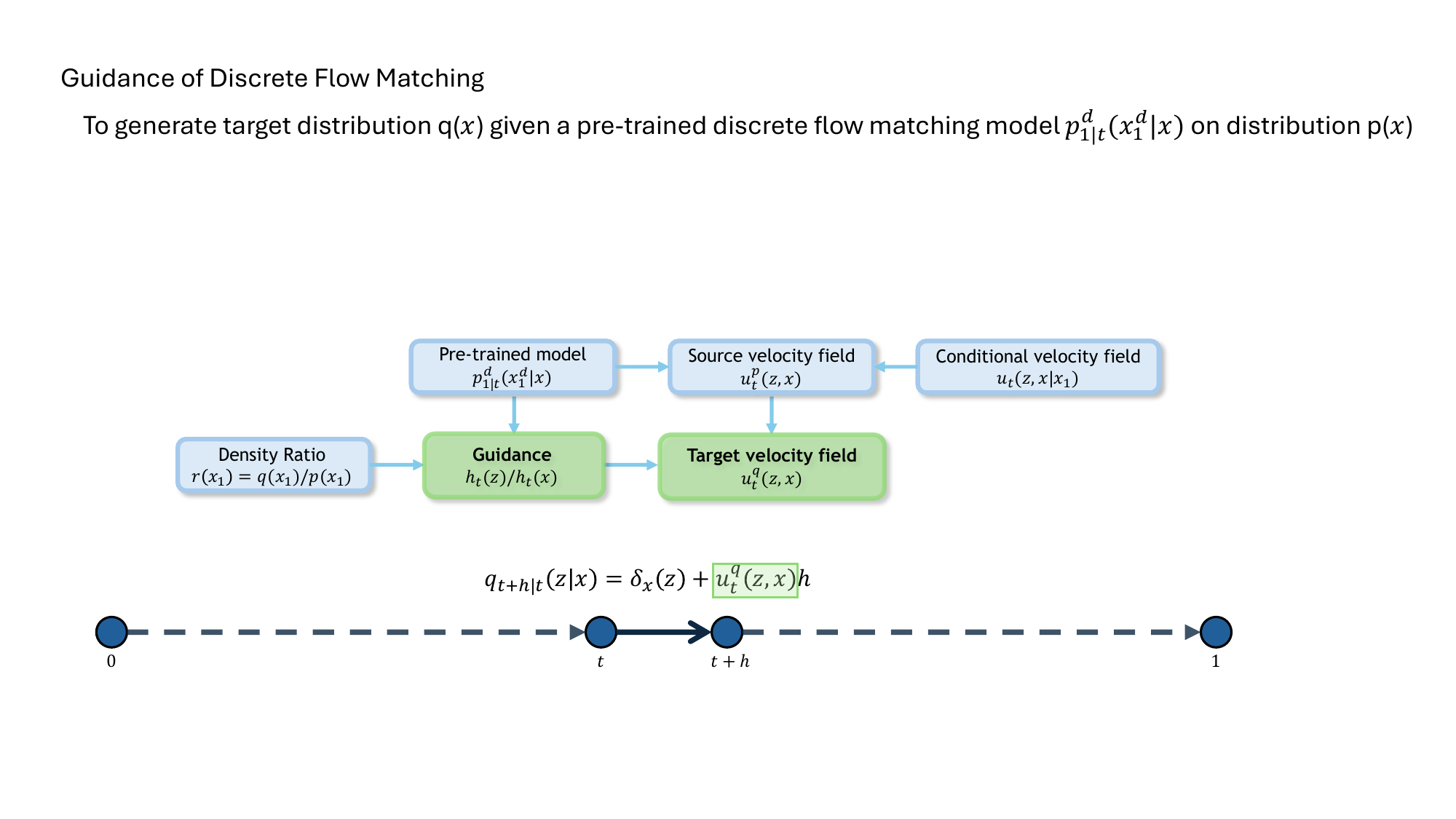}
    \vspace{-0.25in}
    \caption{Framework of the improved rate-based guidance on discrete flow matching. }
    \label{fig:sampling_framework_rate}
\end{figure} 
\subsection{Training Objective}
The rate-based guidance is the following scaler-valued function
\begin{align*}
    h_t(x)\overset{\triangle}{=}\E_{\rvx_1\sim p_{1|t}(\rvx_1|x)}\brac{r(\rvx_1)}=\sum_{s\in\mc{S}}h_t^{d}(s,x)p_{1|t}^d(s,x),~\text{ for any }d\in\brac{\mc{D}},
\end{align*} 
which is also the minimizer of the following objective:
\begin{align}\label{eq:training objective for rate-based guidance}
    \mc{L}_{h,p}^{\text{rate}}(\theta)=\E_{t\sim \mc{U}([0,1]),\rvx_1\sim p_1(\rvx_1),\rvx_t\sim p_{t|1}(\rvx_t|\rvx_1)}\bracBig{h_t^{\theta}(\rvx_t)-r(\rvx_1)\log h_t^{\theta}(\rvx_t)}.
\end{align}
In practice, we can directly use this training objective to avoid additional summation operations.

\subsection{Sampling}
We can construct a new conditional quantity $\check{u}_t^{q,d}(z^d,\rvx_t^d|\rvx_1^d)=\frac{h_t^\theta(z)}{h_t^\theta(\rvx_t)}u_t^p(z^d,\rvx_t^d|\rvx_1^d)$ for $z^d\neq\rvx_t^d$, $z^{\bsl d}=\rvx^{\bsl d}$ and $\check{u}_t^{q,d}(\rvx_t^d,\rvx_t^d|\rvx_1^d)=-\sum_{z^d\neq\rvx_t^d}\check{u}_t^{q,d}(z^d,\rvx_t^d|\rvx_1^d)$, and then use it instead of $u_t^{q,d}$ in \eqref{eq:valid sampling}. The sampling algorithm can be found in  \cref{alg:sampling with rate-based guidance}, and the overall framework is illustrated in \cref{fig:sampling_framework_rate}.

\begin{algorithm}[htbp]
\caption{Sampling with Rate-Based Guidance}
\label{alg:sampling with rate-based guidance}
\begin{algorithmic}[1]
\Require pretrained posterior $p_{1|t}$, conditional transition rate $u_t^p(z,x|x_1)$, initial value $x_0$, rate-based guidance $h_t^\theta$, step size $h$
\State $t \gets 0$
\State $\rvx_t \gets x_0$
\While{$t+h < 1$}
  \For{$d = 1,\dotsc,\mc{D}$} \Comment{\textcolor{brown}{in parallel}}
    \State Sample $\rvx^{d}_{1} \sim p^{d}_{1\mid t}\!\left(\,\cdot \mid \rvx_t\right)$
    \State \textcolor{orange}{Calculate $\hat{u}_t^{d,\theta}(s,\rvx_t^d|\rvx_1^d)=\dfrac{h_t^\theta(\rvx_t^{1},\dots,\rvx_t^{d-1},s,\rvx_t^{d+1},\dots,\rvx_t^\mc{D})}{h_t^\theta(\rvx_t)}u_t^{p,d}(s,\rvx_t^d|\rvx_1^d), s\neq \rvx_t^d$}
    \State $\lambda^{d} \gets \sum_{s\neq \rvx_t^d}\hat{u}_t^{d,\theta}(s,\rvx_t^d|\rvx_1^d)$
    \State Sample $Z^{d}_{\text{jump}} \sim U[0,1]$
    \If{$Z^{d}_{\text{jump}} \le 1 - e^{-h\lambda^{d}}$}
      \State Sample $\rvx^{d}_{t} \sim
      \dfrac{\hat{u}_t^{d,\theta}(\cdot,\rvx^{d}_{t}| \rvx^{d}_{1})}{\lambda^{d}}
      \left(1-\delta_{\rvx^{d}_{t}}(\cdot)\right)$
    \EndIf
  \EndFor
  \State $t \gets t + h$
\EndWhile
\State $t \gets t - h$
\For{$d = 1,\dotsc,\mc{D}$} \Comment{\textcolor{brown}{in parallel}}
    \State Sample $\rvx^{d}_{1} \sim p^{d,\theta}_{1\mid t}\!\left(\,\cdot \mid \rvx_t\right)$
\EndFor
\State \Return $\rvx_1$
\end{algorithmic}
\end{algorithm}
\section{Application to the RL Stage in RLHF}
\label{ap:rlhf}
\subsection{Background and Motivation}
In the diffusion language models \citep[e.g. LLaDa,][]{nie2025large} and flow-based language models \citep[e.g. Fudoki,][]{wang2025fudoki}, we can obtain the pretrained model. Recently, existing works have developed post-training methods to improve preference through DPO \citep{rafailov2023direct} or RLHF \citep{ouyang2022training}. The main challenge is that the joint posterior predictor cannot be easily decoupled like AR models. To overcome this issue, some approximation strategies have been proposed for estimating log probability, such as empirical ELBO approximation \citep{zhu2025llada} and mean-field approximation \citep{zhao2025d1,yang2025mmada}. However, these methods might introduce large approximation errors when the length of the sentence is large. Fortunately, our discrete guidance can provide a novel framework for alignment of diffusion and flow-based language models without approximation.


\subsection{Roadmap of Alignment for Flow-based Language Models}

\noindent {\bf Designing conditional path and rate.} Similar to \cite{wang2025fudoki}, in practice, we take metric-induced probability path with kinetic-optimal conditional transition rate \cite{shaul2024flow}:
\begin{equation}\label{eq:metric-induced path and rate}
    \begin{aligned}
    p_{t|1}(\rvo_t^d|\rvo_1^d)=&~q_{t|1}(\rvo_t^d|\rvo_1^d)=\text{softmax}(-\beta_t\dd(\rvo_t^d,\rvo_1^d));\\
    u_t^{p,d}(z^d,x^d|\rvo_1^d)=&~u_t^{q,d}(z^d,x^d|\rvo_1^d)=p_{t|1}(x|\rvo_1^d)\dot{\beta}_t\brac{\dd(x^d,\rvo_1^d)-\dd(z^d,\rvo_1^d)}_+ \text{ ~for~ }z^d\neq x^d,
\end{aligned}
\end{equation}

where $\beta_t$ is an increasing function of $t$ with $\beta_0=0,\beta_1=\infty$, and $\dd(\cdot,\cdot)$ is a metric.

\noindent {\bf Training guidance model.} We can simply extend the unconditional training objective in the previous section to the following conditional version:
\begin{align}\label{eq_for_RLHF}
        &~\mc{L}_{h,\pi_{ref}}(\theta)+\lambda\mc{L}_{h,\pi^*}(\theta) \notag\\
        =&~\E_{t\sim \mc{U}([0,1]),\rvc\sim p_\rvc,\rvo_1|\rvc\sim\pi_{ref},\rvo_t|\rvo_1\sim p_{t|1}}\bracBig{\sum_{d=1}^\mc{D}h_t^{d,\theta}(\rvo_1^d,\rvo_t,\rvc)-\exp\parenBig{\frac{\mc{R}(\rvc,\rvo_1)}{\tau}}\log h_t^{d,\theta}(\rvo_1^d,\rvo_t,\rvc)}\notag\\
        &~+\lambda\E_{t\sim \mc{U}([0,1]),\rvc\sim p_\rvc,\rvo_1|\rvc\sim\pi^*,\rvo_t|\rvo_1\sim q_{t|1}}\bracBig{\sum_{d=1}^\mc{D}\log \parenBig{\sum_{s\in\mc{S}}h^{d,\theta}_t(s,\rvo_t,\rvc)p_{1|t}^d(s|\rvc,\rvo_t)}-\log h^{d,\theta}_t(\rvo_1^d,\rvo_t,\rvc)}.
\end{align}
\noindent {\bf Sampling.} Given the prompt $\rvc$, at the current time $t$, for each $d\in\brac{\mc{D}}$, we first sample $\rvo_1^d$ from
\begin{align*}
    q^{d,\theta}_{1|t}(\rvo_1^d|\rvc,\rvo_t)\propto h^{d,\theta}_t(\rvo_1^d,\rvo_t,\rvc)p^{d}_{1|t}(\rvo_1^d|\rvc,\rvo_t),
\end{align*}
and then sample $\rvo_{t+h}^d$ similar to \eqref{eq:valid sampling}.

\section{Additional Results}

\subsection{Implementation Details of Energy-Guided Sampling}\label{ap:sim_detail}
For a fair comparison, we train the guidance models (conditional expectation) using the Bregman divergence (\Eqref{eq:training objective} and \Eqref{eq:training objective for rate-based guidance}) without regularization. For the rate-based guidance and predictor guidance, we take the initial distribution $p_0=p_0^{(\gamma)}=\delta_{\mask}$ for efficient sampling, where $\mask$ is the mask state. For sampling with the posterior-based guidance, we consider both masked and uniform initial distributions. We used the mixture probability path and the associated transition rate (\Eqref{eq:mixture path}) with the cosine time schedule $\kappa_t=\cos^2\brac{\frac{\pi}{2}(1-t)}$ in our 2-D experiments, which is kinetic optimal as mentioned in \cite{shaul2024flow}.

Our pretrained model and guidance models are SiLU networks with 3 hidden layers with dimension 256. For training, we use 100,000 datapoints from source data distribution. The optimizer is Adam with learning rate 1e-4.

{\subsection{Implementation Details of Multimodal Tasks}\label{sec:ap_imple_multimodal}
To match the conditional probability path used in \cite{wang2025fudoki}, we use metric-induced path with metric $d(f(x),f(z))=\norm{\tilde{x}-\tilde{z}}^4$, where $\tilde{x}, \tilde{z}$ are normalized token embeddings which are taken from the original text embedding layer of Janus-Pro-7B and the image embeddings of LlamaGen; for time schedule $\beta_t$, we set $\beta_t=3(\frac{t}{1-t})^{0.9}$ as suggested in \cite{shaul2024flow}. }

{Our guidance model is a network composed of 6 LLaDA blocks \citep{Nie2025LargeLD} with hidden dimension 768, a time embedding mapping $\R$ to a hidden dimension of 768, a token embedding mapping each token into a 768-dimensional space. }

\subsection{Effectiveness of Regularization in a 2-D Experiment}\label{ap:more_exp}
In this subsection, we examine the effectiveness of the regularization term in the training objective. Samples of the source and target distributions are shown in \cref{fig:source_target_samples}, and the two distributions differ substantially. We first pretrain a posterior model on the source data, subsequently pretrain a density ratio model by minimizing the following training objective:
\begin{align*}
    \mc{L}_r(\theta)=-\E_{\rvx_1\sim p_1(\rvx_1)}\log\parenBig{\frac{1}{r^\theta(\rvx_1)+1}}-\E_{\rvx_1\sim q_1(\rvx_1)}\log\parenBig{\frac{r^\theta(\rvx_1)}{r^\theta(\rvx_1)+1}}.
\end{align*} We then train posterior-based guidance with the proposed training objective $\mc{L}_{h,p}(\theta)+\lambda\mc{L}_{h,q}(\theta)$, sweeping $\lambda$ from $0$ to $1$ in steps of $0.2$. Here, we take the uniform initial distribution. Finally, we generate samples using \cref{alg:sampling with posterior-based guidance}. As presented in \cref{fig:2-d hyperparameter}, the posterior-based guidance trained with a large hyperparameter offers a distribution closer to the target distribution.

\begin{figure}[htbp]
  \centering
  \begin{minipage}[t]{0.4\linewidth}
  \centering
\includegraphics[width=\linewidth]{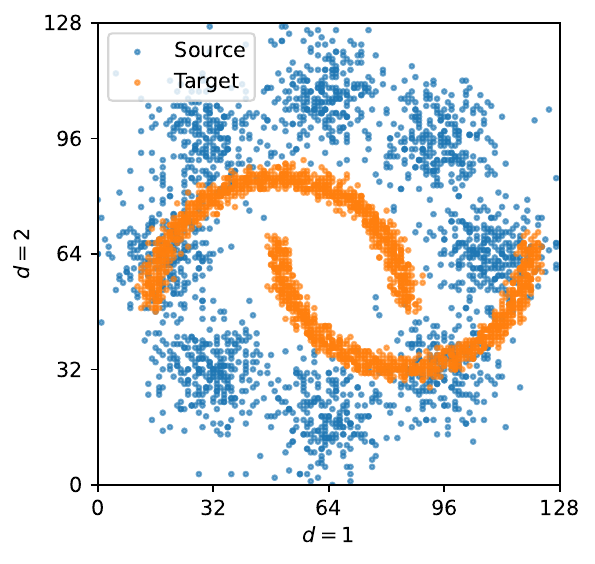}
\end{minipage}
\caption{\small Some samples of source and target distributions.}
  \label{fig:source_target_samples}
\end{figure}

\begin{figure}[htbp]
\centering
\begin{minipage}[t]{0.95\linewidth}
  \centering
  \includegraphics[width=\linewidth]{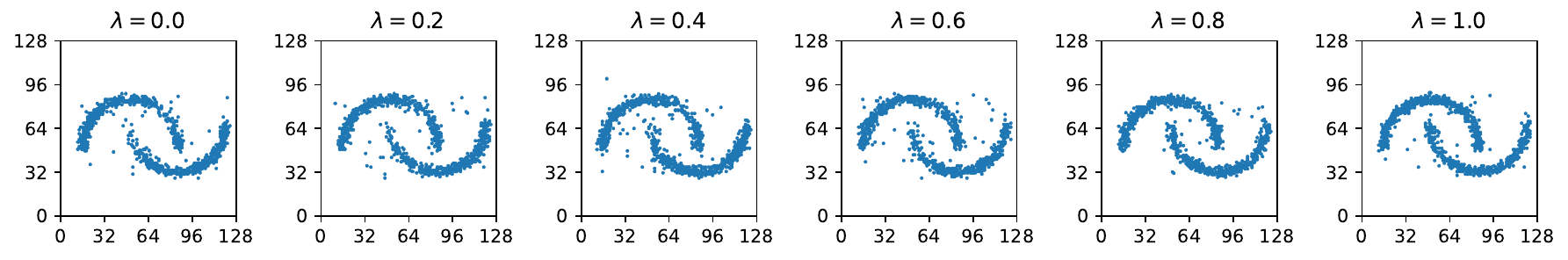}\\[-2pt]
  {\scriptsize }
\end{minipage}

\vspace{3pt}

\begin{minipage}[t]{0.95\linewidth}
  \centering
  \includegraphics[width=\linewidth]{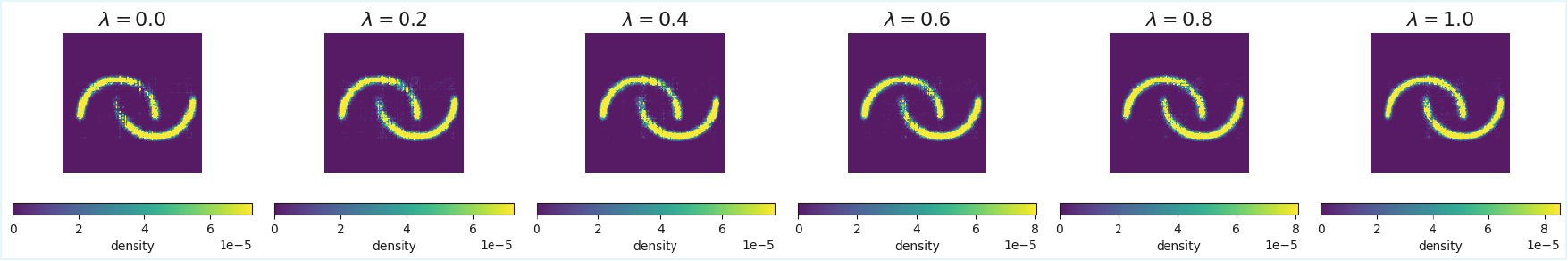}\\[-2pt]
  {\scriptsize }
\end{minipage}

\caption{\small First row: samples generated by posterior-based guidance (64 sampling steps) with different hyperparameters in the training objective. The initial distribution is uniform. Second row: density estimation of the target distribution using posterior-based guidance with different hyperparameters.}
\label{fig:2-d hyperparameter}
\end{figure}


\subsection{Additional 2-D Results}
Additional results on 2-D simulations are provided in \cref{fig:toy main}, \cref{ap:image_moons}, \cref{ap:8gaussians}, \cref{ap:2spirals}, \cref{ap:checkerboard}, and \cref{ap:swissroll}.

\begin{figure}[htbp]
\centering

\begin{minipage}[t]{0.49\linewidth}
  \centering
  \includegraphics[width=\linewidth]{iclr2026/figure_toy/truth_rings.pdf}\\[-2pt]
  {\scriptsize (a) \textbf{Ground Truth (rings)}}
\end{minipage}

\vspace{3pt}

\begin{minipage}[t]{0.49\linewidth}
  \centering
  \includegraphics[width=\linewidth]{iclr2026/figure_toy/rate_nisonoff_rings.pdf}\\[-2pt]
  {\scriptsize (b) Rate-Based (Masked, \cite{nisonoff2024unlocking})}
\end{minipage}\hfill
\begin{minipage}[t]{0.49\linewidth}
  \centering
  \includegraphics[width=\linewidth]{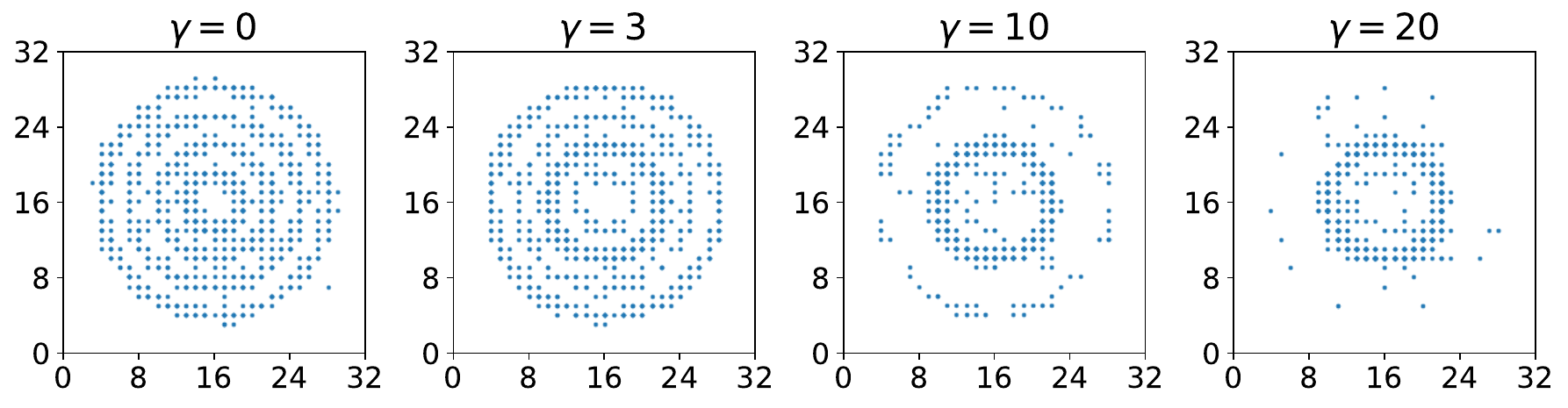}\\[-2pt]
  {\scriptsize (c) Rate-Based (Masked, Ours)}
\end{minipage}

\vspace{3pt}

\begin{minipage}[t]{0.49\linewidth}
  \centering
  \includegraphics[width=\linewidth]{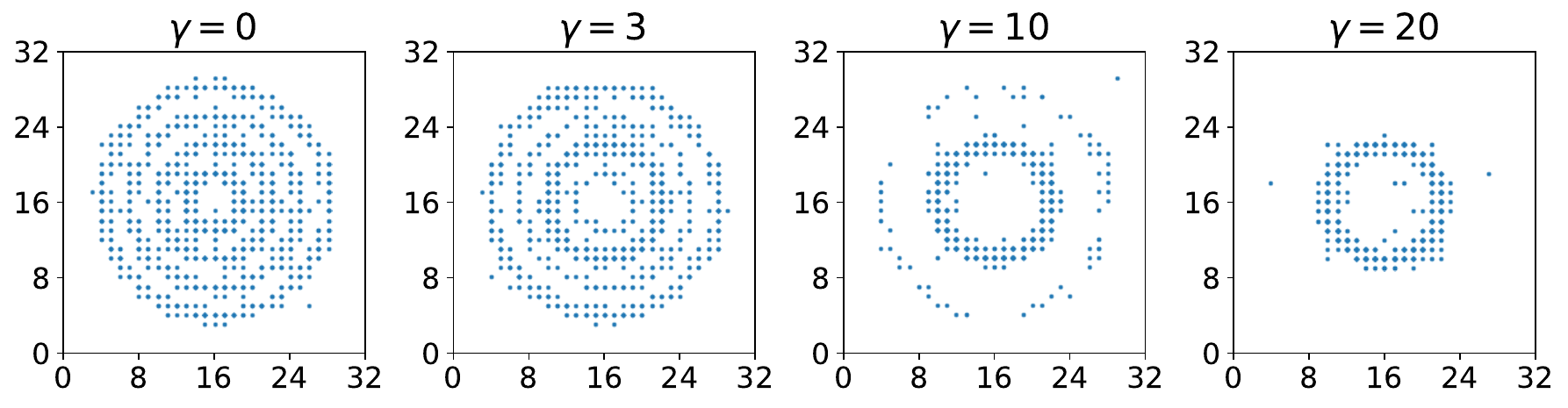}\\[-2pt]
  {\scriptsize (d) Posterior-Based (Masked, Ours)}
\end{minipage}\hfill
\begin{minipage}[t]{0.49\linewidth}
  \centering
  \includegraphics[width=\linewidth]{iclr2026/figure_toy/posterior_uniform_rings.pdf}\\[-2pt]
  {\scriptsize (e) Posterior-Based (Uniform, Ours)}
\end{minipage}

\caption{\small Comparison of sampling results with different guidance schemes (64 sampling steps) in 2-D experiments. (a) The samples of the guided distribution $p_1^{(\gamma)}$ with different guidance strength $\gamma$; (b) the data sampled by the predictor guidance \citep{nisonoff2024unlocking} with masked initial distribution; (c) the data sampled by the proposed rate-based guidance with masked initial distribution; (d) the data sampled by the proposed posterior-based guidance with masked initial distribution; (e) the data sampled by the proposed posterior-based guidance with uniform initial distribution.}
\label{fig:toy main}
\end{figure}

\begin{figure}[htbp]
\centering

\begin{minipage}[t]{0.49\linewidth}
  \centering
  \includegraphics[width=\linewidth]{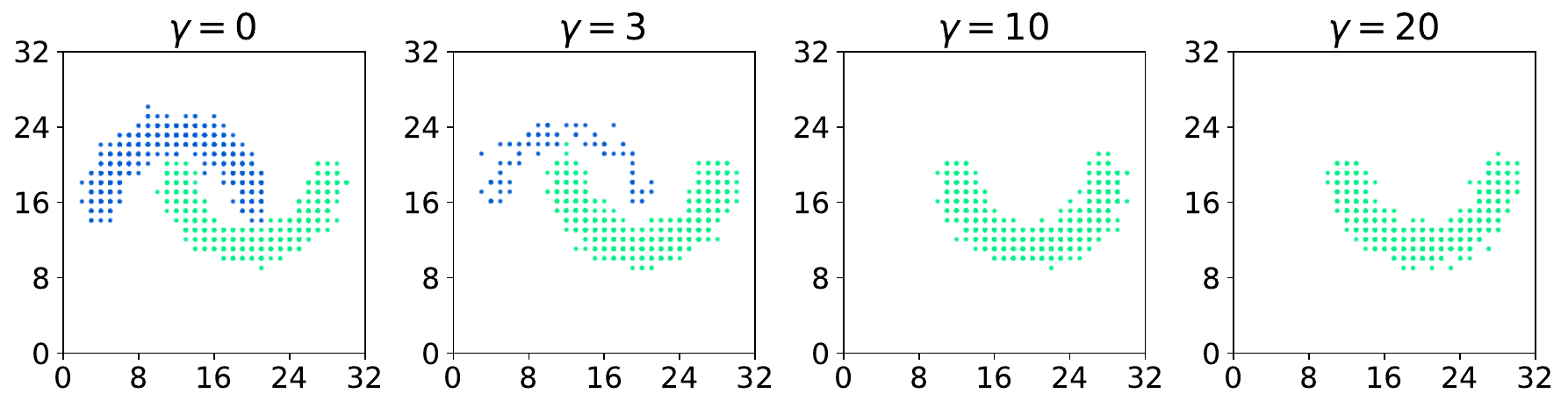}\\[-2pt]
  {\scriptsize \textbf{Ground Truth (moons)}}
\end{minipage}

\vspace{3pt}

\begin{minipage}[t]{0.49\linewidth}
  \centering
  \includegraphics[width=\linewidth]{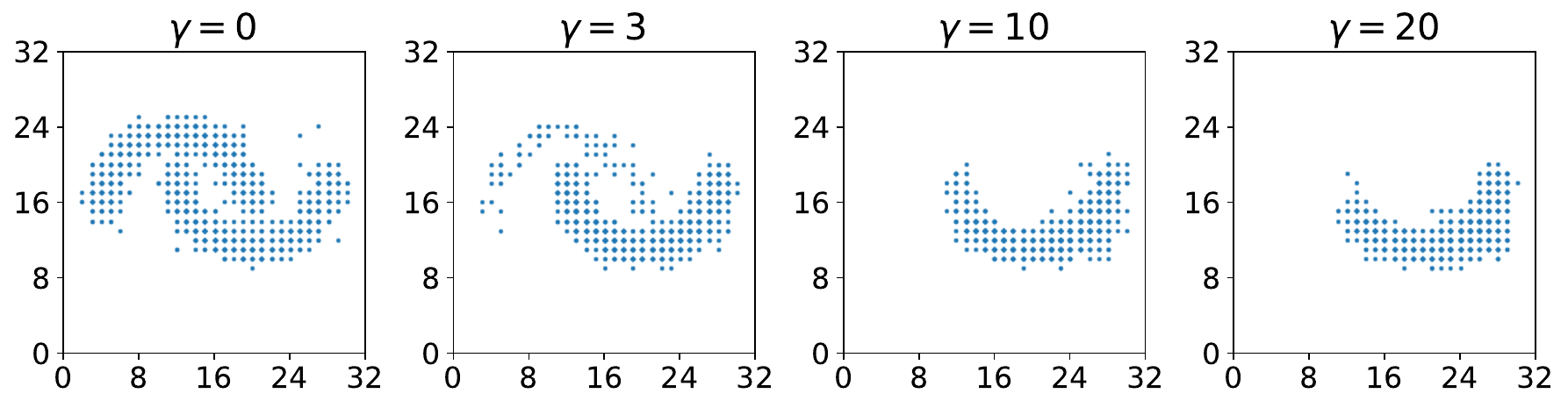}\\[-2pt]
  {\scriptsize Rate-Based (Masked, \cite{nisonoff2024unlocking})}
\end{minipage}\hfill
\begin{minipage}[t]{0.49\linewidth}
  \centering
  \includegraphics[width=\linewidth]{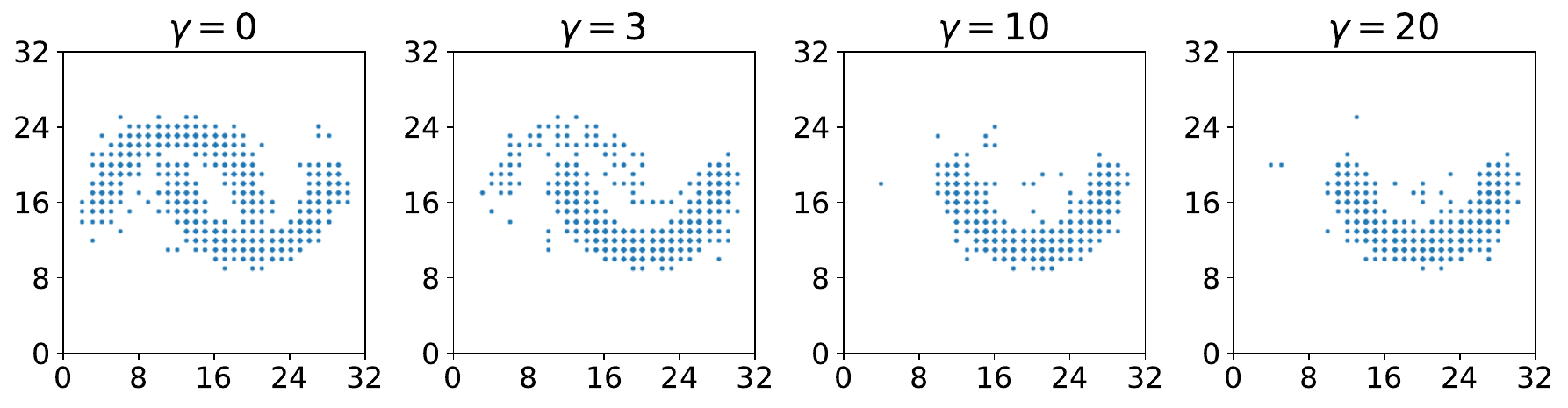}\\[-2pt]
  {\scriptsize Rate-Based (Masked, Ours)}
\end{minipage}

\vspace{3pt}

\begin{minipage}[t]{0.49\linewidth}
  \centering
  \includegraphics[width=\linewidth]{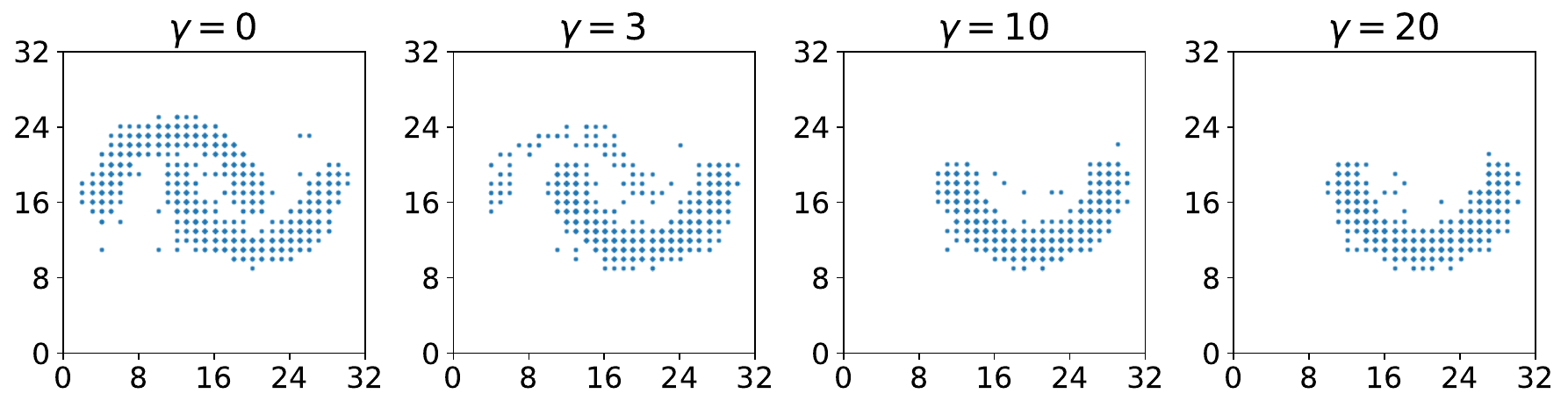}\\[-2pt]
  {\scriptsize Posterior-Based (Masked, Ours)}
\end{minipage}\hfill
\begin{minipage}[t]{0.49\linewidth}
  \centering
  \includegraphics[width=\linewidth]{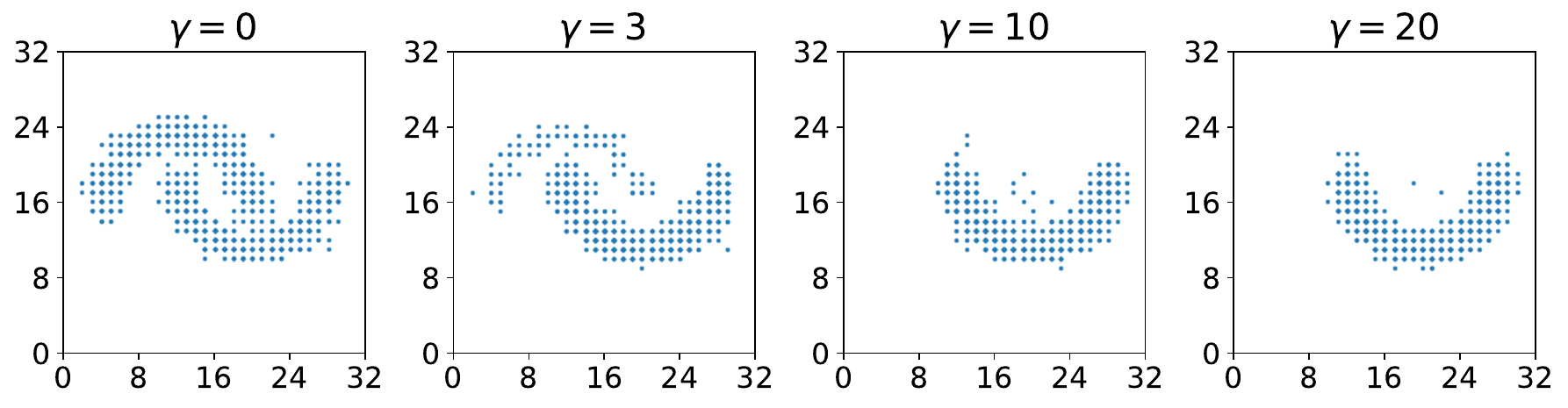}\\[-2pt]
  {\scriptsize Posterior-Based (Uniform, Ours)}
\end{minipage}
\caption{\small Comparison of sampling results with different guidance schemes in moons.}
\label{ap:image_moons}
\end{figure}

\begin{figure}[htbp]
\centering
\begin{minipage}[t]{0.49\linewidth}
  \centering
  \includegraphics[width=\linewidth]{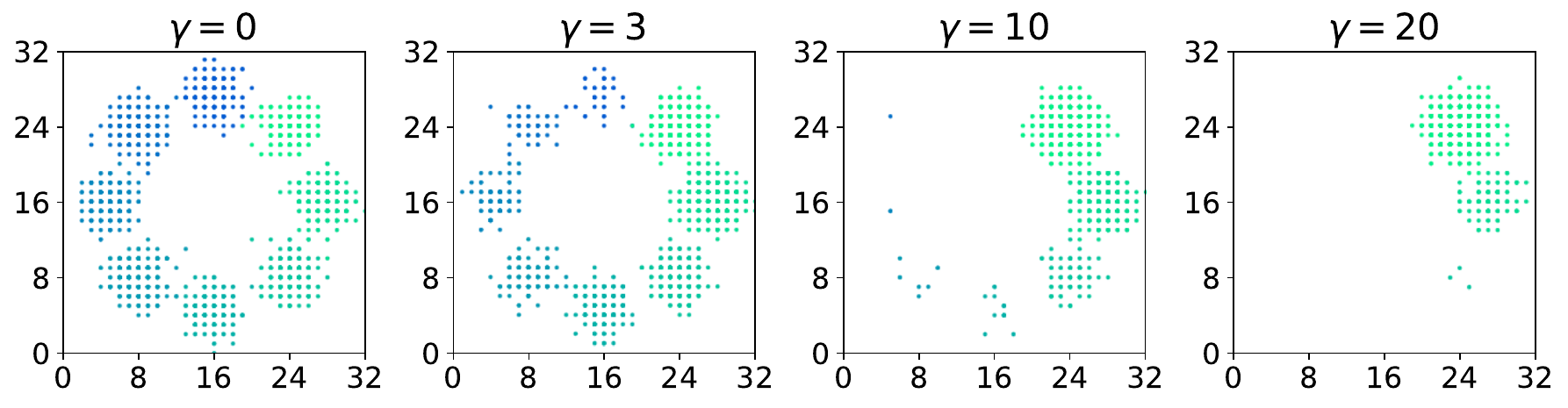}\\[-2pt]
  {\scriptsize \textbf{Ground Truth (8gaussians)}}
\end{minipage}

\vspace{3pt}

\begin{minipage}[t]{0.49\linewidth}
  \centering
  \includegraphics[width=\linewidth]{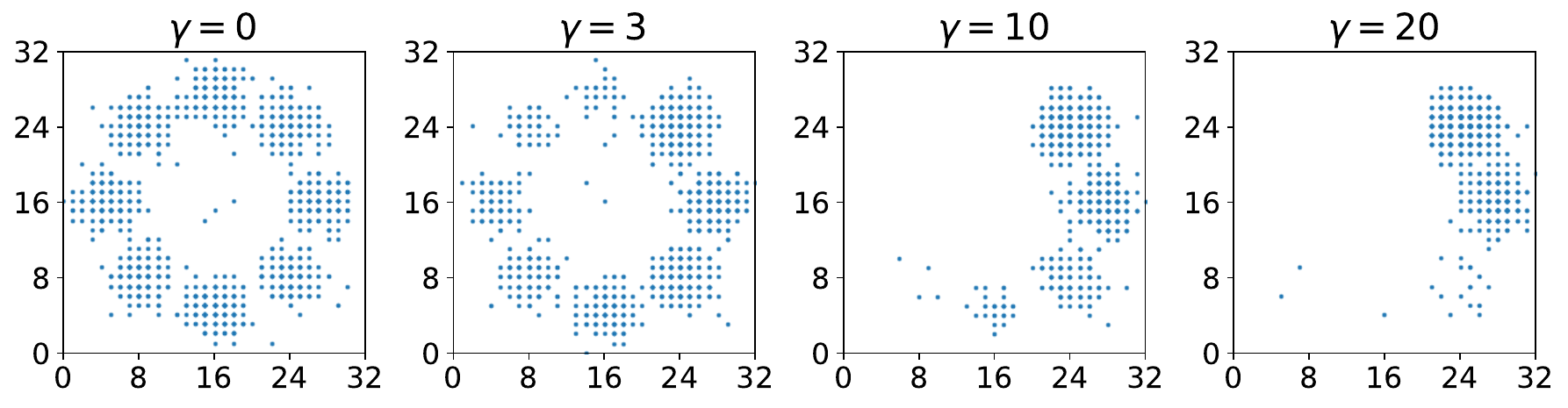}\\[-2pt]
  {\scriptsize Rate-Based (Masked, \cite{nisonoff2024unlocking})}
\end{minipage}\hfill
\begin{minipage}[t]{0.49\linewidth}
  \centering
  \includegraphics[width=\linewidth]{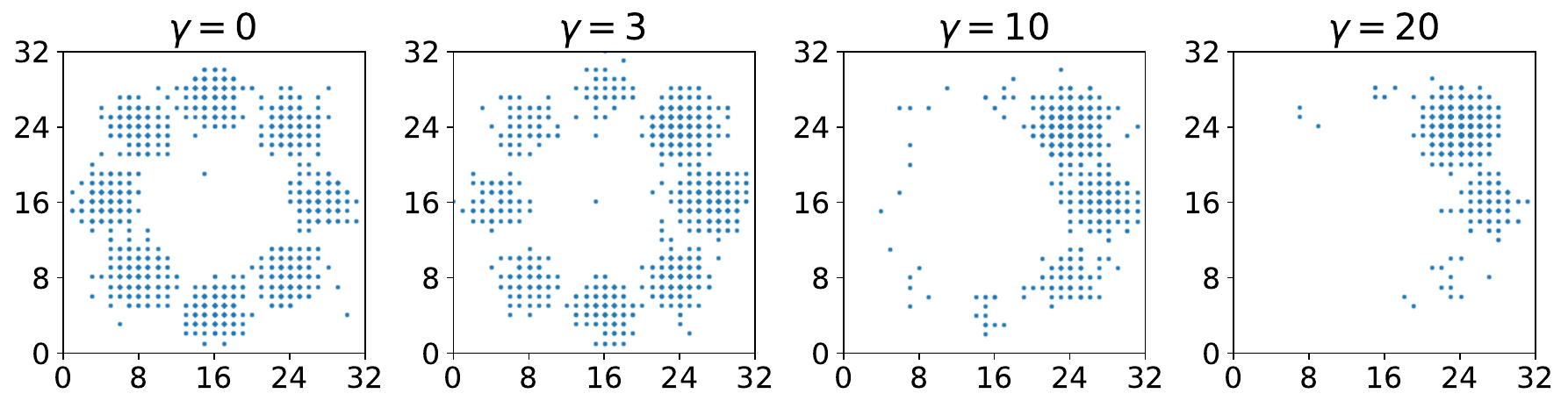}\\[-2pt]
  {\scriptsize Rate-Based (Masked, Ours)}
\end{minipage}

\vspace{3pt}

\begin{minipage}[t]{0.49\linewidth}
  \centering
  \includegraphics[width=\linewidth]{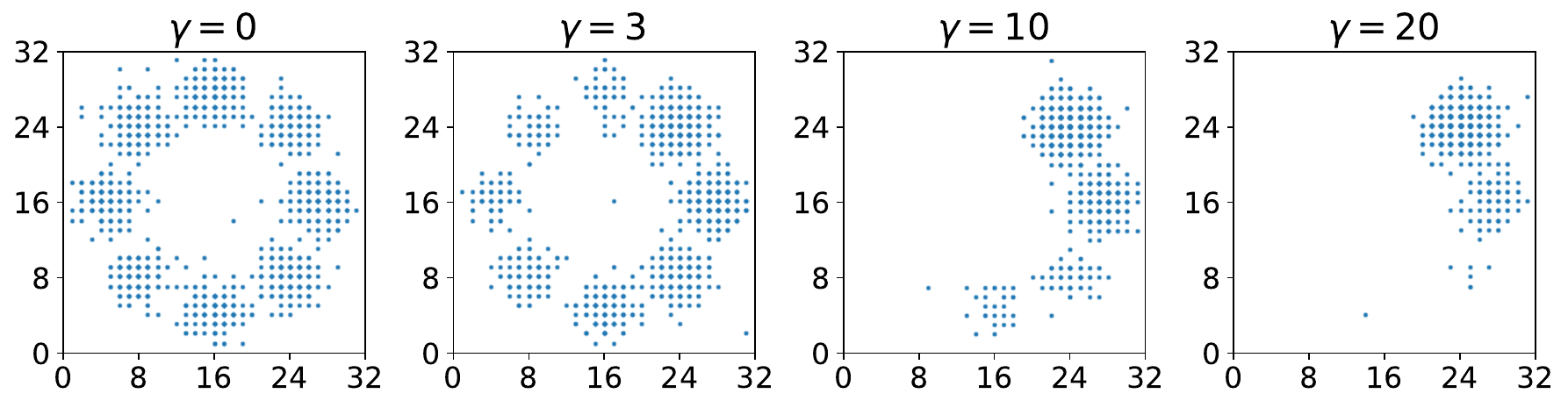}\\[-2pt]
  {\scriptsize Posterior-Based (Masked, Ours)}
\end{minipage}\hfill
\begin{minipage}[t]{0.49\linewidth}
  \centering
  \includegraphics[width=\linewidth]{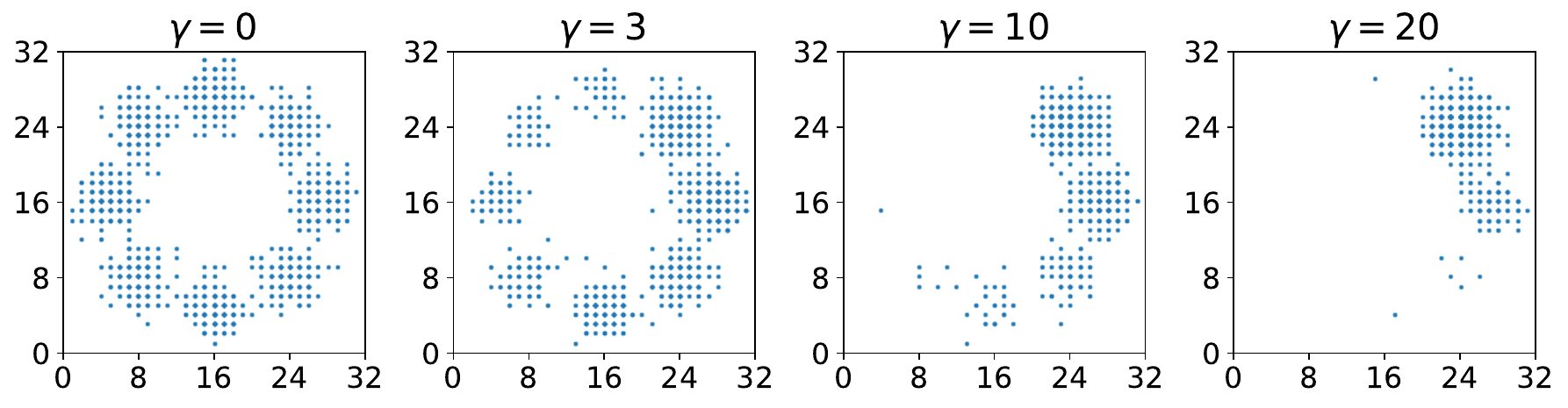}\\[-2pt]
  {\scriptsize Posterior-Based (Uniform, Ours)}
\end{minipage}

\caption{\small Comparison of sampling results with different guidance schemes in 8gaussians.}
\label{ap:8gaussians}
\end{figure}

\begin{figure}[htbp]
\centering

\begin{minipage}[t]{0.49\linewidth}
  \centering
  \includegraphics[width=\linewidth]{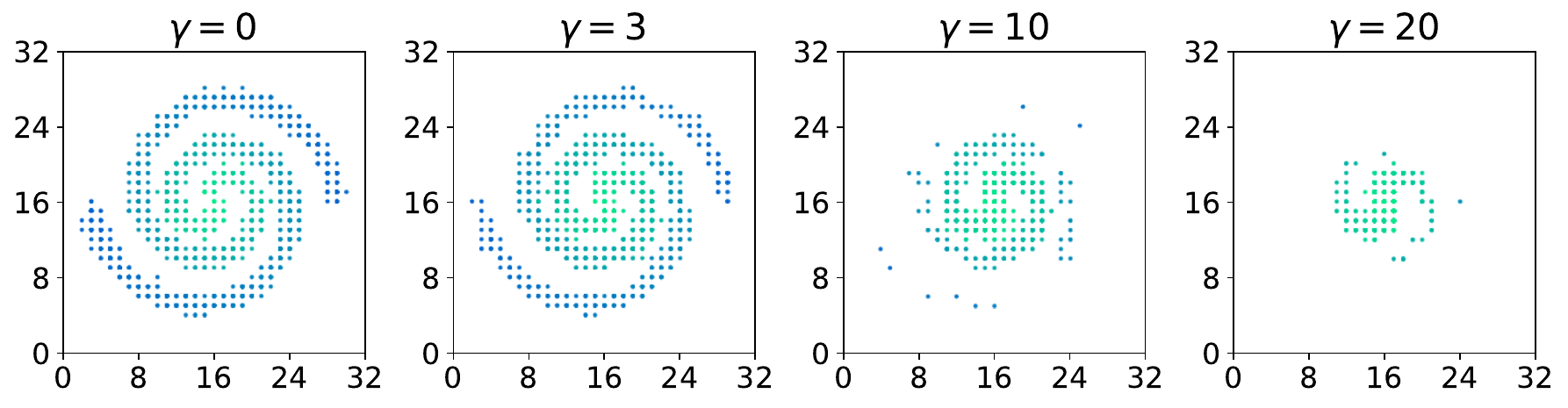}\\[-2pt]
  {\scriptsize \textbf{Ground Truth (2spirals)}}
\end{minipage}

\vspace{3pt}

\begin{minipage}[t]{0.49\linewidth}
  \centering
  \includegraphics[width=\linewidth]{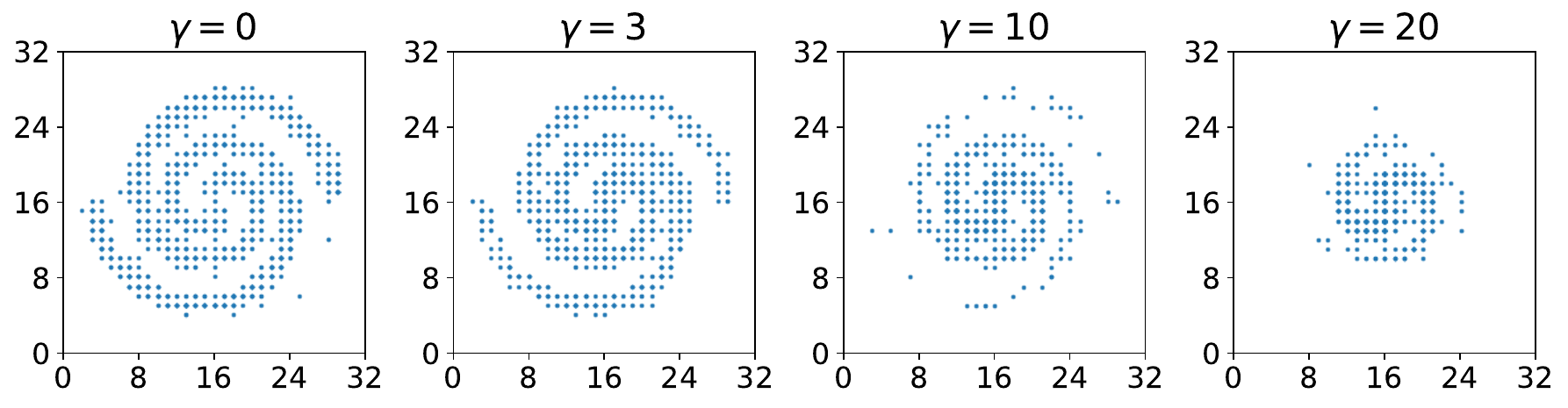}\\[-2pt]
  {\scriptsize Rate-Based (Masked, \cite{nisonoff2024unlocking})}
\end{minipage}\hfill
\begin{minipage}[t]{0.49\linewidth}
  \centering
  \includegraphics[width=\linewidth]{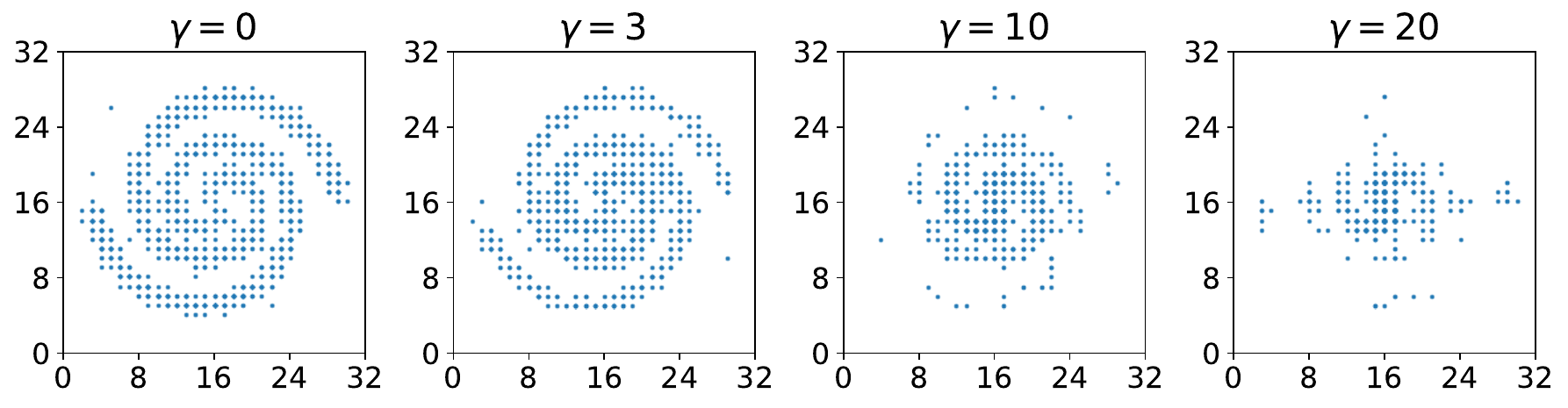}\\[-2pt]
  {\scriptsize Rate-Based (Masked, Ours)}
\end{minipage}

\vspace{3pt}

\begin{minipage}[t]{0.49\linewidth}
  \centering
  \includegraphics[width=\linewidth]{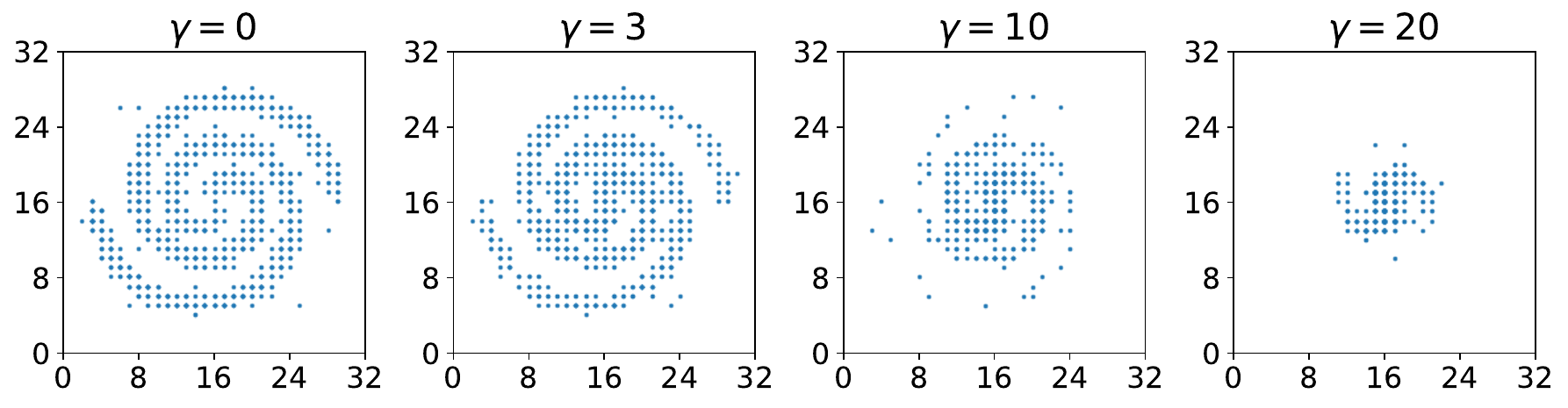}\\[-2pt]
  {\scriptsize Posterior-Based (Masked, Ours)}
\end{minipage}\hfill
\begin{minipage}[t]{0.49\linewidth}
  \centering
  \includegraphics[width=\linewidth]{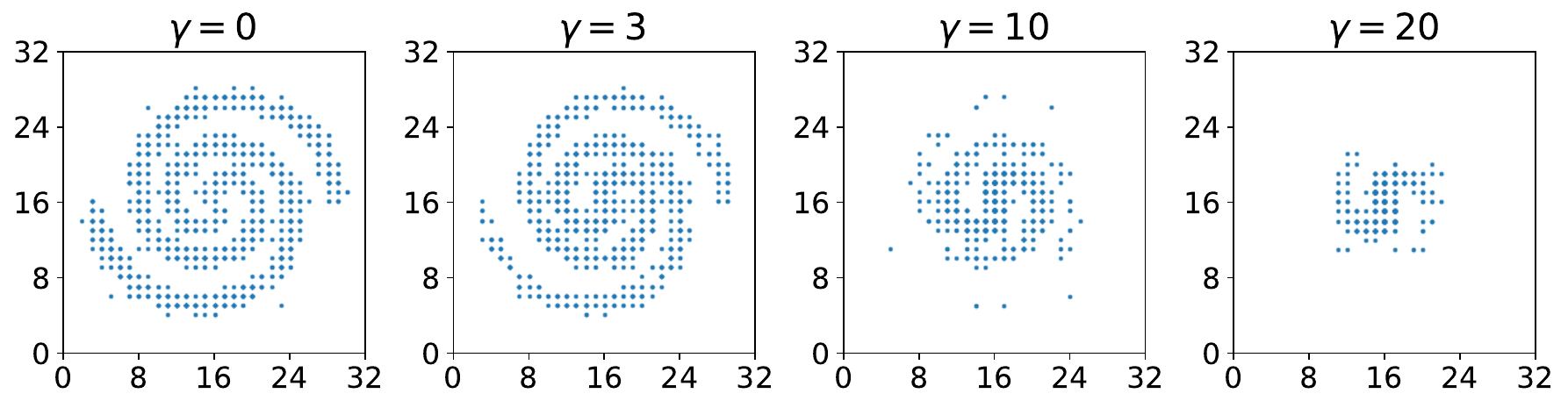}\\[-2pt]
  {\scriptsize Posterior-Based (Uniform, Ours)}
\end{minipage}

\caption{\small Comparison of sampling results with different guidance schemes in 2spirals.}
\label{ap:2spirals}
\end{figure}

\begin{figure}[htbp]
\centering
\begin{minipage}[t]{0.49\linewidth}
  \centering
  \includegraphics[width=\linewidth]{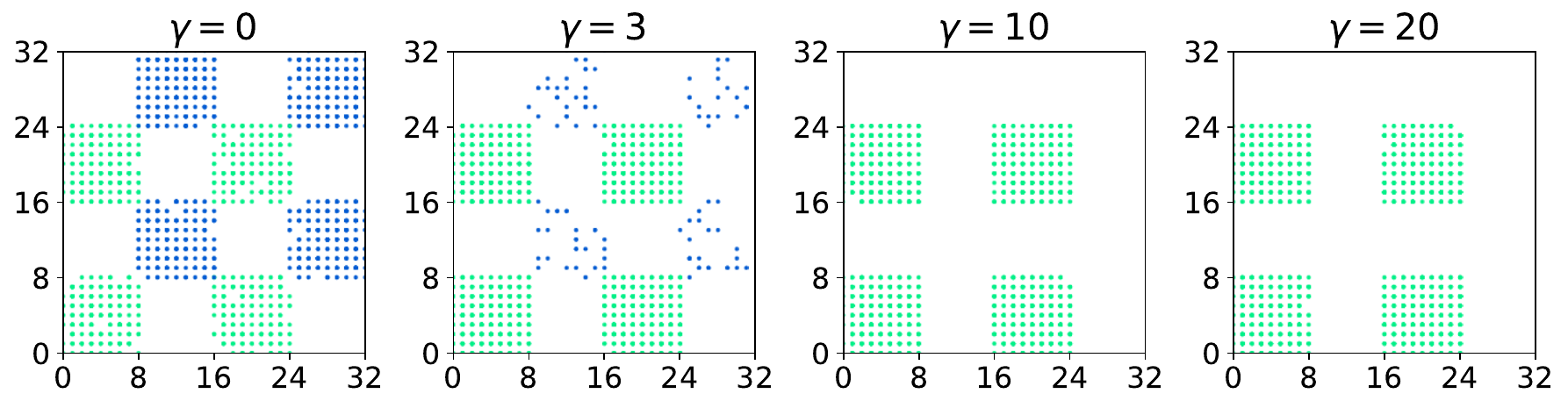}\\[-2pt]
  {\scriptsize \textbf{Ground Truth (checkerboard)}}
\end{minipage}

\vspace{3pt}

\begin{minipage}[t]{0.49\linewidth}
  \centering
  \includegraphics[width=\linewidth]{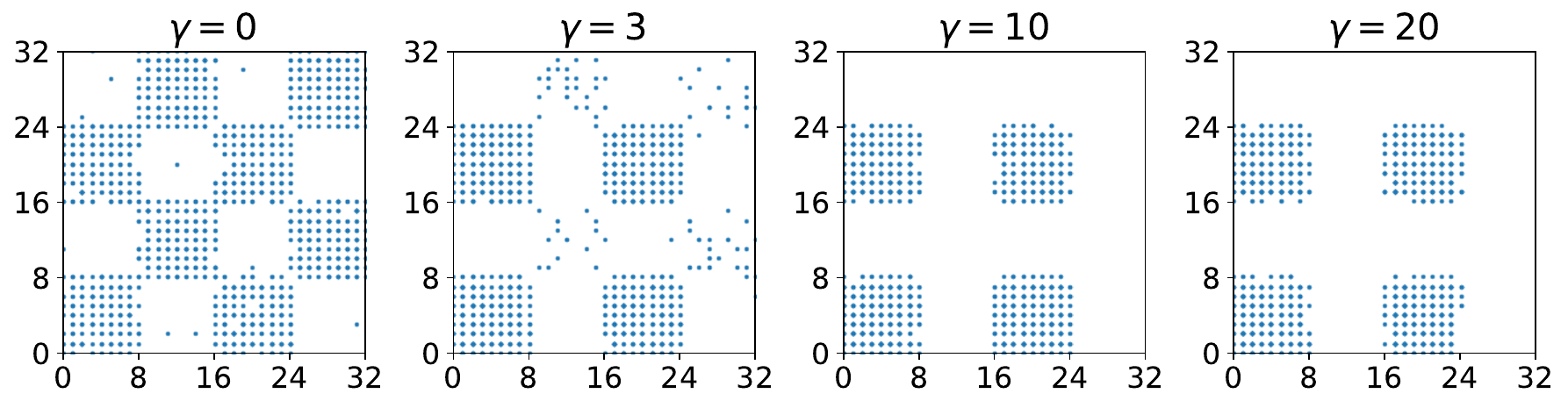}\\[-2pt]
  {\scriptsize Rate-Based (Masked, \cite{nisonoff2024unlocking})}
\end{minipage}\hfill
\begin{minipage}[t]{0.49\linewidth}
  \centering
  \includegraphics[width=\linewidth]{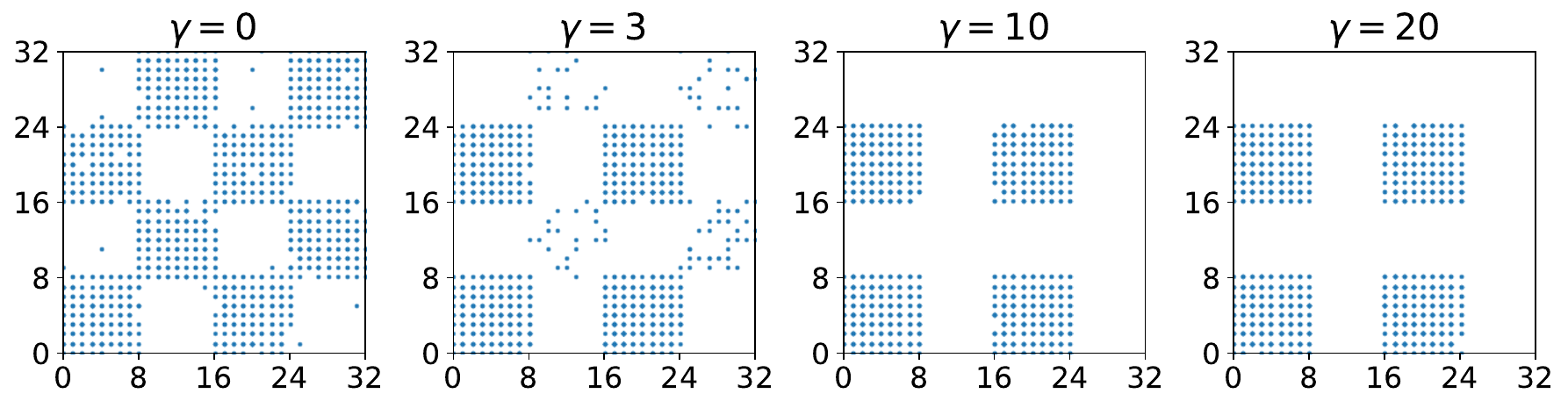}\\[-2pt]
  {\scriptsize Rate-Based (Masked, Ours)}
\end{minipage}

\vspace{3pt}

\begin{minipage}[t]{0.49\linewidth}
  \centering
  \includegraphics[width=\linewidth]{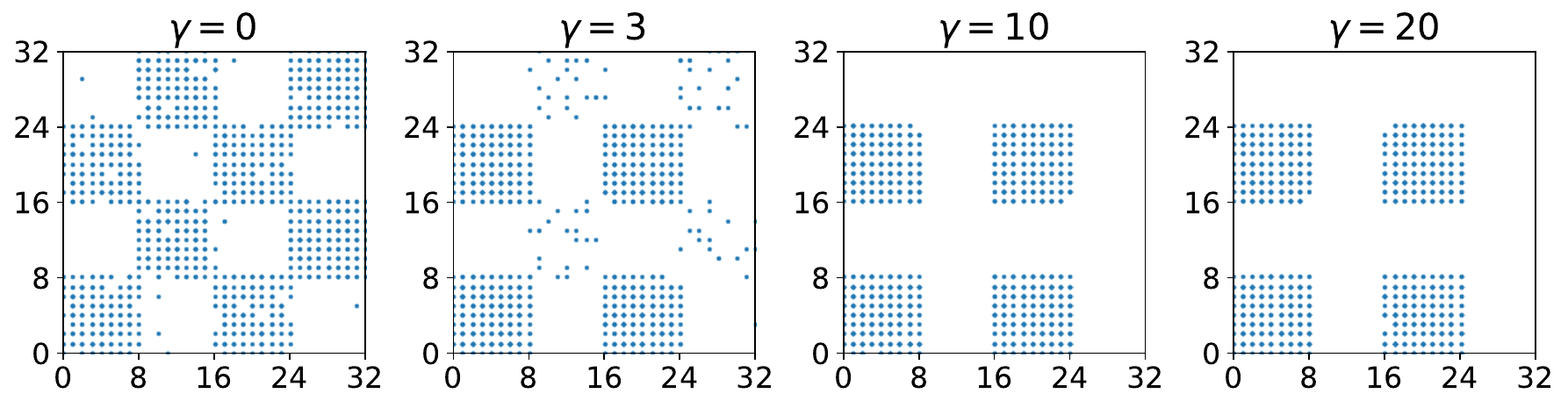}\\[-2pt]
  {\scriptsize Posterior-Based (Masked, Ours)}
\end{minipage}\hfill
\begin{minipage}[t]{0.49\linewidth}
  \centering
  \includegraphics[width=\linewidth]{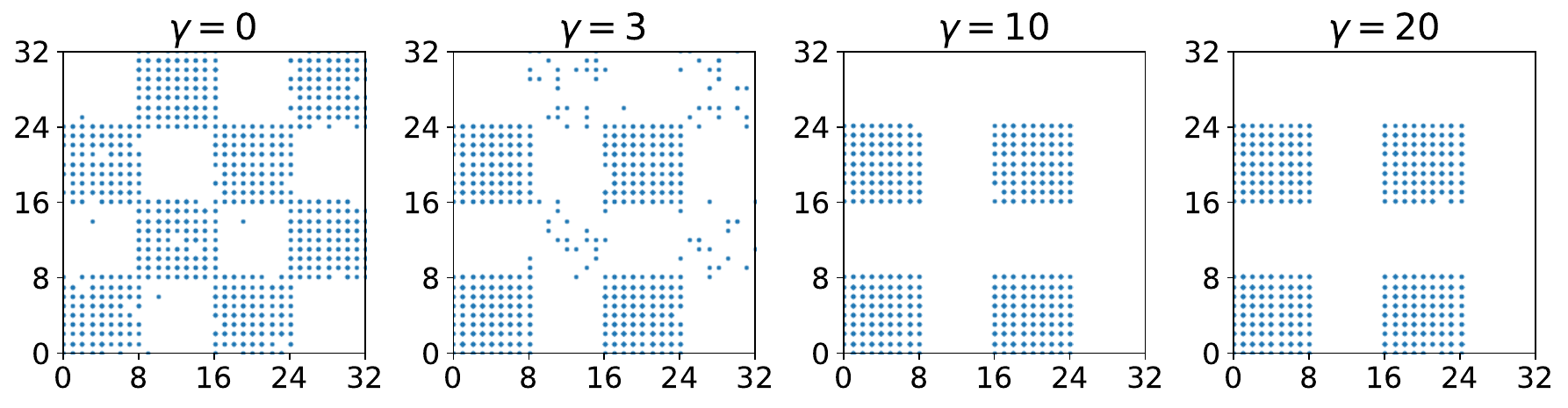}\\[-2pt]
  {\scriptsize Posterior-Based (Uniform, Ours)}
\end{minipage}

\caption{\small Comparison of sampling results with different guidance schemes in Checkerboard.}
\label{ap:checkerboard}
\end{figure}

\begin{figure}[htbp]
\centering
\begin{minipage}[t]{0.49\linewidth}
  \centering
  \includegraphics[width=\linewidth]{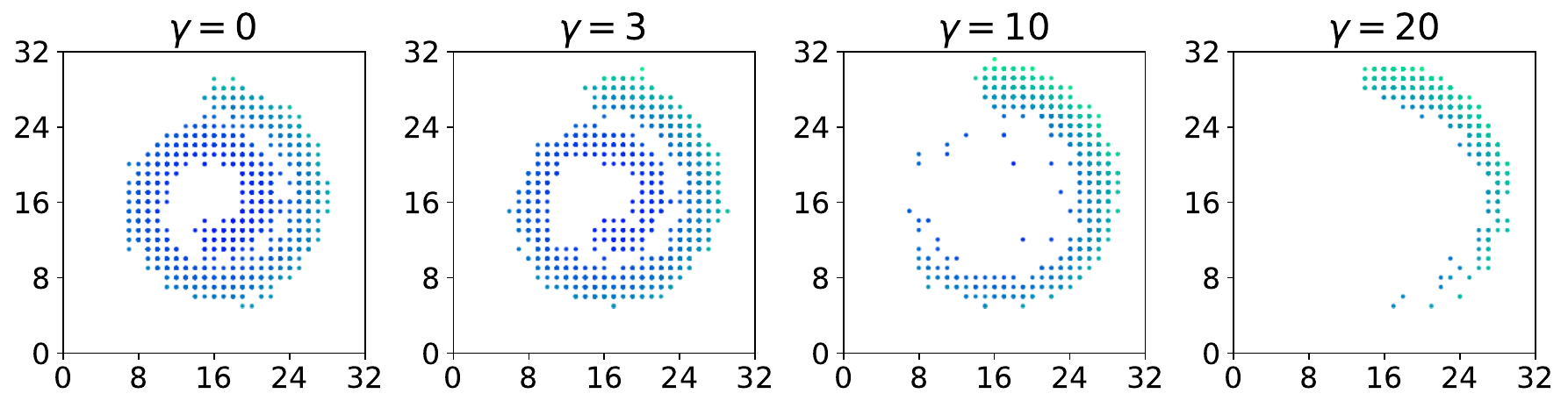}\\[-2pt]
  {\scriptsize \textbf{Ground Truth (swissroll)}}
\end{minipage}

\vspace{3pt}

\begin{minipage}[t]{0.49\linewidth}
  \centering
  \includegraphics[width=\linewidth]{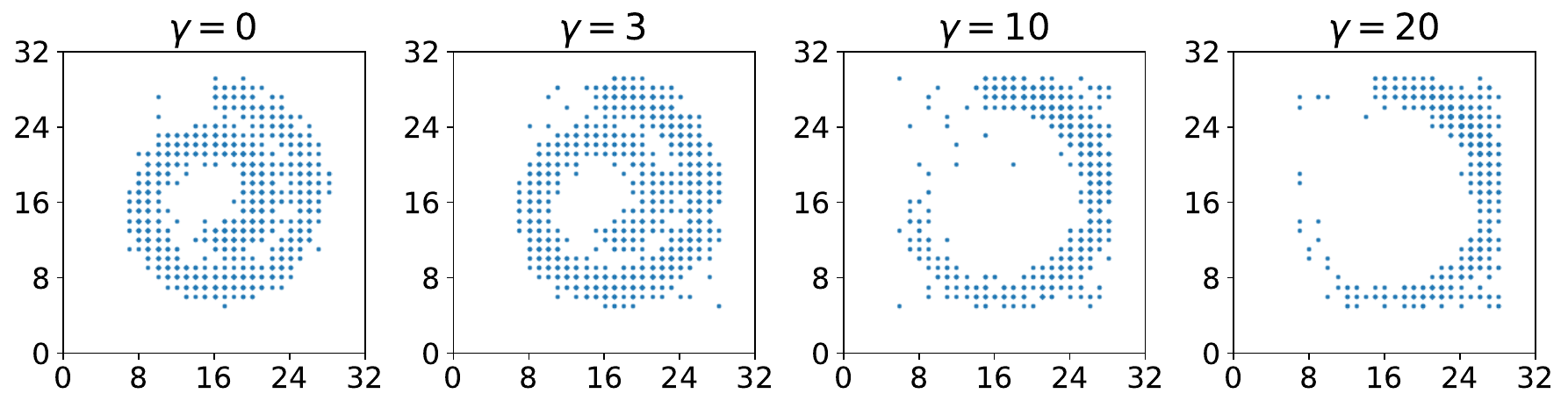}\\[-2pt]
  {\scriptsize Rate-Based (Masked, \cite{nisonoff2024unlocking})}
\end{minipage}\hfill
\begin{minipage}[t]{0.49\linewidth}
  \centering
  \includegraphics[width=\linewidth]{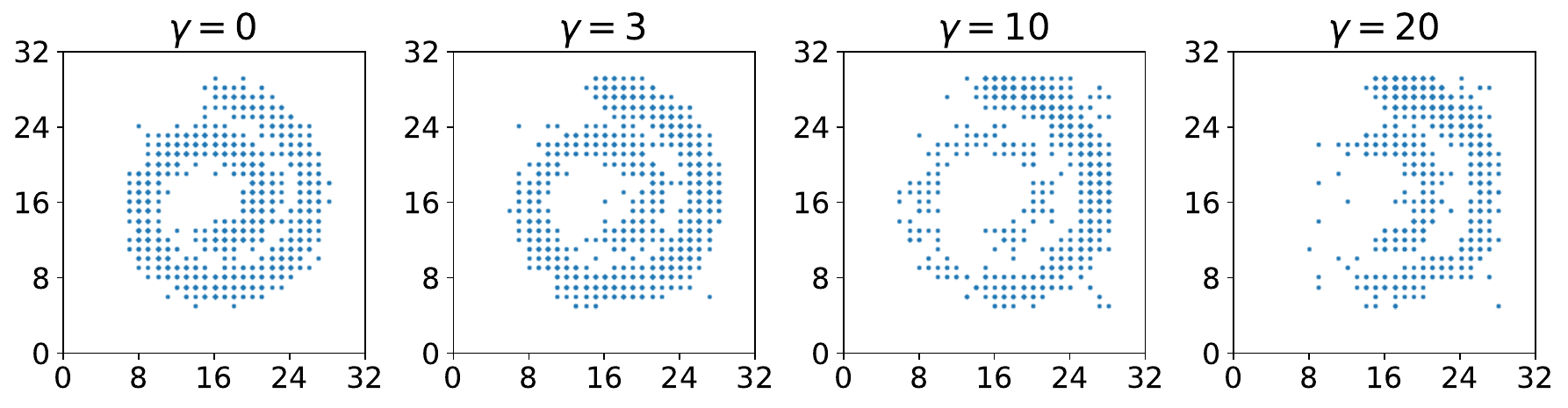}\\[-2pt]
  {\scriptsize Rate-Based (Masked, Ours)}
\end{minipage}

\vspace{3pt}

\begin{minipage}[t]{0.49\linewidth}
  \centering
  \includegraphics[width=\linewidth]{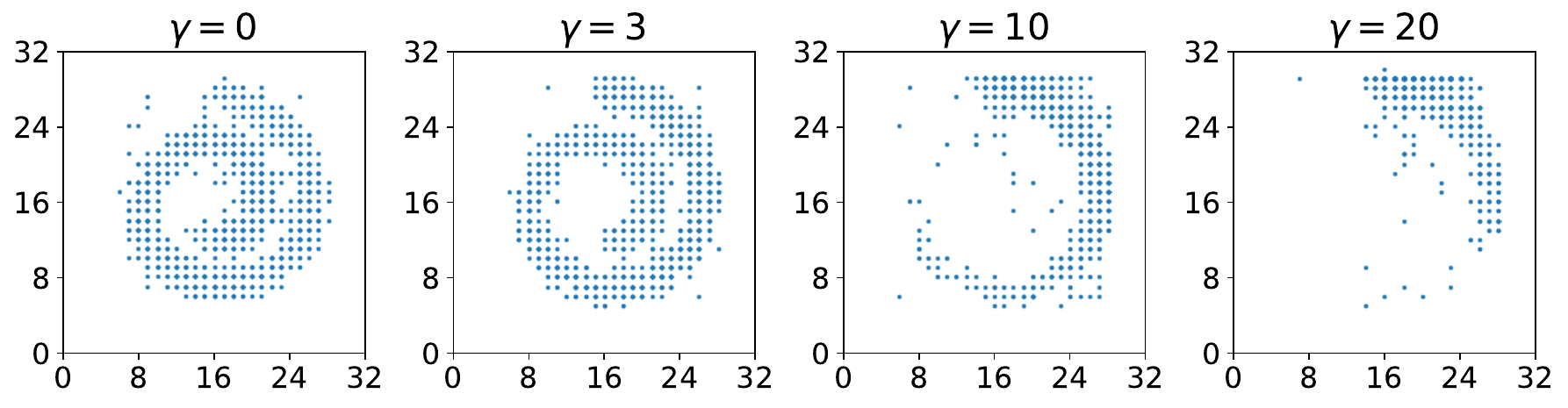}\\[-2pt]
  {\scriptsize Posterior-Based (Masked, Ours)}
\end{minipage}\hfill
\begin{minipage}[t]{0.49\linewidth}
  \centering
  \includegraphics[width=\linewidth]{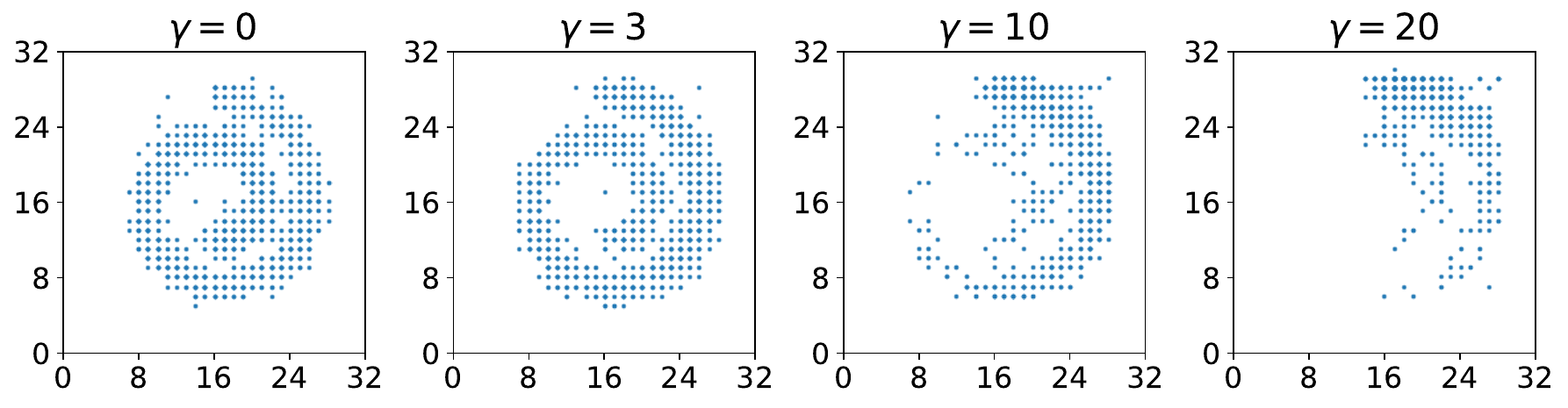}\\[-2pt]
  {\scriptsize Posterior-Based (Uniform, Ours)}
\end{minipage}

\caption{\small Comparison of sampling results with different guidance schemes in Swissroll.}
\label{ap:swissroll}
\end{figure}

\subsection{Ablation study of the regularization strength}\label{ap:abl}
In \cref{tab:abl}, we present an ablation study on the effect of the regularization strength introduced in \cref{sec:training}.
\begin{table}[htbp]\label{tab:abl}
\centering
\begin{threeparttable}
\caption{Ablation study of the regularization strength.} 
\begin{tabular}{@{}lccccccc@{}} 
\toprule
 Method & 
{\small Single Obj.} & 
{\small Two Obj.} & 
{\small Counting} & 
{\small Colors} & 
{\small Position} & 
{\small Color Attri.} & 
Overall $\uparrow$ \\
\midrule
FUDOKI   & 96.25 & 83.84 & 47.50 & 91.49 & 71.00 & 74.00 & 77.35 \\
$\eta=0.1$    & 96.25 & 90.91 & 45.00 & 91.49 & 68.00 & 71.00 & 77.11 \\
$\eta=0.5$   & 93.75 & 85.86 & 52.50 & 89.36 & 70.00 & 77.00 & 78.08 \\
$\eta=1.0$    & 92.50 & 87.88 & 52.50 & 92.55 & 67.00 & 74.00 & 77.74 \\

\bottomrule
\end{tabular}
\begin{tablenotes}
\item \textit{Note:} Results are percentages.
\end{tablenotes}
\end{threeparttable}
\end{table}

\section*{LLM Usage}
In our multimodal understanding experiments, we follow the standard practice of using LLM-as-a-judge to assess rewards and evaluations based on the question, ground-truth answer, and model response. LLMs are further used solely for polishing the writing. Importantly, no novel ideas, analyses, or discoveries are contributed by LLMs.
\end{document}